\renewcommand*{\backrefalt}[4]{%
	\ifcase #1 \footnotesize{(Not cited.)}%
	\or        \footnotesize{(Cited on page~#2.)}%
	\else      \footnotesize{(Cited on pages~#2.)}%
	\fi}
\newtheorem{remark}{Remark}
\newtheorem{definition}{Definition}
\newtheorem{assumption}{Assumption}
\newtheorem{corollary}{Corollary}
\newtheorem{lemma}{Lemma}
\newtheorem{proposition}{Proposition}
\DeclareMathAlphabet{\mathbsf}{OT1}{cmss}{bx}{n}%
\DeclareMathAlphabet{\mathssf}{OT1}{cmss}{m}{sl}%
\DeclareMathOperator*{\argmin}{arg\,min}
\DeclarePairedDelimiterX{\infdivx}[2]{(}{)}{%
	#1\;\delimsize\|\;#2%
}
\newcommand{\dimOne}{k_1}	%
\newcommand{\dimTwo}{k_2}	%
\newcommand{\rvv}{{\mathssf{v}}}	%
\newcommand{\rvx}{{\mathssf{x}}}	%
\newcommand{\rvz}{{\mathssf{z}}}	%
\newcommand{\rvbv}{{\mathbsf{v}}} %
\newcommand{\rvbx}{{\mathbsf{x}}} %
\newcommand{\svbv}{{\mathbf{v}}} %
\newcommand{\svbx}{{\mathbf{x}}} %
\newcommand{\svby}{{\mathbf{y}}} %
\newcommand{\bM}{{\mathbf{M}}} %
\newcommand{\bN}{{\mathbf{N}}} %
\newcommand{\bU}{{\mathbf{U}}} %
\newcommand{\bH}{{\mathbf{H}}} %
\newcommand{\hbH}{\hat{\mathbf{H}}} %
\newcommand{\br}{{r}}
\newcommand{\bd}{{d}}
\newcommand{\Reals}{\mathbb{R}} %
\newcommand{\bphi}{\boldsymbol{\phi}} 
\newcommand{\btheta}{\boldsymbol{\theta}} 
\newcommand{\ThetaStar}{{\Theta}^*} %
\newcommand{\hTheta}{\hat{\Theta}} %
\newcommand{\hThetan}{\hat{\Theta}_n} %
\newcommand{\hThetaEps}{\hat{\Theta}_{\epsilon,n}}
\newcommand{\DensityExpfam}{f_{\rvbx}( \svbx;\btheta )} %
\newcommand{\DensityX}{f_{\rvbx}( \svbx;\Theta )} %
\newcommand{\DensityXfun}{f_{\rvbx}( \cdot;\Theta )} %
\newcommand{\DensityXTrue}{f_{\rvbx}( \svbx;\ThetaStar )} %
\newcommand{\DensityXTruefun}{f_{\rvbx}( \cdot;\ThetaStar )} %
\newcommand{\cX}{\mathcal{X}} %
\newcommand{\cR}{\mathcal{R}} %
\newcommand{\cB}{\mathcal{B}} %
\newcommand{\cJ}{\mathcal{J}} %
\newcommand{\cU}{\mathcal{U}} %
\newcommand{\cL}{\mathcal{L}} %
\newcommand{\UniformI}{\cU_{\cX_i}}
\newcommand{\Uniform}{\cU_{\cX}}
\newcommand{\Uniformfun}{\cU_{\cX}(\cdot)}
\newcommand{\tphi}{\tilde{\phi}}
\newcommand{\parameterSet}{\Lambda} %
\newcommand{\qThetaRVX}{q(\rvbx; \Theta)} %
\newcommand{\qThetaX}{q(\svbx; \Theta)} %
\newcommand{\qThetaXt}{q(\svbx^{(t)}; \Theta)} %
\newcommand{\qThetafun}{q(\cdot; \Theta)} %
\newcommand{\qThetafuntrue}{q(\cdot; \ThetaStar)} %
\newcommand{\DensityDifference}{f_{\rvbx}( \svbx;\ThetaStar - \Theta )}
\newcommand{\DensityDifferencefun}{f_{\rvbx}( \cdot;\ThetaStar - \Theta )}
\newcommand{\infdiv}{D\infdivx}
\newcommand{\lambdaMin}{\lambda_{\min}}
\newcommand{\phiMax}{\phi_{\max}}
\newcommand{\Expectation}{\mathbb{E}}
\newcommand{\vect}{\text{vec}}
\newcommand{\hH}{\hat{H}}
\newcommand{\Prob}{\mathbb{P}}
\newcommand{\epsTwo}{\epsilon}
\newcommand{\epsFour}{\epsilon}
\newcommand{\deltaTwo}{\delta}
\newcommand{\deltaThree}{\delta}
\newcommand{\deltaFour}{\delta}
\newcommand{\hthetan}{\hat{\theta}_n}
\newcommand{\Indicator}{\mathds{1}}
\newcommand{\mapsfrom}{\mathrel{\reflectbox{\ensuremath{\mapsto}}}}
\let\svthefootnote\thefootnote
\newcommand\freefootnote[1]{%
	\let\thefootnote\relax%
	\footnotetext{#1}%
	\let\thefootnote\svthefootnote%
}
\title{On Computationally Efficient Learning of \\Exponential Family Distributions\freefootnote{An earlier version of this work was presented at the Neural Information Processing Systems Conference in December 2021 titled ``A Computationally Efficient Method for Learning Exponential Family Distributions'' \citep{ShahSG2021}. The current version improves the rate in finite sample guarantees to the parametric rate, i.e., ($n^{-1/2}$ from $n^{-1/4}$) and connects our method with score-based methods as well as with non-parametric density estimation. This version also demonstrates the effectiveness of our estimator via numerical experiments.}}
\author{%
	Abhin Shah
	\\
	MIT\\
	\texttt{abhin@mit.edu} \\
	\and
	Devavrat Shah \\
	MIT \\
	\texttt{devavrat@mit.edu}
	\and
	Gregory W. Wornell
	\\
	MIT \\
	\texttt{gww@mit.edu}
}
\date{}
\begin{document}
\sloppy
\maketitle
\begin{abstract}
 We consider the classical problem of learning, with arbitrary accuracy, the natural parameters of a $k$-parameter truncated \textit{minimal} 
exponential family from i.i.d. samples in a computationally and statistically efficient manner. 
We focus on the setting where the support as well as the natural parameters are appropriately bounded.
While the traditional maximum likelihood estimator for this class of 
exponential family is consistent, asymptotically normal, and asymptotically efficient, evaluating it is computationally hard.
In this work, we propose a novel loss function and a computationally efficient estimator that is consistent 
as well as asymptotically normal under mild conditions. 
We show that, at the population level, our method can be viewed 
as the maximum likelihood estimation of a re-parameterized distribution belonging to the same 
class of exponential family. 
Further, we show that our estimator can be interpreted as a solution to minimizing a particular Bregman score as well as an instance of minimizing the \textit{surrogate} likelihood. 
We also provide finite sample guarantees to achieve an error (in $\ell_2$-norm) of $\alpha$ in the parameter estimation 
with sample complexity $O({\sf poly}(k)/\alpha^2)$. Our method achives the order-optimal sample complexity of  $O({\sf log}(k)/\alpha^2)$ when tailored for node-wise-sparse Markov random fields. Finally, we demonstrate the performance of our estimator via numerical experiments.%
\end{abstract}
\section{Introduction}
\label{sec_intro}%

Consider a $p$-dimensional random vector $\rvbx = (\rvx_1 , \cdots, \rvx_p)$ with support $\cX \subset \Reals^{p}$. 
An exponential family is a set of parametric probability distributions with probability densities of the following canonical form
\begin{align}
\DensityExpfam \propto \exp \big(\btheta^T \bphi(\svbx) + \beta(\svbx) \big), \label{eq:exp_fam}
\end{align} 
where $\svbx \in \cX$ is a realization of the underlying random variable $\rvbx$, $\btheta \in \Reals^{k}$ is the natural parameter, $\bphi : \cX \rightarrow \Reals^{k}$ is the natural statistic, $k$ denotes the number of parameters, and $\beta$ is the log base function. 

The notion of an exponential family was first intoduced by  \cite{Fisher1934} and was later generalized by  \cite{Darmois1935},  \cite{Koopman1936}, and  \cite{Pitman1936}. Exponential families play an important role in statistical inference and arise in many diverse applications for a variety of reasons: (a) they are analytically tractable, (b) they arise as the solutions to several natural optimization problems on the space of probability distributions, (c) they have robust generalization property (see \cite{Brown1986, BarndorffNielsen2014} for details).
\textit{Truncated} (or bounded) exponential family, first introduced by  \cite{HoggC1956}, is a set of parametric probability distributions resulting from truncating the support of an exponential family. \textit{Truncated} exponential families share the same parametric form with their non-truncated counterparts up to a normalizing constant. These distributions arise in many applications where we can observe only a truncated dataset (truncation is often imposed by during data acquisition) e.g., geolocation tracking data can only be observed up to the coverage of mobile signal, police department can often monitor crimes only within their city's boundary.

In this work, we are interested in learning the natural parameters of a \textit{minimal} exponential family with bounded support. 
Motivated by the kind of constraints on the natural parameters we focus on, an
equivalent representation of \eqref{eq:exp_fam} is:
\begin{align}
\DensityX \propto \exp\Big(  \big\langle\big\langle \Theta, \Phi(\svbx) \big\rangle \big\rangle\Big),
\end{align}
where $\Theta = [\Theta_{ij}] \in \Reals^{k_1\times k_2}$ is the natural parameter, $\Phi = [\Phi_{ij}] : \cX \rightarrow \Reals^{k_1\times k_2}$ is the natural statistic, $k_1 \times k_2 - 1= k$, and $\big\langle\big\langle\Theta, \Phi(\svbx)  \big\rangle\big\rangle$ denotes the matrix inner product, i.e., the sum of product of entries of $\Theta$ and $\Phi(\svbx)$. In other words, we utilize the following equivalent representation of \eqref{eq:exp_fam}:
\begin{align}
\DensityX \propto \exp\bigg(  \sum\limits_{i \in [k_1], j \in [k_2]} \Theta_{ij} \times \Phi_{ij}(\svbx) \bigg).  \label{eq:densityXfun}
\end{align}
An exponential family is \textit{minimal} if there does not exist a nonzero matrix $\bU \in \Reals^{\dimOne\times  \dimTwo} $ 
such that $\big\langle\big\langle \bU, \Phi(\svbx) \big\rangle\big\rangle$ is equal to a constant for all $\svbx \in \cX$. 

The natural parameter $\Theta$ specifies a particular distribution in the exponential family. If the natural statistic $\Phi$ and the support of $\rvbx$ (i.e., $\cX$) are known, then learning a distribution in the exponential family is equivalent to learning the corresponding natural parameter $\Theta$. Despite having a long history, there has been limited progress on learning natural parameter $\Theta$ of a \textit{minimal truncated} exponential family. 
Motivated by this, we address the following fundamental question in this work --- \textit{Is there a method that is both computationally and statistically efficient for learning natural parameter of the \textit{minimal truncated} exponential family considered above?}

\subsection{Contributions}
\label{subsec_contributions}
As the primary contribution of this work, we provide a computationally tractable method with statistical guarantees for learning distributions in \textit{truncated minimal} exponential families.
Formally, the learning task of interest is estimating the true natural parameter $\ThetaStar$ from i.i.d. samples of $\rvbx$ obtained from $\DensityXTruefun$.
We focus on the setting where the natural parameter $\ThetaStar$ belongs to a convex set and the natural statistics $\Phi$ is appropriately bounded (see Section \ref{sec:prob_formulation}).
 We summarize our contributions in the following two categories.\\

\subsubsection{Estimator: Computational Tractability, Consistency, Asymptotic Normality, Finite Sample Guarantees}
Given $n$ samples $\svbx^{(1)} \cdots , \svbx^{(n)}$ of $\rvbx$, we propose the following convex loss function to learn a distribution belonging to the exponential family in \eqref{eq:densityXfun}:
\begin{align}
\cL_{n}(\Theta)  = \frac{1}{n} \sum_{t = 1}^{n} \exp\Big( -\big\langle \big\langle \Theta, \varPhi(\svbx^{(t)}) \big\rangle \big\rangle \Big), \label{eq:loss_function}
\end{align}
where $\varPhi(\cdot) = \Phi(\cdot) - \Expectation_{\Uniform} [\Phi(\cdot)]$ with $\Uniform$ being the uniform distribution over $\cX$. 
We establish that the estimator $\hThetan$ obtained by minimizing $\cL_{n}(\Theta)$ over all $\Theta$ in the constraint set $\parameterSet$, i.e., 
\begin{align}
\hThetan \in \argmin_{\Theta \in \parameterSet} \cL_{n}(\Theta), \label{eq:estimator}
\end{align}
is consistent and (under mild further restrictions) asymptotically normal (see Theorem \ref{thm:consistency_normality}). We show that the loss function in \eqref{eq:loss_function} is $O(\dimOne \dimTwo)$-smooth function of $\Theta$ implying that an $\epsilon$-optimal solution $\hThetaEps$ (i.e., $\cL_{n}(\hThetaEps) \leq \cL_{n}(\hThetan) + \epsilon$) can be obtained in $O(k_1k_2/\epsilon)$ iterations. Finally, we provide rigorous finite sample guarantees for $\hThetan$ to 
achieve an error of $\alpha$ (in the Frobenius norm) with respect to the true
natural parameter $\ThetaStar$ with $O({\sf poly}(k_1k_2)/\alpha^2)$ samples (see Theorem \ref{thm:finite_sample}). We note the loss function in \eqref{eq:loss_function} is a generalization of the loss functions proposed in \cite{VuffrayMLC2016,VuffrayML2019,ShahSW2021} for learning node-wise-sparse Markov random fields (MRFs). In particular, Theorem \ref{thm:finite_sample} could be specialized to recover the natural parameters of sparse MRFs with $O({\sf log}(k_1k_2)/\alpha^2)$ samples (see Remark \ref{remark_mrf}). Beyond local structure on $\Theta$ (of node-wise-sparsity) as in MRFs, our framework can capture various global structures on $\Theta$, e.g., bounded maximum norm, bounded Frobenius norm, bounded nuclear norm (see Section \ref{subsec:natural_parameter}).\\

\subsubsection{Connections to Maximum Likelihood Estimation, Bregman Score, and Surrogate Likelihood}
We establish connections between our method and various existing methods in the literature. First,
we show that the estimator that minimizes the population version of the loss function in \eqref{eq:loss_function} i.e.,
$$\cL(\Theta)  = \Expectation \Big[\exp\Big( -\big\langle \big\langle \Theta, \varPhi(\rvbx) \big\rangle \big\rangle \Big)\Big],$$
is equivalent to the estimator that minimizes the  Kullback-Leibler (KL) divergence between $\Uniform$ (the uniform distribution on $\cX$) and $\DensityDifferencefun$ (see Theorem \ref{theorem:GRISMe-KLD}). Therefore, at the population level, our method can be viewed as the MLE of the parametric family $\DensityDifferencefun$.
We show that the KL divergence (and therefore $\cL(\Theta)$) is minimized if and only if 
$\Theta = \ThetaStar$ implying that $\cL(\Theta)$ is a proper loss function. This connection provides an intuitively pleasing justification of the estimator in \eqref{eq:estimator}.
Second, we demonstrate that the estimator in \eqref{eq:estimator} can be interpreted as a solution to minimizing a particular Bregman score and thus connect our method with score-based methods (see Proposition \ref{prop:bregman_score}). In other words, we show that a specific \textit{separable} Bregman score is equivalent to the learning task considered here. Therefore, our work also implies the computational tractability of this Bregman score.
Finally, we show that our estimator can be viewed as an instance of the \textit{surrogate} likelihood estimator proposed by \cite{JeonL2006}, and thus draw connections with non-parametric density estimation (see Proposition \ref{prop:surrogate}).

 \subsection{Comparison with the traditional MLE}
 To contextualize our method, we compare it with the MLE of the parametric family $\DensityXfun$. The MLE of $\DensityXfun$ minimizes the following loss function 
 \begin{align}
\min - \frac{1}{n} \sum_{t = 1}^{n} \big\langle \big\langle  \Theta, \Phi(\svbx^{(t)}) \big\rangle \big\rangle + \log \int_{\svbx \in \cX} \hspace{-4mm} \exp\Big(\big\langle \big\langle \Theta, \Phi(\svbx \big\rangle\big\rangle\Big) d\svbx. \label{eq:mle}
 \end{align}
The maximum likelihood estimator has many attractive asymptotic properties 
 : (a) consistency \citep[Theorem~17]{Ferguson2017}, i.e., as the sample size goes to infinity, the bias in the estimated parameters goes to zero, (b) asymptotic normality  \citep[Theorem~18]{Ferguson2017}, i.e., as the sample size goes to infinity, normalized estimation error coverges to a Gaussian distribution
and (c) asymptotic efficiency \citep[Theorem~20]{Ferguson2017}, i.e., as the sample size goes to infinity, the variance 
in the estimation error
attains the minimum possible value among all consistent estimators.
Despite having these useful asymptotic properties of consistency, normality, and efficiency, 
 computing the partition function in \eqref{eq:mle}, and therefore, the maximum likelihood estimator, is computationally hard \citep{Valiant1979, JerrumS1989}. Furthermore, even approximating the partition function, up to a multiplicative error, is NP-hard in general \citep{sly2012computational}.
 
Our method can be viewed as a computationally efficient proxy for the MLE. More precisely, our method is computationally tractable as opposed to the MLE while retaining the useful properties of consistency and asymptotic normality. However, our method misses out on asymptotic efficiency. This raises an important question for future work --- \textit{can computational and asymptotic efficiency be achieved by a single estimator for this class of exponential family?}

\subsection{Related Works}
\label{subsec_related_work}
In this section, we look at the related works on learning exponential family. First, we look at the two broad line of approaches to overcome the computational hardness of the MLE : (a) approximating the MLE and (b) selecting a surrogate objective. Given the richness of both of approaches, we cannot do justice in providing a full overview. Instead, we look at a few examples from both. Next, we look at some of the related works that focus on learning a class of exponential family. More specifically, we look at works on (a) learning the Gaussian distribution and (b) learning exponential family Markov random fields (MRFs). 
Next, we explore some works on the powerful technique of score matching. Finally, we review the related literature on non-parametric density estimation to draw the connection between our estimator and the {surrogate} likelihood estimator \citep{JeonL2006}.
In Appendix \ref{appendix:related_works}, we further review works on learning exponential family MRFs,
score-based methods (including the related literature on Stein discrepancy), and non-parametric density estimation.\\
\noindent{\bf Approximating the MLE.} 
Most of the techniques falling in this category approximate the MLE by approximating the log-partition function. A few examples include : (a) approximating the gradient of log-likelihood with a stochastic estimator by minimizing the contrastive divergence \citep{Hinton2002}; (b) upper bounding the log-partition function by an iterative tree-reweighted belief propagation algorithm \citep{WainwrightJW2003}; (c) using Monte Carlo methods like importance sampling for estimating the partition function \citep{RobertC2013}. Since these methods approximate the partition function, they come at the cost of an approximation error or result in a biased estimator.\\

\noindent{\bf Selecting surrogate objective.} 
This line of approach selects an easier-to-compute surrogate objective that completely avoids the partition function. A few examples are as follows : (a) pseudo-likelihood estimators \citep{Besag1975} approximate the joint distribution with the product of conditional distributions, each of which only represents the distribution of a single variable conditioned on the remaining variables; 
(b) score matching \citep{HyvarinenD2005, Hyvarinen2007} minimizes the Fisher divergence between the true log density and the model log density. Even though score matching does not require evaluating the partition function, it is computationally expensive as it requires computing third order derivatives for optimization; (c) kernel Stein discrepancy \citep{LiuLJ2016, ChwialkowskiSG2016} measures the kernel mean discrepancy between a data distribution and a model density using the Stein's identity. This measure is directly characterized by the choice of the kernel and there is no clear objective for choosing the right kernel \citep{WenliangSSG2019}.\\

\noindent{\bf Learning the Gaussian distribution.}
Learning the Gaussian distribution is a special case of learning exponential family distributions. There has been a long history of learning Gaussian distributions in the form of learning Gaussian graphical models, e.g., the neighborhood selection scheme \citep{MeinshausenB2006}, the graphical lasso \citep{FriedmanHT2008}, the CLIME \citep{CaiLL2011}, etc. However, finite sample analysis of these methods require various hard-to-verify conditions, e.g., the restricted eigenvalue condition, the incoherence assumption \citep{WainwrightRL2006, JalaliRVS2011}, bounded eigenvalues of the precision matrix. \cite{misra2020information} provided the first polynomial-time algorithm whose sample complexity matches the information-theoretic lower bound of \cite{wang2010information} without the aforementioned hard-to-verify conditions. \cite{KelnerKMM2019} proposed a faster alternative to \cite{misra2020information} for a specific subclass of Gaussian graphical models.
Similarly, there has also been a long history of learning truncated Gaussian distributions dating back to \cite{Galton1898, Pearson1902, PearsonL1908, Lee1914, Fisher1931}. 
A recent work by \cite{DaskalakisGTZ2018} showed that it is possible to learn, in polynomial time, the mean vector and the covariance matrix of a $p$-dimensional Gaussian distribution, upto an ($\ell_2$) error of $\alpha$ with $O(p^2/\alpha^2)$ samples (i.e., with a sample complexity of the same order when there is no truncation).\\

\noindent{\bf Learning Exponential family Markov random fields (MRFs).}
MRFs can be naturally represented as exponential family distributions via the principle of maximum entropy \citep{WainwrightJ2008}. A popular method for learning  MRFs is estimating node-neighborhoods (fitting conditional distributions of each node conditioned on the rest of the nodes) because the natural parameter is assumed to be node-wise-sparse. A recent line of work has considered a subclass of node-wise-sparse pairwise continuous MRFs where the node-conditional distribution of $\rvx_i \in \cX_i$ for every $i$ arise from an exponential family as follows:
\begin{align}
f_{\rvx_i | \rvx_{-i}}(x_i | \svbx_{-i} = x_{-i}) \propto \exp \Big( \big[  \theta_i  + \hspace{-2mm}  \sum_{j \in [p], j \neq i}  \hspace{-2mm} \theta_{ij}  \phi(x_j)  \big] \phi(x_i) \Big),  \label{eq:conditional}
\end{align}
where $\phi(x_i)$ is the natural statistics and $\theta_i  + \sum_{j \in [p], j \neq i} \theta_{ij}  \phi(x_j) $ is the natural parameter.\footnote[2]{Under node-wise-sparsity, $\sum_{j \in [p], j \neq i} \Indicator(\theta_{ij} \neq 0) $ is bounded by a constant for every $i \in [p]$.} \cite{YangRAL2015} showed that only the following joint distribution is consistent with the node-conditional distributions in \eqref{eq:conditional} :
\begin{align}
f_{\rvbx}(\svbx) \propto \exp \Big( \sum_{i \in [p]} \theta_i  \phi(x_i) + \sum_{j \neq i} \theta_{ij}  \phi(x_i) \phi(x_j) \Big).  \label{eq:joint}
\end{align}
To learn the node-conditional distribution in \eqref{eq:conditional} for linear $\phi(\cdot)$ (i.e., $\phi(x) = x$), \cite{YangRAL2015} proposed an $\ell_1$ regularized node-conditional log-likelihood. However, their
finite sample analysis 
required the following conditions: incoherence, dependency  \citep{WainwrightRL2006, JalaliRVS2011}, bounded moments of the variables, and local smoothness of the log-partition function. \cite{TanseyPSR2015} extended the approach in \cite{YangRAL2015} to vector-space MRFs (i.e., vector natural parameters and natural statistics) and non-linear $\phi(\cdot)$. They proposed a sparse group lasso \citep{ SimonFHT2013} regularized node-conditional log-likelihood and an alternating direction method of multipliers based approach to solving the resulting optimization problem. However, their analysis required same conditions as \cite{YangRAL2015}. 

While node-conditional log-likelihood has been a natural choice for learning exponential family MRFs in \eqref{eq:joint}, M-estimation \citep{VuffrayMLC2016, VuffrayML2019, ShahSW2021} and maximum pseudo-likelihood estimator \citep{NingZL2017, YangNL2018, DaganDDA2021} have recently gained popularity. The objective function in M-estimation is a sample average and the estimator is generally consistent and asymptotically normal. 
\cite{ShahSW2021} proposed the following M-estimation (inspired from \cite{VuffrayMLC2016, VuffrayML2019}) for vector-space MRFs and non-linear $\phi(\cdot)$: with $\UniformI$ being the uniform distribution on $\cX_i$ and $\tphi(x_i) = \phi(x_i) - \int_{x_i'} \phi(x_i') \UniformI(x_i') dx_i'$
\begin{align}
\arg \min \frac{1}{n} \sum_{t = 1}^{n} \exp\Big(\!-\! \Big[\theta_i  \tphi(x_i^{(t)}) + \!\!\!\!\! \sum_{j \in [p], j \neq i} \hspace{-2mm} \theta_{ij}  \tphi(x_i^{(t)}) \tphi(x_j^{(t)}) \Big] \Big). \label{eq:shah}
\end{align}
They provided an entropic descent algorithm (borrowing from \cite{VuffrayML2019}) to solve the optimization in \eqref{eq:shah} and their finite-sample bounds rely on bounded domain of the variables and a condition (naturally satisfied by linear $\phi(\cdot)$) that lower bounds the variance of a non-constant random variable. 

\cite{YuanLZLL2016} considered a broader class of sparse pairwise exponential family MRFs compared to \cite{YangRAL2015}, i.e., \eqref{eq:joint}. They studied the following joint distribution with natural statistics $\phi(\cdot)$ and $\psi(\cdot)$
\begin{align}
f_{\rvbx}(\svbx) \propto \exp \Big( \sum_{i \in [p]} \theta_i  \phi(x_i) + \sum_{j \neq i} \theta_{ij}  \psi(x_i, x_j) \Big).  \label{eq:yuan}
\end{align}
They proposed an $\ell_{2,1}$ regularized joint likelihood and an $\ell_{2,1}$ regularized node-conditional likelihood. They also presented a Monte-Carlo approximation to these estimators via proximal gradient descent. Their finite-sample analysis required restricted strong convexity (of the Hessian of the negative log-likelihood of the joint density) and bounded moment-generating function of the variables. 

Building upon \cite{VuffrayMLC2016, VuffrayML2019} and \cite{ShahSW2021}, \cite{RenMVL2021} addressed learning continuous exponential family distributions through a series of numerical experiments. They considered unbounded distributions and allowed for terms corresponding to multi-wise interactions in the joint density. 
They 
assume local structure on the parameters as in MRFs and their
estimator is defined as a series of node-wise optimization problems. We note that, among the methods described above, the ones based on M-estimation have superior numerical performance compared to the ones based on pseudo-likelihood estimation as shown in \cite{RenMVL2021}. Likewise, building upon \cite{ShahSW2021} and \cite{DaganDDA2021}, \cite{shah2022counterfactual} considered learning the node-conditional distribution corresponding to the joint distribution in \eqref{eq:joint} when certain variables remain unobserved. They showed applications to causal inference and provided finite sample guarantees for learning the counterfactual means. 

In summary, tremendous progress has been made on learning the sub-classes of exponential family in \eqref{eq:joint} and \eqref{eq:yuan}. However, this sub-classes are restricted by the assumption that the natural parameters are node-wise-sparse. For example, none of the existing methods for exponential family MRFs work in the setting where the natural parameters have a nuclear norm (i.e., a convex relaxation of a low-rank) constraint.\\
\noindent{\bf Score-based method.}
A scoring rule $S(\svbx, Q)$ is a numerical score assigned to a realization $\svbx$ of a random variable $\rvbx$ and it measures the quality of a predictive distribution $Q$ (with probability density $q(\cdot)$). 
If $P$ is the true distribution of $\rvbx$, the divergence $D(P,Q)$ associated with a scoring rule is defined as $\Expectation_{P}[S(\rvbx, Q) - S(\rvbx, P)]$. The MLE is an example of a scoring rule with $S(\cdot, Q) = - \log q(\cdot)$ and the resulting divergence is the KL-divergence.

To bypass the intractability of MLE, \cite{HyvarinenD2005, Hyvarinen2007} proposed an alternative scoring rule with $S(\cdot, Q) = \Delta \log q(\cdot) + \frac{1}{2} \| \nabla \log q(\cdot)\|^2_2$ where $\Delta$ is the Laplacian operator, $\nabla$ is the gradient and $\| \cdot \|_2$ is the $\ell_2$ norm. This method is called \textit{score matching} and the resulting divergence is the Fisher divergence. Score matching is widely used for estimating unnormalizable probability distributions because computing the scoring rule $S(\cdot, Q)$ does not require knowing the partition function. Despite the flexibility of this approach, it is computationally expensive in high dimensions since it requires computing the trace of the unnormalized density's Hessian (and its derivatives for optimization). Additionally, it breaks down for models in which the second derivative grows very rapidly.

\cite{LiuKW2019} considered estimating truncated exponential family using the principle of {score matching}. They build on the framework of generalized score matching \citep{Hyvarinen2007} and proposed a novel estimator that minimizes a weighted Fisher divergence. They showed that their estimator is a special case of minimizing a Stein Discrepancy. However, their finite sample analysis relies on certain hard-to-verify assumptions, for example, the assumption that the optimal parameter is well-separated from other neighboring parameters in terms of their population objective. Further, their estimator lacks the useful properties of asymptotic normality and asymptotic efficiency.\\

\noindent{\bf Non-parametric density estimation.}
	In their pioneering article, \cite{GooddG1971} introduced the idea of	penalized log likelihood for non-parametric density estimation. 
	The approach of logistic density transform, commonly used today for non-parametric density estimation, was first introduced by \cite{Leonard1978} to incorporate the positivity $(f_{\rvbx}(\cdot) \geq 0)$ and unity $(\int_{\svbx \in \cX} f_{\rvbx} (\svbx) d\svbx  = 1)$ constraints of density function. They considered densities of the form
	\begin{align}
	 f_{\rvbx}(\svbx) = \frac{\exp(\eta(\svbx))}{\int_{\svbx \in \cX} \exp(\eta(\svbx)) d\svbx},
	\end{align}
	with some constraints on $\eta(\cdot)$ for identifiability, and proposed to estimate $\eta(\cdot)$ via penalized log likelihood as follows
	\begin{align}
	\hspace{-6mm}  \arg \min - \frac{1}{n} \sum_{i = 1}^{n} \eta(\svbx_i) + \log \int_{\svbx \in \cX}  \hspace{-4mm}  \exp(\eta(\svbx)) d\svbx + \lambda J(\eta), \label{eq:leonard}
	\end{align}
	where $\lambda \geq 0$ is a smoothing parameter and $J(\eta)$ is a penalty functional.
	
	While the penalized likelihood method in \eqref{eq:leonard} has been successful in low-dimensions, it scales poorly in high-dimensions. In high-dimensional problems, the main difficulty is in computing multidimensional integrals of the form $\int_{\svbx \in \cX} \exp(\eta(\svbx)) d\svbx$ which do not decompose in general. To circumvent this computational limitation associated with \eqref{eq:leonard}, \cite{JeonL2006} proposed a penalized $M$-estimation (\textit{surrogate} likelihood) type method as follows
	\begin{align}
	\hspace{-6mm}  \arg \min \frac{1}{n} \sum_{i = 1}^{n} \exp(-\eta(\svbx_i)) +  \int_{\svbx \in \cX} \hspace{-4mm} \rho(\svbx) \eta(\svbx) d\svbx + \lambda J(\eta), \label{eq:jeon}
	\end{align}
	where $\eta(\cdot)$ lies in a Reproducing Kernel Hilbert Space (RKHS) and $\rho(\cdot)$ is some fixed known density with the same support as the unknown density $f_{\rvbx}(\svbx)$, with the resulting density estimate $\hat{f}_{\rvbx}(\svbx) = \rho(\svbx) \exp(\hat{\eta}(\svbx))$. They showed that with appropriate choices of $\rho(\cdot)$, the integral $\int_{\svbx \in \cX} \rho(\svbx) \eta(\svbx) d\svbx$ can be decomposed into sums of products of one-dimensional integrals, allowing faster computations. However, the selection of $\lambda$ that delivers reasonable performance requires the evaluation of the normalizing constant $\int_{\svbx \in \cX} \rho(\svbx) \exp(\eta(\svbx)) d\svbx $.  Additionally, the theoretical properties of the estimator in \eqref{eq:jeon} are also not yet known.

\subsection{Useful notations}
\label{subsec_notations}
For any positive integer $t$, let $[t] \coloneqq \{1,\cdots, t\}$.
For a deterministic sequence $v_1, \cdots , v_t$, we let $\svbv \coloneqq (v_1, \cdots, v_t)$. 
For a random sequence $\rvv_1, \cdots , \rvv_t$, we let $\rvbv \coloneqq (\rvv_1, \cdots, \rvv_t)$. We denote the $\ell_p$ norm $(p \geq 1)$ of a vector $\svbv \in \Reals^t$ by $\| \svbv\|_{p} \coloneqq (\sum_{i=1}^{t}|v_i|^p)^{1/p}$ and the $\ell_{\infty}$ norm by $\|\svbv\|_{\infty} \coloneqq \max_{i \in [t]} |v_i|$.  For a matrix $\bM \in \Reals^{u \times v}$, we denote the element in $i^{th}$ row and $j^{th}$ column by $M_{ij}$, and the singular values of the matrix by $\sigma_i(\bM)$ for $i \in [\min\{u, v\}]$. We denote the matrix maximum norm by $\|\bM\|_{\max} \coloneqq \max_{i \in [u], j \in [v]} |M_{ij}|$, the entry-wise $L_{1,1}$ norm by $\|\bM\|_{1,1} \coloneqq \sum_{i \in [u], j \in [v]} |M_{ij}|$, the nuclear norm by $\|\bM\|_{\star} \coloneqq \sum_{i \in [\min\{u, v\}]} \sigma_i(\bM)$,  the spectral norm by $\|\bM\| \coloneqq \max_{i \in [\min\{u, v\}]} \sigma_i(\bM)$, and the Frobenius norm by $\| \bM\|_{\mathrm{F}} \coloneqq \sqrt{\sum_{i \in [u], j \in [v]} M^2_{ij}}$. 
We denote the Frobenius or Trace inner product of matrices $\bM, \bN \in \Reals^{u \times v}$ by  $\langle\langle \bM, \bN \rangle\rangle \coloneqq \sum_{i \in [u], j \in [v]} M_{ij} N_{ij}$. 
For a matrix $\bM \in \Reals^{u \times v}$, we denote a generic norm on $\Reals^{u \times v}$ by $\cR(\bM)$ and denote the associated dual norm by $\cR^*(\bM) \coloneqq \sup \{\langle\langle \bM, \bN \rangle\rangle | \cR(\bN) \leq 1\}$ where $\bN \in \Reals^{u \times v}$. 
We denote the vectorization of a matrix $\bM \in \Reals^{u \times v}$ by $\vect(\bM) \in \Reals^{uv \times 1}$ (the ordering of the elements is not important as long as it is consistent).
Let $\boldsymbol{0} \in \Reals^{\dimOne \times \dimTwo}$ denote the matrix with every entry zero. We denote a $p$-dimensional $\ell_q$ ball of radius $b$ centered at $\boldsymbol{0} \in \Reals^{p}$ by $\cB_q(b)$ for $q \in \{1,2\}$.

\subsection{Outline}
In Section \ref{sec:prob_formulation}, we formulate the problem of interest and provide examples. In Section \ref{sec:algorithm}, we provide our loss function. In Section \ref{sec:main results}, we present our main results including the connections to the MLE of $\DensityDifferencefun$, 
consistency, asymptotic normality, and finite sample guarantees. 
In Section \ref{sec:experiments}, we provide our empirical findings. In Section \ref{sec:misc}, we conclude and provide some remarks.

\section{Problem Formulation}
\label{sec:prob_formulation}
Let $\rvbx = (\rvx_1 , \cdots, \rvx_p)$ be a $p-$dimensional vector of continuous random variables.\footnote[3]{Even though we focus on continuous variables, our framework applies equally to discrete or mixed variables.} For any $i \in [p]$, let the support of $\rvx_i$ be $\cX_i \subset \Reals$.
Define $\cX \coloneqq \prod_{i=1}^p \cX_i$.
Let $\svbx = (x_1, \cdots, x_p) \in \cX$ be a realization of $\rvbx$. In this work, we assume that the random vector $\rvbx$ belongs to an exponential family with bounded support (i.e., length of $\cX_i$ is bounded) along with certain additional constraints. 

\subsection{Natural parameter $\Theta$}
\label{subsec:natural_parameter}
First, we assume that a certain norm of the natural parameter $\Theta \in \Reals^{\dimOne \times  \dimTwo}$ is bounded. This serves to confine the possible values of $\Theta$ within a convex set. We formally state this assumption below. 
\begin{assumption}\label{bounds_parameter}
	(Bounded norm of $\Theta$.)
	We let $\cR(\Theta) \leq r$ where $\cR : \Reals^{\dimOne\times  \dimTwo} \rightarrow \Reals_+$ is a norm and $r$ is a known constant.
\end{assumption}
Assumption \ref{bounds_parameter} should be viewed as a potential flexibility in the problem specification i.e., a practitioner has the option to choose from a variety of constraints on the natural parameters (that could be handled by our framework). For example, in some real-world applications every entry of the parameter is bounded while in some other case the sum of singular values of the parameter matrix is bounded (convex relaxation of low-rank parameter matrix). A few concrete examples include bounded maximum norm $\|\Theta\|_{\max}$, 
bounded Frobenius norm $\|\Theta\|_{\mathrm{F}}$, and bounded nuclear norm $\|\Theta\|_{\star}$. 

\begin{remark}
	We note that $p$, the dimension of $\rvbx$, is not assumed to be a constant. Instead, we think of $\dimOne$ and $\dimTwo$ as implicit functions of $p$. Typically, for an exponential family, the quantity of interest is the number of parameters, i.e., $k = \dimOne \times \dimTwo$, and this quantity scales polynomially in $p$, e.g., $k = O(p^t)$ for t-wise MRFs over binary alphabets (see Section \ref{sec:experiments}). 
\end{remark}

Now, we define $\parameterSet$ to be the set of all natural parameters satisfying Assumption \ref{bounds_parameter} i.e., $\parameterSet \coloneqq \{\Theta : \cR(\Theta) \leq \br\}$. To see that the constraint set $\parameterSet$ is a convex set, consider any $\tilde{\Theta}, \bar{\Theta} \in \parameterSet$ and $t \in [0,1]$. Then, we have $\cR(t \tilde{\Theta} + (1-t) \bar{\Theta} ) \leq t \cR(\tilde{\Theta}) + (1-t) \cR(\bar{\Theta} ) \leq t\br + (1-t)\br = \br$ which implies $t \tilde{\Theta} + (1-t) \bar{\Theta}  \in \parameterSet$. Next, we make certain assumptions on the natural statistic $\Phi(\svbx) : \cX \rightarrow \Reals^{\dimOne \times  \dimTwo}$.
\subsection{Natural Statistic $\Phi$}
We focus on bounded natural statistics and consider two notions of boundedness.  First, we assume that the dual norm (defined with respect to $\cR$) of the natural statistic is bounded. This enables us to bound the matrix inner product between the natural parameter $\Theta$ and the natural statistic $\Phi(\cdot)$.
\begin{assumption}\label{bounds_statistics}
	(Bounded dual norm of $\Phi$). Let $\cR^*$ denote the dual norm of $\cR$. Then, we  assume that the dual norm $\cR^*$ of the  natural statistic $\Phi$ is bounded by a constant $d$. Formally, for any $\svbx \in \cX$, $\cR^*(\Phi(\svbx)) \leq d$. 
\end{assumption}
A few examples of dual norms include the $L_{1,1}$ norm $\|\Phi(\svbx)\|_{1,1}$, the Frobenius norm $\|\Phi(\svbx)\|_{\mathrm{F}}$, and the spectral norm $\|\Phi(\svbx)\|$ when the underlying norm $\cR$ is the maximum norm $\|\Theta\|_{\max}$, the Frobenius norm $\|\Theta\|_{\mathrm{F}}$, and the nuclear norm $\|\Theta\|_{\star}$, respectively. While we require this assumption for our theoretical analysis, our empirical findings suggest that the assumption may not be a strict requirement in practice (see Section \ref{sec:experiments}).  
Next, we assume that the maximum norm of the natural statistic $\Phi(\cdot)$ is also bounded. This assumption is stated formally below.
\begin{assumption}\label{bounds_statistics_maximum}
	(Bounded maximum norm of $\Phi$). For any $\svbx \in \cX$, $\|\Phi(\svbx)\|_{\max} \leq \phiMax$ where $\phiMax$ is a constant.
\end{assumption}

\subsection{The Exponential Family}
Summarizing, $\rvbx$ belongs to a \textit{minimal truncated} exponential family with probability density function as follows
\begin{align}
\DensityX \propto \exp\Big(  \Big\langle \Big\langle \Theta, \Phi(\svbx) \Big\rangle \Big\rangle \Big), \label{eq:densityX}
\end{align}
where the natural parameter $\Theta \in \Reals^{\dimOne\times  \dimTwo}$ is such that $\cR(\Theta) \leq r$ for some norm $\cR$, and the natural statistic $\Phi(\svbx) : \cX \rightarrow \Reals^{\dimOne\times  \dimTwo}$ is such that for any $\svbx \in \cX$,  $\|\Phi(\svbx)\|_{\max} \leq \phiMax$ as well as $\cR^*(\Phi(\svbx)) \leq \bd$ for the dual norm $\cR^*$.

Let $\ThetaStar$ denote the true natural parameter of interest and $\DensityXTrue$ denote the true distribution of $\rvbx$. Naturally, we assume $\cR(\ThetaStar) \leq r$. Formally, the learning task of interest is as follows:
Given $n$ independent samples of $\rvbx$ i.e., $\svbx^{(1)} \cdots , \svbx^{(n)}$ obtained from $\DensityXTrue$, compute an estimate $\hTheta$ of $\ThetaStar$ in polynomial time such that $\|\ThetaStar - \hTheta\|_{\mathrm{F}}$ is small.
\subsection{Examples}
\label{subsec:examples}
Next, we present a few examples of natural statistics along with the corresponding support that satisfy Assumptions \ref{bounds_statistics} and \ref{bounds_statistics_maximum}. See Appendix \ref{appendix:examples} for more discussion on these examples.

\begin{enumerate}[leftmargin=*,topsep=1pt,itemsep=1pt]
	\item \textit{Polynomial statistics}: Suppose the natural statistics are polynomials of $\rvbx$ with maximum degree $l$, i.e., $\prod_{i \in [p]} x_i^{l_i}$ such that $l_i \in [l] \cup \{0\}$ for all $ i \in [p]$ and $\sum_{i \in [p]} l_i \leq l$ for some $l < p$. If $\cX = [-b,b]^p$ for some $b \in \Reals$, then $\phiMax = \max\{1, b^l\}$. If $\ThetaStar$ has a bounded maximum norm $\|\Theta\|_{\max}$ and $\cX = \cB_1(b)$ for $b \in \Reals$, then $\cR^*(\Phi(\svbx)) \leq (1 + b)^l$. If $\ThetaStar$ has a bounded Frobenius norm $\|\Theta\|_{\mathrm{F}}$ and $\cX = \cB_1(b)$ for $b \in \Reals$, then $\cR^*(\Phi(\svbx)) \leq (1 + b)^l$. If $\ThetaStar$ has a bounded nuclear norm $\|\Theta\|_{\star}$ , $l = 2$, and $\cX = \cB_2(b)$ for $b \in \Reals$, then $\cR^*(\Phi(\svbx)) \leq b^2$. 	
	\item \textit{Trigonometric statistics}: Suppose the natural statistics are sines and cosines of $\rvbx$ with $l$ different frequencies, i.e., $\sin(\sum_{i \in [p]}l_ix_i)$ $\cup$ $\cos(\sum_{i \in [p]}l_ix_i)$ such that $l_i \in [l] \cup \{0\}$ for all $i \in [p]$. For any $\cX \subset \Reals^p$, $\phiMax = 1$. If $\ThetaStar$ has bounded $L_{1,1}$ norm $\|\Theta\|_{1,1}$, then $\cR^*(\Phi(\svbx)) \leq 1$ for any $\cX \subset \Reals^p$.
\end{enumerate}
Our framework also allows combinations of polynomial and trigonometric statistics (see Appendix \ref{appendix:examples}).
\section{Loss function}
\label{sec:algorithm} 
We propose a novel and computationally tractable loss function
drawing inspiration from the recent advancements in exponential family Markov random fields \citep{VuffrayMLC2016, VuffrayML2019, ShahSW2021}. We start by centering the natural statistic $\Phi(\cdot)$ such that their integral with respect to the uniform distribution on $\cX$, denoted by $\Uniform$, is zero. We note that the distribution $\Uniform$ is well-defined as the support $\cX$ is a strict subset of $\Reals^{p}$ i.e., $\cX \subset \Reals^{p}$. This centering plays a key role in ensuring that our loss function is a proper loss function.
\begin{definition}\label{def:css}
	(Centered natural statistics). 
	The centered natural statistics are defined as follows: 
	\begin{align}
	\varPhi(\cdot) & \coloneqq \Phi(\cdot) - \Expectation_{\Uniform} [\Phi(\rvbx)]. \label{eq:CSS}
	\end{align}
\end{definition}

The loss function, defined below, is 
an empirical average of the inverse of the function of $\rvbx$ that the probability density $\DensityX$ in \eqref{eq:densityX} is proportional to.
\begin{definition}[The loss function] Given $n$ samples $\svbx^{(1)} \cdots , \svbx^{(n)}$ of $\rvbx$, the loss function maps $\Theta \in \Reals^{\dimOne\times  \dimTwo}$ to $\cL_{n}(\Theta) \in \Reals$ defined as 
	\begin{align}
	\cL_{n}(\Theta)  = \frac{1}{n} \sum_{t = 1}^{n} \exp\big( -\big\langle \big\langle \Theta, \varPhi(\svbx^{(t)}) \big\rangle\big\rangle \big). \label{eq:sampleGISMe}
	\end{align}
\end{definition}

The proposed estimator $\hThetan$ produces an estimate of $\ThetaStar$ by minimizing the loss function $\cL_{n}(\Theta)$ over all natural parameters $\Theta$ satisfying Assumption \ref{bounds_parameter} i.e.,
\begin{align}
\hThetan \in \argmin_{\Theta \in \parameterSet} \cL_{n}(\Theta).
 \label{eq:GRISMe}
\end{align}
The optimization in \eqref{eq:GRISMe} is a convex minimization problem, i.e., minimizing a convex function $\cL_{n}$ over a convex set $\parameterSet$). As a result, there are efficient (i.e., polynomial time) implementations for finding an $\epsilon$-optimal solution of $\hThetan$ where $\hThetaEps$ is said to be an $\epsilon$-optimal solution of $\hThetan$ if $\cL_{n}(\hThetaEps) \leq \cL_{n}(\hThetan) + \epsilon$ for any $\epsilon > 0$. The run-time of these implementations scales as $O(\dimOne \dimTwo/\epsilon)$ which follows from the $O(\dimOne \dimTwo)$-smoothness property of $\cL_{n}(\Theta)$ (see Appendix \ref{appendix:algorithm_section_proofs} for a proof). Furthermore, from Slater's condition (which holds because $\parameterSet$ has an interior point), we can express $\hThetan$ as a solution to the following unconstrained optimization: 
\begin{align}
\hThetan \in \argmin_{\Theta \in \Reals^{\dimOne \times \dimTwo}} \cL_{n}(\Theta) + \lambda_n \cR(\Theta),
\label{eq:GRISMe_unconstrained}
\end{align}
where $\lambda_n$ is regularization penalty. As we show in our main results,  $\hThetan$ is close (in Frobenius norm) to $\ThetaStar$ when $\lambda_n$ is appropriately chosen. We note that addition of the regularization preserves convexity of the optimization in \eqref{eq:GRISMe_unconstrained} due to convexity of norms. In fact, depending on the exact form of $\cR(\Theta)$, the function $\cL_{n}(\Theta) + \lambda_n \cR(\Theta)$ could also be strongly convex. In that case, the run-time to obtain an $\epsilon$-optimal solution of $\hThetan$ would scale as $O(\log(1/\epsilon))$.

\begin{remark}
	While the optimization in \eqref{eq:GRISMe_unconstrained}  is a convex minimization problem, computing the loss function as well as its gradient requires centering of the natural statistics. If the natural statistics are polynomials or trigonometric, centering them should be relatively straightforward (since the integrals would have closed-form expressions). In other cases, centering them may not be polynomial-time and one might require an assumption of computationally efficient sampling or that obtaining approximately random samples of $\rvbx$ is computationally efficient as in \cite{DiakonikolasKSS2021}.
\end{remark}

\section{Analysis and Main results}
\label{sec:main results}
In this section, we provide our analysis and main results. First, we focus on the connection between our method and the MLE of $\DensityDifferencefun$. Next, we draw the connections between our method and the Bregman score (i.e., score-based methods) as well as the {surrogate likelihood} (i.e., non-parametric density estimation).
Then, we establish consistency and asymptotic normality of our estimator. Finally, we provide non-asymptotic finite sample guarantees to recover $\ThetaStar$.
\subsection{Connection with MLE of $\DensityDifferencefun$}
First, we establish a connection between the population version of the loss function in \eqref{eq:sampleGISMe} (denoted by $\cL(\Theta)$) and the KL-divergence of the uniform density on $\cX$ with respect to $\DensityDifference$. Then, using \textit{minimality} of the exponential family, we show that this KL-divergence as well as $\cL(\Theta)$ are minimized if and only if $\Theta = \ThetaStar$. This shows that the proposed loss function is proper and provides an intuitive justification for the estimator in \eqref{eq:GRISMe}. 

For any $\Theta \in \parameterSet$, we have
$$\cL(\Theta)  = \Expectation \Big[\exp\big( -\big\langle \big\langle \Theta, \varPhi(\rvbx) \big\rangle \big\rangle \big)\Big].$$
The following result shows that the population version of the estimator in \eqref{eq:GRISMe} is equivalent to the maximum likelihood estimator of $\DensityDifference$. 
\begin{restatable}{theorem}{theoremKLD}\label{theorem:GRISMe-KLD}
	With $\infdiv{\cdot}{\cdot}$ representing the KL-divergence, 
	\begin{align}\label{eq:thm.1}
	\argmin_{\Theta \in \parameterSet} \cL(\Theta) =
	\argmin_{\Theta \in \parameterSet}
	\infdiv{\Uniformfun}{\DensityDifferencefun}. 
	\end{align}
	Further, the true parameter $\ThetaStar$ is the unique minimizer of $\cL(\Theta)$.
\end{restatable}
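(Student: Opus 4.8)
The plan is to reduce both optimization objectives to explicit functions of the log-partition function and show they coincide up to an additive constant independent of $\Theta$. Write $Z(\Theta) \coloneqq \int_{\cX} \exp(\langle\langle \Theta, \Phi(\svbx)\rangle\rangle)\, d\svbx$ for the partition function and set $\mu_0 \coloneqq \Expectation_{\Uniform}[\Phi(\rvbx)]$, so that $\varPhi(\cdot) = \Phi(\cdot) - \mu_0$ by Definition \ref{def:css}. First I would compute $\cL(\Theta)$ in closed form. Since the expectation defining $\cL$ is taken under the true density $\DensityXTruefun$, substituting $f_{\rvbx}(\svbx;\ThetaStar) = \exp(\langle\langle \ThetaStar, \Phi(\svbx)\rangle\rangle)/Z(\ThetaStar)$ and pulling the constant factor $\exp(\langle\langle \Theta, \mu_0\rangle\rangle)$ out of the integral gives
\begin{align}
\cL(\Theta) = \frac{e^{\langle\langle \Theta, \mu_0\rangle\rangle}}{Z(\ThetaStar)}\int_{\cX} \exp\big(\langle\langle \ThetaStar - \Theta, \Phi(\svbx)\rangle\rangle\big)\, d\svbx = \frac{e^{\langle\langle \Theta, \mu_0\rangle\rangle}\, Z(\ThetaStar - \Theta)}{Z(\ThetaStar)},
\end{align}
so that $\log \cL(\Theta) = \langle\langle \Theta, \mu_0\rangle\rangle + \log Z(\ThetaStar - \Theta) - \log Z(\ThetaStar)$.

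Next I would expand the divergence. Using $\Uniform(\svbx) = 1/|\cX|$ with $|\cX|$ the (finite) Lebesgue volume of $\cX$, and the same substitution for $\DensityDifference$, a direct computation gives
\begin{align}
\infdiv{\Uniformfun}{\DensityDifferencefun} = -\log|\cX| - \langle\langle \ThetaStar - \Theta, \mu_0\rangle\rangle + \log Z(\ThetaStar - \Theta).
\end{align}
Subtracting the two displays, every $\Theta$-dependent term cancels, leaving
\begin{align}
\infdiv{\Uniformfun}{\DensityDifferencefun} - \log \cL(\Theta) = \log Z(\ThetaStar) - \log|\cX| - \langle\langle \ThetaStar, \mu_0\rangle\rangle,
\end{align}
which is constant in $\Theta$. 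Since $t \mapsto \log t$ is strictly increasing, the two objectives have the same minimizers over $\parameterSet$, establishing the first claim of the theorem.

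For uniqueness I would invoke non-negativity of the KL-divergence: $\infdiv{\Uniformfun}{\DensityDifferencefun} \geq 0$ with equality if and only if $\Uniform$ and $\DensityDifferencefun$ agree almost everywhere on $\cX$. The latter forces $\langle\langle \ThetaStar - \Theta, \Phi(\svbx)\rangle\rangle$ to be constant in $\svbx$ on $\cX$. By \emph{minimality} of the exponential family, the only matrix $\bU$ for which $\langle\langle \bU, \Phi(\svbx)\rangle\rangle$ is constant for all $\svbx \in \cX$ is $\bU = \boldsymbol{0}$; hence $\Theta = \ThetaStar$. Because $\cR(\ThetaStar) \leq r$ guarantees $\ThetaStar \in \parameterSet$, the value $0$ is attained within $\parameterSet$ and only at $\ThetaStar$, so $\ThetaStar$ is the unique minimizer of the divergence, and therefore of $\cL$ by the equivalence above.

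The computations themselves are routine; the step deserving care is the uniqueness argument, where the appeal to minimality is essential—without it the map $\Theta \mapsto \DensityDifferencefun$ need not be injective and the minimizer could fail to be unique. I would also flag that the only regularity invoked is finiteness of $Z(\cdot)$ and $|\cX|$, both guaranteed by boundedness of $\cX$ together with Assumptions \ref{bounds_parameter}--\ref{bounds_statistics_maximum}, which justify interchanging expectation and the manipulations above.
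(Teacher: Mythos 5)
Your proof is correct and follows essentially the same route as the paper's: both reduce to showing that $\infdiv{\Uniformfun}{\DensityDifferencefun}$ equals $\log \cL(\Theta)$ plus a $\Theta$-independent constant (your version makes the cancellation explicit via the partition function $Z(\ThetaStar-\Theta)$, while the paper uses the equivalent change-of-measure identity $\DensityDifferencefun = \DensityXTruefun\exp(-\langle\langle\Theta,\varPhi(\cdot)\rangle\rangle)/\cL(\Theta)$), and both conclude uniqueness from non-negativity of the KL-divergence together with minimality of the family. Your explicit flagging that the centering $\Expectation_{\Uniform}[\varPhi(\rvbx)]=0$ is what makes the $\Theta$-dependent inner-product terms cancel is exactly the point the paper's step $(d)$ relies on.
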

\begin{proof}[Proof of Theorem \ref{theorem:GRISMe-KLD}]
	First, we express $\DensityDifferencefun$ in terms of $\cL(\Theta)$. We have
	\begin{align}
	\DensityDifference & = 
	\frac{\exp\big(  \big\langle  \big\langle \ThetaStar - \Theta, \Phi(\svbx) \big\rangle \big\rangle \big)}{\int_{\svby \in \cX} \exp\big(  \big\langle  \big\langle \ThetaStar - \Theta, \Phi(\svby) \big\rangle \big\rangle \big) d\svby} \\
	& \stackrel{(a)}{=}  \frac{\exp\big(  \big\langle  \big\langle \ThetaStar - \Theta, \varPhi(\svbx) \big\rangle \big\rangle \big)}{\int_{\svby \in \cX} \exp\big(  \big\langle  \big\langle \ThetaStar - \Theta, \varPhi(\svby) \big\rangle \big\rangle \big) d\svby} \\
	& \stackrel{(b)}{=} \frac{\DensityXTrue \exp\big(  -\big\langle \big\langle \Theta, \varPhi(\svbx) \big\rangle \big\rangle\big) }{\int_{\svby \in \cX}   \DensityXTrue \exp\big(  -\big\langle \big\langle \Theta, \varPhi(\svby) \big\rangle \big\rangle\big) d\svby}\\
	& \stackrel{(c)}{=} \frac{\DensityXTrue \exp\big(  -\big\langle \big\langle \Theta, \varPhi(\svbx) \big\rangle \big\rangle\big) }{\cL(\Theta)},  \label{eq:re-express difference density}
	\end{align}
	where $(a)$ follows because $\Expectation_{\Uniform} [\Phi(\rvbx)]$ is a constant, $(b)$ follows by dividing the numerator and the denominator by the constant $\int_{\svby \in \cX} \exp\big(  \big\langle \big\langle  \ThetaStar, \varPhi(\svby) \big\rangle \big\rangle \big) d\svby$ and using the definition of $\DensityXTrue$, and
	$(c)$ follows from definition of $\cL(\Theta)$. Now, we simplify the KL-divergence between $\Uniformfun$ and $\DensityDifferencefun$. 
	\begin{align}
	\infdiv{\Uniformfun}{\DensityDifferencefun}  & \stackrel{(a)}{=} \Expectation_{\Uniform} \bigg[ \log\bigg( \dfrac{\Uniform(\cdot) \cL(\Theta) }{\DensityXTruefun \exp\big(  -\big\langle \big\langle \Theta, \varPhi(\cdot) \big\rangle \big\rangle \big) }\bigg) \bigg] \\
	& \stackrel{(b)}{=} \Expectation_{\Uniform} \bigg[ \log\bigg( \dfrac{\Uniform(\cdot)  }{\DensityXTruefun  }\bigg) \bigg] + \Expectation_{\Uniform} \Big[ \Big\langle \Big\langle  \Theta, \varPhi(\cdot) \Big\rangle \Big\rangle \Big] +  \log \cL(\Theta) \\
	& \stackrel{(c)}{=}  \Expectation_{\Uniform} \bigg[ \log\bigg( \dfrac{\Uniform(\cdot)  }{\DensityXTruefun  }\bigg) \bigg] +
	\Big\langle \Big\langle  \Theta, \Expectation_{\Uniform} [ \varPhi(\cdot) ] \Big\rangle \Big\rangle +  \log \cL(\Theta) \\
	& \stackrel{(d)}{=} \Expectation_{\Uniform} \bigg[ \log\bigg( \dfrac{\Uniform(\cdot)  }{\DensityXTruefun  }\bigg) \bigg] + \log \cL(\Theta),
	\end{align}
	where $(a)$ follows from \eqref{eq:re-express difference density} and the definition of KL-divergence, $(b)$ follows because $\log(abc) = \log a + \log b + \log c$ and $\cL(\Theta)$ is a constant, $(c)$ follows from the linearity of the expectation and $(d)$ follows because $\Expectation_{\Uniform} [ \varPhi(\rvbx) ] = 0$ from Definition \ref{def:css}.
	Observing that the first term in the above equation is not dependent on $\Theta$, we can write
	\begin{align}
	\argmin_{\Theta \in \parameterSet}
	\infdiv{\Uniformfun}{\DensityDifferencefun} %
	& = \argmin_{\Theta \in \parameterSet} \log \cL(\Theta) \\
	& \stackrel{(a)}{=} \argmin_{\Theta \in \parameterSet}  \cL(\Theta),
	\end{align}
	where $(a)$ follows because $\log$ is a monotonic function. Further, the KL-divergence between $\Uniformfun$ and $\DensityDifferencefun$ is uniqely minimized when $\Uniformfun = \DensityDifferencefun$. Recall that the natural statistic are such that the exponential family is minimal. Therefore, $\Uniformfun = \DensityDifferencefun$ if and only if $\Theta = \ThetaStar$. Thus, $\ThetaStar \in \argmin_{\Theta \in \parameterSet}  \cL(\Theta)$,
	and it is a unique minimizer of $\cL(\Theta)$.
\end{proof}
\subsection{Connections to Bregman score}
\label{subsec:bregman_score}
We now show that the estimator in \eqref{eq:GRISMe} is an \textit{optimal score estimator} (to be defined). More specifically, we show that $\exp\big( -\big\langle \big\langle \Theta, \varPhi(\svbx) \big\rangle  \big\rangle \big)$ is equal to a Bregman scoring rule (which we make precise below).

Let $\qThetafun$ be a measurable function parameterized by $\Theta \in \parameterSet$. Let $\rvbx$ be a random variable whose distribution is proportional to $\qThetafuntrue$ for $\ThetaStar \in \parameterSet$. A scoring rule $S(\svbx, \qThetaX)$ \citep{GneitingR2007} is a numerical score assigned to a realization $\svbx$ of $\rvbx$ and it measures the quality of the predictive function $\qThetafun$. Given $n$ samples $\svbx^{(1)}, \cdots, \svbx^{(n)}$ of $\rvbx$, an \textit{optimal score estimator} $\hTheta_{n, S}$ (of $\ThetaStar$) associated with the scoring rule $S$ is
\begin{align}
\hTheta_{n, S} \in \argmin_{\Theta \in \parameterSet} \frac{1}{n} \sum_{t=1}^{n} S(\svbx^{(t)}, \qThetaXt). \label{eq:optimal_score_estimator}
\end{align}
Further, a scoring rule is said to be \textit{a proper scoring rule} if the expected score $\Expectation[S(\rvbx, \qThetaRVX)]$ is uniquely minimized when $\Theta = \ThetaStar$. 

The (separable) Bregman scoring rule \citep{GrunwaldD2004} associated with a convex and differentiable function $\psi : \Reals^+ \rightarrow \Reals$ and a baseline measure $\rho$ is given by
\begin{align}
\hspace{-6mm} S_{\psi, \rho}(\svbx, \qThetaX) =  -\psi'(\qThetaX) -  \Expectation_{\rho} [ \psi(\qThetaX) - \qThetaX \psi'(\qThetaX) ]. \label{eq:Bregman}
\end{align}
\begin{restatable}{proposition}{propBregman}\label{prop:bregman_score}
	Let $\psi(\cdot) = -\log (\cdot)$, $\rho(\cdot) = \Uniformfun$, and $\qThetafun = \exp\big( \big\langle  \big\langle \Theta, \varPhi(\cdot) \big\rangle \big\rangle \big)$. Then,
	\begin{align}
	S_{\psi, \rho}(\cdot, \qThetafun) & = \exp\Big( -\Big\langle \Big\langle \Theta, \varPhi(\cdot) \Big\rangle \Big\rangle \Big) - 1 ~ \mbox{and}  \\
	\hTheta_{n, S_{\psi, \rho}} & =  \hTheta_n.
	\end{align}
	Further, the scoring rule $S_{\psi, \rho}(\cdot, \qThetafun)$ is \textit{proper}.
\end{restatable}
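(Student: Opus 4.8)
The plan is to verify the three claims in sequence, each a short computation that leans on the centering of the natural statistics and on Theorem~\ref{theorem:GRISMe-KLD}, which has already done the substantive work.

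First I would evaluate the Bregman scoring rule in \eqref{eq:Bregman} for the prescribed choices. With $\psi(u) = -\log u$ we have $\psi'(u) = -1/u$, so the leading term is $-\psi'(\qThetafun) = 1/\qThetafun = \exp(-\langle\langle \Theta, \varPhi(\cdot)\rangle\rangle)$. For the baseline term I would compute $\psi(\qThetafun) - \qThetafun\,\psi'(\qThetafun) = -\log \qThetafun + 1 = -\langle\langle \Theta, \varPhi(\cdot)\rangle\rangle + 1$ and take its expectation under $\rho = \Uniformfun$. The crucial simplification is that $\Expectation_{\Uniform}[\varPhi(\rvbx)] = 0$ by Definition~\ref{def:css}, so the linear piece vanishes and the baseline expectation collapses to the constant $1$. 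This yields $S_{\psi,\rho}(\cdot, \qThetafun) = \exp(-\langle\langle \Theta, \varPhi(\cdot)\rangle\rangle) - 1$, which is the first identity.

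Second, I would substitute this expression into the definition of the optimal score estimator \eqref{eq:optimal_score_estimator}. The empirical average becomes $\tfrac{1}{n}\sum_t \exp(-\langle\langle \Theta, \varPhi(\svbx^{(t)})\rangle\rangle) - 1 = \cL_n(\Theta) - 1$, and since the additive constant $-1$ does not affect the minimizer, the $\argmin_{\Theta\in\parameterSet}$ of this quantity coincides with $\argmin_{\Theta\in\parameterSet}\cL_n(\Theta) = \hThetan$, giving $\hTheta_{n, S_{\psi,\rho}} = \hThetan$. For properness I would take the expectation of the score under the true law $\DensityXTruefun$: since $S_{\psi,\rho}(\rvbx, \qThetaRVX) = \exp(-\langle\langle \Theta, \varPhi(\rvbx)\rangle\rangle) - 1$, its expectation is exactly $\cL(\Theta) - 1$, the population loss shifted by a constant. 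Theorem~\ref{theorem:GRISMe-KLD} already establishes that $\cL(\Theta)$ is uniquely minimized at $\Theta = \ThetaStar$, so $\cL(\Theta) - 1$ is too, and the scoring rule is proper.

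The computations are entirely routine; the one point requiring care is the baseline-term evaluation, where the centering property is precisely what renders $\Expectation_\rho[\psi(\qThetafun) - \qThetafun\,\psi'(\qThetafun)]$ independent of $\Theta$ (indeed equal to a constant), so that the score reduces cleanly to the loss. In effect, all the real content — the uniqueness of the minimizer — is inherited from Theorem~\ref{theorem:GRISMe-KLD}, and I anticipate no genuine obstacle beyond this bookkeeping.
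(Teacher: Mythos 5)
Your proposal is correct and follows essentially the same route as the paper's proof: evaluate the Bregman score for $\psi = -\log$, use the centering $\Expectation_{\Uniform}[\varPhi(\rvbx)] = 0$ to kill the $\Theta$-dependent part of the baseline term, identify the resulting empirical score with $\cL_n(\Theta)$ up to an additive constant, and inherit properness from the unique-minimizer statement in Theorem~\ref{theorem:GRISMe-KLD}. The only difference is cosmetic bookkeeping in how the baseline term is expanded.
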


\begin{proof}[Proof of Proposition \ref{prop:bregman_score}]
	Choosing $\psi(\cdot) = - \log (\cdot)$, the Bregman scoring rule in \eqref{eq:Bregman} simplifies to
	\begin{align}
	S_{\psi, \rho}(\svbx, \qThetaX)  = 1 / \qThetaX + \Expectation_{\rho} [ \log(\qThetaX) - 1 ] .
	\end{align}
	Now, letting $\qThetafun = \exp\Big(  \Big\langle \Big\langle \Theta, \varPhi(\cdot) \Big\rangle \Big\rangle \Big)$ and $\rho(\cdot) = \Uniformfun$, we have
	\begin{align}
	S_{\psi, \rho}(\svbx, \qThetaX) & = \exp\Big(  - \Big\langle \Big\langle \Theta, \varPhi(\svbx) \Big\rangle \Big\rangle \Big) + \Expectation_{\Uniform} \Big[ \Big\langle \Big\langle \Theta, \varPhi(\rvbx) \Big\rangle \Big\rangle  \Big] - 1\\
	& \stackrel{(a)}{=} \exp\Big(  - \Big\langle \Big\langle  \Theta, \varPhi(\svbx) \Big\rangle \Big\rangle  \Big) + \Big\langle \Big\langle  \Theta, \Expectation_{\Uniform} [ \varPhi(\rvbx) ] \Big\rangle \Big\rangle  - 1 \\
	& \stackrel{(b)}{=} \exp\Big(  - \Big\langle \Big\langle  \Theta, \varPhi(\svbx) \Big\rangle \Big\rangle  \Big) - 1, \label{eq:Bregman_equi}
	\end{align}
	where $(a)$ follows because the integral of a sum is equal to the sum of the integrals and $(b)$ follows because $\Expectation_{\Uniform} [ \varPhi(\rvbx) ] = 0$ from Definition \ref{def:css}. The equivalence between $\hTheta_{n, S_{\psi, \rho}}$ and $\hTheta_n$ follows by plugging \eqref{eq:Bregman_equi} in \eqref{eq:optimal_score_estimator}. Further, from Theorem \ref{theorem:GRISMe-KLD}, $\ThetaStar$ is the unique minimizer of $\Expectation\big[\exp\big(  - \big\langle \big\langle \Theta, \varPhi(\svbx) \big\rangle \big\rangle \big) \big]$. Therefore, the scoring rule $S_{\psi, \rho}(\cdot, \qThetafun)$ is \textit{proper}.
\end{proof}

We note that by letting $\qThetafun = \exp\big( \big\langle \big\langle \Theta, \varPhi(\cdot) \big\rangle \big\rangle \big)$, we inherently make use of the extension of Bregman scoring rule beyond the probability simplex \citep{PainskyW2019}. Also, the fact that $S_{\psi, \rho}(\cdot, \qThetafun)$ is \textit{proper} scoring rule should not be surprising since all Bregman scoring rules are \textit{proper} \citep{GneitingR2007}.

\subsection{Connections to non-parametric density estimation}
\label{subsec:non_para_density_estimation}
We now show that the loss function proposed in \eqref{eq:GRISMe} is an instance of the {surrogate} likelihood proposed by \cite{JeonL2006}. As described in Section \ref{subsec_related_work}, to bypass the computational hardness of the maximum likelihood estimation, \cite{JeonL2006} proposed using the following {surrogate} likelihood for learning non-parametric densities of the form $f_{\rvbx}(\svbx) \propto \exp(\eta(\svbx))$:
\begin{align}
\cJ_n(\eta) =  \frac{1}{n} \sum_{t = 1}^{n} \exp(-\eta(\svbx^{(t)})) +  \int_{\svbx \in \cX} \rho(\svbx) \eta(\svbx) d\svbx, \label{eq:jeonlin}
\end{align}
where $\rho(\cdot)$ is some fixed known density with the same support $\cX$ as the unknown density $f_{\rvbx}(\svbx)$. The following proposition shows that the loss function $\cL_{n}(\Theta)$ is equivalent to $\cJ_n(\eta)$ for a specific choice of $\rho(\cdot)$.

\begin{restatable}{proposition}{propsurrogate}\label{prop:surrogate}
	Let $\rho(\cdot) = \Uniformfun$ and $\eta(\cdot) = \big\langle \big\langle \Theta, \varPhi(\cdot) \big\rangle \big\rangle $. Then, $\cJ_n(\eta) = \cL_{n}(\Theta) $. 
\end{restatable}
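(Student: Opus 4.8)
The plan is to substitute the stated choices directly into the definition of $\cJ_n(\eta)$ in \eqref{eq:jeonlin} and check that the expression collapses to $\cL_{n}(\Theta)$ in \eqref{eq:sampleGISMe}. With $\eta(\cdot) = \langle\langle \Theta, \varPhi(\cdot) \rangle\rangle$, the empirical term $\frac{1}{n}\sum_{t=1}^{n}\exp(-\eta(\svbx^{(t)}))$ becomes $\frac{1}{n}\sum_{t=1}^{n}\exp(-\langle\langle \Theta, \varPhi(\svbx^{(t)}) \rangle\rangle)$, which is precisely $\cL_{n}(\Theta)$. Hence the only thing left to verify is that the integral term in $\cJ_n(\eta)$ vanishes.

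For the integral term, taking $\rho(\cdot) = \Uniformfun$ rewrites $\int_{\svbx \in \cX}\rho(\svbx)\eta(\svbx)\,d\svbx$ as the expectation $\Expectation_{\Uniform}[\langle\langle \Theta, \varPhi(\rvbx)\rangle\rangle]$. I would then pull the deterministic parameter $\Theta$ outside the expectation using linearity of the matrix inner product (the same step labeled $(c)$ in Theorem \ref{theorem:GRISMe-KLD} and $(a)$ in Proposition \ref{prop:bregman_score}), obtaining $\langle\langle \Theta, \Expectation_{\Uniform}[\varPhi(\rvbx)]\rangle\rangle$. By the centering in Definition \ref{def:css} we have $\Expectation_{\Uniform}[\varPhi(\rvbx)] = \Expectation_{\Uniform}[\Phi(\rvbx)] - \Expectation_{\Uniform}[\Phi(\rvbx)] = 0$, so this term is zero. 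Combining the two observations yields $\cJ_n(\eta) = \cL_{n}(\Theta)$.

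This proof is essentially a one-line substitution, so there is no genuine obstacle; the only point worth flagging is that it reuses the identical mechanism already invoked twice above, namely that the centering of the natural statistics is exactly what forces $\int_{\svbx \in \cX}\Uniform(\svbx)\langle\langle \Theta, \varPhi(\svbx)\rangle\rangle\,d\svbx$ to vanish. This is the conceptual reason the surrogate likelihood of \cite{JeonL2006} specializes to our loss function under the uniform baseline $\rho = \Uniformfun$: the penalty-free linear term of \eqref{eq:jeonlin} is annihilated precisely by our choice of centering, leaving only the exponential average.
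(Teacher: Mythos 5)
Your proposal is correct and follows exactly the paper's own proof: substitute the stated choices into \eqref{eq:jeonlin}, identify the empirical sum with $\cL_{n}(\Theta)$, pull $\Theta$ out of the integral by linearity, and kill the remaining term via $\Expectation_{\Uniform}[\varPhi(\rvbx)] = 0$ from Definition \ref{def:css}. No gaps; the concluding remark about the centering being precisely what annihilates the linear term of the surrogate likelihood is a nice observation consistent with the paper's framing.
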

\begin{proof}[Proof of Proposition \ref{prop:surrogate}]
	Plugging in $\rho(\cdot) = \Uniformfun$ and $\eta(\cdot) = \big\langle \big\langle \Theta, \varPhi(\cdot) \big\rangle \big\rangle  $ in \eqref{eq:jeonlin}, we have
	\begin{align}
	\cJ_n(\eta) & =  \frac{1}{n} \sum_{t = 1}^{n} \exp\Big(-\Big\langle \Big\langle \Theta, \varPhi(\svbx^{(t)}) \Big\rangle \Big\rangle \Big)  +  \int_{\svbx \in \cX} \Uniform(\svbx) \Big\langle \Big\langle \Theta, \varPhi(\svbx) \Big\rangle \Big\rangle d\svbx \\
	& \stackrel{(a)}{=}  \cL_{n}(\Theta)  + \Big\langle \Big\langle \Theta, \int_{\svbx \in \cX} \Uniform(\svbx) \varPhi(\svbx) d\svbx \Big\rangle \Big\rangle  \\
	& \stackrel{(b)}{=}  \cL_{n}(\Theta), 
	\end{align}
	where $(a)$ follows from \eqref{eq:sampleGISMe} and because the integral of a sum is equal to the sum of the integrals and $(b)$ follows because $\Expectation_{\Uniform} [ \varPhi(\rvbx) ] = 0$ from Definition \ref{def:css}.
\end{proof}

\subsection{Consistency and Normality}
\label{subsec:consistency_and_normality}
We establish consistency and asymptotic normality of the proposed estimator $\hTheta_n$ by invoking the asymptotic theory of M-estimation. We emphasize that, from Theorem \ref{theorem:GRISMe-KLD}, the population version of $\hTheta_n$ is equivalent to the maximum likelihood estimate of $\DensityDifferencefun$ and not $\DensityXfun$. Moreover, there is no clear connection between $\hTheta_n$ and the finite sample maximum likelihood estimate of $\DensityXfun$ or $\DensityDifferencefun$. Therefore, we cannot invoke the asymptotic theory of MLE to show consistency and asymptotic normality of $\hTheta_n$.

Let $A(\ThetaStar)$ denote the covariance matrix of $\vect\big(\varPhi(\rvbx)\exp\big( -\big\langle \big\langle \ThetaStar, \varPhi(\rvbx) \big\rangle \big\rangle \big)\big)$. Let $B(\ThetaStar)$ denote the cross-covariance matrix of $\vect(\varPhi(\rvbx))$ and $\vect(\varPhi(\rvbx) \exp\big( -\big\langle \big\langle \ThetaStar, \varPhi(\rvbx) \big\rangle \big\rangle \big))$. Let ${\cal N}(\bm{\mu}, \bm{\Sigma})$ represent the multi-variate Gaussian distribution with mean vector $\bm{\mu}$ and covariance matrix $\bm{\Sigma}$.
\begin{restatable}{theorem}{theoremconsistencynormality}
	\label{thm:consistency_normality}
	Let Assumptions \ref{bounds_parameter}, \ref{bounds_statistics}, and \ref{bounds_statistics_maximum} be satisfied.
	Let $\hThetan$ be a solution of \eqref{eq:GRISMe}. Then, as $n\to \infty$, $\hThetan \stackrel{p}{\to} \ThetaStar$. Further, assuming  $\ThetaStar \in \text{interior}(\parameterSet)$ and $B(\ThetaStar)$ is invertible, we have
	$\sqrt{n} \times \vect( \hThetan - \ThetaStar ) \stackrel{d}{\to} {\cal N}(\vect(\boldsymbol{0}),B(\ThetaStar)^{-1} A(\ThetaStar) B(\ThetaStar)^{-1})$.
\end{restatable}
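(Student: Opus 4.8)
The plan is to invoke the standard asymptotic theory for M-estimation (as developed, e.g., in van der Vaart's \emph{Asymptotic Statistics}), since $\hThetan$ is defined as a minimizer of the sample average $\cL_n(\Theta) = \tfrac{1}{n}\sum_{t=1}^n m_{\Theta}(\svbx^{(t)})$ where $m_{\Theta}(\svbx) \coloneqq \exp(-\langle\langle \Theta, \varPhi(\svbx)\rangle\rangle)$. The population objective is $\cL(\Theta) = \Expectation[m_{\Theta}(\rvbx)]$, and Theorem~\ref{theorem:GRISMe-KLD} already supplies the crucial identification condition: $\ThetaStar$ is the \emph{unique} minimizer of $\cL$ over $\parameterSet$. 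The argument then splits into the two asserted conclusions, consistency and asymptotic normality.

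For \textbf{consistency}, the plan is to verify the hypotheses of an M-estimation consistency theorem (e.g.\ a uniform-law-of-large-numbers argument). First I would establish a uniform law of large numbers, $\sup_{\Theta \in \parameterSet} |\cL_n(\Theta) - \cL(\Theta)| \stackrel{p}{\to} 0$. This follows because $\parameterSet$ is a compact convex set (bounded via Assumption~\ref{bounds_parameter}, closed), the map $\Theta \mapsto m_{\Theta}(\svbx)$ is continuous, and the integrand is uniformly bounded: by Assumptions~\ref{bounds_parameter} and~\ref{bounds_statistics} together with the dual-norm inequality $|\langle\langle \Theta, \varPhi(\svbx)\rangle\rangle| \leq \cR(\Theta)\,\cR^*(\varPhi(\svbx))$, the exponent is bounded uniformly over $\Theta \in \parameterSet$ and $\svbx \in \cX$, so $m_{\Theta}$ admits a fixed integrable (indeed bounded) envelope. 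Compactness plus continuity plus a dominating envelope gives the uniform convergence, and combined with the unique identification of $\ThetaStar$ from Theorem~\ref{theorem:GRISMe-KLD}, the argmax/argmin continuity theorem yields $\hThetan \stackrel{p}{\to} \ThetaStar$.

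For \textbf{asymptotic normality}, I would assume $\ThetaStar \in \text{interior}(\parameterSet)$ so that the first-order optimality condition $\nabla \cL_n(\hThetan) = \boldsymbol{0}$ holds for large $n$, reducing the problem to a Z-estimation (estimating-equation) analysis. Writing $\Psi_n(\Theta) = \nabla \cL_n(\Theta)$, the score is $\Psi_n(\Theta) = -\tfrac{1}{n}\sum_t \vect(\varPhi(\svbx^{(t)}))\,m_{\Theta}(\svbx^{(t)})$. Working in vectorized form, a Taylor expansion of $\Psi_n$ around $\ThetaStar$ gives the usual sandwich: $\sqrt{n}\,\vect(\hThetan - \ThetaStar) = -[\nabla\Psi_n(\tilde\Theta)]^{-1}\sqrt{n}\,\Psi_n(\ThetaStar)$ for some $\tilde\Theta$ on the segment between $\hThetan$ and $\ThetaStar$. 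The central limit theorem applies to $\sqrt{n}\,\Psi_n(\ThetaStar)$ because $\Expectation[\Psi_n(\ThetaStar)] = \nabla\cL(\ThetaStar) = \boldsymbol{0}$ (first-order stationarity at the population minimizer) and the summands have finite covariance $A(\ThetaStar)$ by the boundedness established above; the Hessian $\nabla\Psi_n(\tilde\Theta)$ converges in probability to $B(\ThetaStar)$, the population Hessian of $\cL$ at $\ThetaStar$ (one checks by direct differentiation that this Hessian equals the stated cross-covariance of $\vect(\varPhi(\rvbx))$ and $\vect(\varPhi(\rvbx)m_{\ThetaStar}(\rvbx))$). Slutsky's theorem then delivers $\sqrt{n}\,\vect(\hThetan - \ThetaStar) \stackrel{d}{\to} {\cal N}(\vect(\boldsymbol{0}), B(\ThetaStar)^{-1}A(\ThetaStar)B(\ThetaStar)^{-1})$, with invertibility of $B(\ThetaStar)$ assumed.

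The main obstacle I anticipate is not any single analytic estimate — the boundedness of the integrand makes the regularity conditions (continuity, dominating envelope, finite moments for the CLT) essentially automatic — but rather the bookkeeping of correctly identifying the Hessian of $\cL$ with the cross-covariance matrix $B(\ThetaStar)$. The subtle point is that differentiating $\Expectation[m_{\Theta}(\rvbx)]$ twice produces $\Expectation[\vect(\varPhi)\vect(\varPhi)^\top m_{\Theta}]$, and one must use the stationarity condition $\Expectation[\vect(\varPhi)m_{\ThetaStar}] = \boldsymbol{0}$ at $\Theta = \ThetaStar$ to re-express this outer-product expectation as the \emph{cross-covariance} between $\vect(\varPhi(\rvbx))$ and $\vect(\varPhi(\rvbx)m_{\ThetaStar}(\rvbx))$ rather than a raw second moment; verifying this matching carefully is where the care lies. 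A secondary technical point is justifying the interchange of differentiation and expectation (dominated convergence), which again follows from the uniform boundedness of $\varPhi$ and the exponent over $\parameterSet$.
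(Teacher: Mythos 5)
Your proposal is correct and follows essentially the same route as the paper: the paper invokes Amemiya's M-estimation theorems (4.1.1 for consistency via compactness, a bounded envelope, Jennrich's uniform LLN, and the unique minimizer from Theorem \ref{theorem:GRISMe-KLD}; 4.1.3 for normality via the CLT for the score and convergence of the Hessian to $B(\ThetaStar)$), which is the same sandwich argument you sketch. Your key observation --- that the population Hessian at $\ThetaStar$ coincides with the cross-covariance $B(\ThetaStar)$ precisely because $\Expectation[\vect(\varPhi(\rvbx))\exp(-\langle\langle\ThetaStar,\varPhi(\rvbx)\rangle\rangle)]=\boldsymbol{0}$ --- is exactly the step the paper carries out via Lemma \ref{lemma:zero_expectation}.
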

The proof of Theorem \ref{thm:consistency_normality} can be found in Appendix \ref{appendix:proof of thm:consistency_normality}. The proof is based on two key observations : (a) $\hThetan$ is an $M$-estimator (which follows from \eqref{eq:sampleGISMe}) and (b) $\cL(\Theta)$ is uniquely minimized at $\ThetaStar$ (which follows from Theorem \ref{theorem:GRISMe-KLD}).

\subsection{Finite Sample Guarantees}
To provide the non-asymptotic guarantees for recovering $\ThetaStar$, we require the following assumption on the smallest eigenvalue of the autocorrelation matrix of $\vect(\varPhi(\rvbx))$.
\begin{assumption}\label{lambdamin}(Positive eigenvalue of the autocorrelation matrix of $\varPhi$.)
	Let $\lambdaMin$ denote the minimum eigenvalue of $\Expectation_{\rvbx}[\vect(\varPhi(\rvbx)) \vect(\varPhi(\rvbx))^T]$. We assume $\lambdaMin$ is strictly positive i.e., $\lambdaMin > 0$.
\end{assumption}

Theorem \ref{thm:finite_sample} below shows that, with enough samples, the $\epsilon$-optimal solution of $\hTheta_n$ 
is close to the true natural parameter in the tensor norm with high probability. 
\begin{restatable}{theorem}{theoremfinite}\label{thm:finite_sample}
	Let Assumptions \ref{bounds_parameter}, \ref{bounds_statistics}, \ref{bounds_statistics_maximum}, and \ref{lambdamin} be satisfied. Define
	\begin{align}
	\gamma(\dimOne, \dimTwo) & = \max_{\bM \in 4\parameterSet \setminus \{\boldsymbol{0}\}} \frac{\|\bM\|_{1,1}}{\|\bM\|_{\mathrm{F}} }, \\
	 g(\dimOne, \dimTwo) & = \max_{\bM \in \Reals^{\dimOne \times \dimTwo} \setminus \{\boldsymbol{0}\}} \frac{\cR^*(\bM)}{\|\bM\|_{\max} }, ~~ \text{and} \\ 
	\Psi(\dimOne, \dimTwo) & = \max_{\bM \in \parameterSet \setminus \{\boldsymbol{0}\}} \frac{\cR(\bM)}{\|\bM\|_{\mathrm{F}}}. 
	\label{eq_defn_thm}
	\end{align}
	Let $\hThetan$ be a minimizer of the optimization in \eqref{eq:GRISMe_unconstrained}. Then, for any $\alpha >0$ and $\delta \in (0,1)$, we have $\|\hThetan - \ThetaStar\|_{\mathrm{F}} \leq \alpha$
	with probability at least $1-\delta$ as long as
	\begin{align}
	n = \!\Omega\bigg(\!\frac{ \max\big\{  \gamma^4(\dimOne, \dimTwo), g^2(\dimOne, \dimTwo) \Psi^2(\dimOne, \dimTwo) \big\} }{\alpha^2 \lambdaMin^2}\!\log\!\Big(\!\frac{4\dimOne^2 \dimTwo^2}{\delta}\Big) \!\!\bigg).
	\label{eq:sample_complexity}
	\end{align}
\end{restatable}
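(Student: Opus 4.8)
The plan is to analyze $\hThetan$ within the framework of regularized $M$-estimation: writing $\Delta \coloneqq \hThetan - \ThetaStar$, I would establish $\|\Delta\|_{\mathrm{F}} \le \alpha$ by combining (i) a high-probability upper bound on the dual norm of the empirical gradient at the truth with (ii) a restricted strong convexity (RSC) lower bound on the curvature of $\cL_{n}$ along $\Delta$. The starting point is that the population gradient vanishes at $\ThetaStar$: since centering forces $\int_{\cX}\varPhi(\svbx)\,d\svbx = \boldsymbol{0}$ (Definition \ref{def:css}), a direct computation against $\DensityXTrue$ gives $\nabla\cL(\ThetaStar) = -\Expectation[\varPhi(\rvbx)\exp(-\langle\langle\ThetaStar,\varPhi(\rvbx)\rangle\rangle)] = \boldsymbol{0}$, consistent with $\ThetaStar$ being the unique minimizer of $\cL$ from Theorem \ref{theorem:GRISMe-KLD}. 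Throughout, Assumptions \ref{bounds_parameter}, \ref{bounds_statistics}, and \ref{bounds_statistics_maximum} supply the uniform bounds $|\langle\langle\ThetaStar,\varPhi(\svbx)\rangle\rangle| \le 2\br\bd$ and $\|\varPhi(\svbx)\|_{\max}\le 2\phiMax$, so the summands defining $\cL_{n}$, its gradient, and its Hessian are all bounded by explicit constants depending only on $\br,\bd,\phiMax$.

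First I would bound the gradient. Each entry of $\nabla\cL_{n}(\ThetaStar) = -\tfrac1n\sum_t \varPhi(\svbx^{(t)})\exp(-\langle\langle\ThetaStar,\varPhi(\svbx^{(t)})\rangle\rangle)$ is an average of i.i.d., mean-zero terms bounded by $2\phiMax e^{2\br\bd}$. Hoeffding's inequality with a union bound over the $\dimOne\dimTwo$ entries yields $\|\nabla\cL_{n}(\ThetaStar)\|_{\max} \le C_1\phiMax e^{2\br\bd}\sqrt{\log(\dimOne\dimTwo/\delta)/n}$ with probability at least $1-\delta/2$; passing to the dual norm via the definition of $g(\dimOne,\dimTwo)$ gives $\cR^*(\nabla\cL_{n}(\ThetaStar)) \le g(\dimOne,\dimTwo)\|\nabla\cL_{n}(\ThetaStar)\|_{\max}$. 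I would then choose $\lambda_n = 2\cR^*(\nabla\cL_{n}(\ThetaStar))$, the standard condition ensuring that the error $\Delta$ lies in the cone associated with $\cR$; in particular this localizes $\Delta$ to $4\parameterSet$ (for the constrained form $\cR(\Delta)\le 2\br$ is immediate, and the regularized form gives $\cR(\Delta)\le 4\br$ via the cone inequality), which is exactly why $\gamma(\dimOne,\dimTwo)$ is defined over $4\parameterSet$.

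The heart of the argument is the RSC bound. Using the elementary inequality $e^{-z}\ge 1 - z + \tfrac12 e^{-M}z^2$, valid for $|z|\le M$, with $z = \langle\langle\Delta,\varPhi(\svbx^{(t)})\rangle\rangle$ (so that $M \coloneqq 2\bd\,\cR(\Delta) = O(\br\bd)$ on the cone) and factoring out $\exp(-\langle\langle\ThetaStar,\varPhi(\svbx^{(t)})\rangle\rangle)\ge e^{-2\br\bd}$, the first-order remainder of $\cL_{n}$ is lower bounded by a multiple of the empirical quadratic form,
\begin{align}
\cL_{n}(\ThetaStar+\Delta) - \cL_{n}(\ThetaStar) - \langle\langle\nabla\cL_{n}(\ThetaStar),\Delta\rangle\rangle \ \ge\ \frac{e^{-M}\,e^{-2\br\bd}}{2}\,\vect(\Delta)^{\top}\hat{S}_n\,\vect(\Delta),
\end{align}
where $\hat{S}_n = \tfrac1n\sum_t \vect(\varPhi(\svbx^{(t)}))\vect(\varPhi(\svbx^{(t)}))^{\top}$. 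I would then transfer this to the population matrix $S = \Expectation[\vect(\varPhi(\rvbx))\vect(\varPhi(\rvbx))^{\top}]$, whose minimum eigenvalue is $\lambdaMin>0$ (Assumption \ref{lambdamin}): by Hölder, $\vect(\Delta)^{\top}(\hat S_n - S)\vect(\Delta) \le \|\hat S_n - S\|_{\max}\|\Delta\|_{1,1}^2 \le \|\hat S_n - S\|_{\max}\,\gamma^2(\dimOne,\dimTwo)\|\Delta\|_{\mathrm{F}}^2$ on $4\parameterSet$. An entrywise Hoeffding bound with a union bound over the $\dimOne^2\dimTwo^2$ entries of $\hat S_n - S$ (the dominant union bound, producing the $\log(4\dimOne^2\dimTwo^2/\delta)$ factor) shows that when $n = \Omega\big(\gamma^4(\dimOne,\dimTwo)\lambdaMin^{-2}\log(\dimOne^2\dimTwo^2/\delta)\big)$ we have $\|\hat S_n - S\|_{\max} \le \lambdaMin/(2\gamma^2(\dimOne,\dimTwo))$, hence $\vect(\Delta)^{\top}\hat S_n\vect(\Delta)\ge \tfrac12\lambdaMin\|\Delta\|_{\mathrm{F}}^2$. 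This gives RSC with curvature $\kappa = \tfrac14 e^{-M}e^{-2\br\bd}\lambdaMin$.

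Finally I would combine the pieces. Optimality of $\hThetan$ for \eqref{eq:GRISMe_unconstrained} gives $\cL_{n}(\ThetaStar+\Delta)-\cL_{n}(\ThetaStar)\le \lambda_n(\cR(\ThetaStar)-\cR(\hThetan))\le \lambda_n\cR(\Delta)$, while $\langle\langle\nabla\cL_{n}(\ThetaStar),\Delta\rangle\rangle\ge -\cR^*(\nabla\cL_{n}(\ThetaStar))\cR(\Delta)\ge -\tfrac{\lambda_n}{2}\cR(\Delta)$; together with the RSC bound this yields $\kappa\|\Delta\|_{\mathrm{F}}^2 \le \tfrac{3}{2}\lambda_n\cR(\Delta) \le \tfrac{3}{2}\lambda_n\Psi(\dimOne,\dimTwo)\|\Delta\|_{\mathrm{F}}$, where the last step uses $\cR(\Delta)\le\Psi(\dimOne,\dimTwo)\|\Delta\|_{\mathrm{F}}$. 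Dividing through and substituting the bound on $\lambda_n$ gives $\|\Delta\|_{\mathrm{F}}\le C_2\, g(\dimOne,\dimTwo)\Psi(\dimOne,\dimTwo)\lambdaMin^{-1}\sqrt{\log(\dimOne\dimTwo/\delta)/n}$, so enforcing $\|\Delta\|_{\mathrm{F}}\le\alpha$ produces the term $g^2\Psi^2/(\alpha^2\lambdaMin^2)$ in \eqref{eq:sample_complexity}; taking the maximum with the RSC requirement $\gamma^4/(\alpha^2\lambdaMin^2)$ and the common $\log(4\dimOne^2\dimTwo^2/\delta)$ factor completes the bound. The main obstacle I anticipate is the RSC step, which must simultaneously (a) localize $\Delta$ so that $M$ — and hence the curvature factor $e^{-M}$ — does not degrade (handled by the cone forcing $\Delta\in 4\parameterSet$), and (b) control the deviation of $\vect(\Delta)^{\top}\hat S_n\vect(\Delta)$ from its mean uniformly over the cone, which is precisely where the $\gamma^4$ dependence and the dominant $\dimOne^2\dimTwo^2$ union bound enter.
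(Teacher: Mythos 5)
Your proposal is correct and follows essentially the same route as the paper's proof: a gradient concentration bound in the $\cR^*$-dual norm (Hoeffding plus a union bound over the $\dimOne\dimTwo$ entries, using the centering to get mean-zero summands), a restricted strong convexity bound obtained by lower-bounding the Taylor remainder via an elementary exponential inequality and transferring the empirical Gram matrix to its population counterpart through an entrywise Hoeffding bound over $\dimOne^2\dimTwo^2$ entries with the $\gamma^2\|\Delta\|_{\mathrm{F}}^2$ Hölder step, and a final combination in the style of \cite{negahban2012unified} (which the paper cites as Corollary 1 while you re-derive the basic inequality directly, and whose Lemma 1 supplies the $\Delta\in 4\parameterSet$ localization you also use).
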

The proof of Theorem \ref{thm:finite_sample} can be found in Appendix \ref{appendix_proof_finite_sample}. The proof builds on techniques in \cite{negahban2012unified, VuffrayMLC2016, VuffrayML2019, ShahSW2021} and is based on two key properties of the loss function $\cL_n(\Theta)$: (a) with enough samples, the loss function $\cL_n(\Theta)$ naturally obeys the restricted strong convexity with high probability (see Proposition \ref{prop:rsc_GISMe}) and (b) with enough samples, $\| \nabla \cL_n(\ThetaStar) \|_{\max}$ is bounded with high probability (see Proposition \ref{prop:gradient-concentration-GISMe}.). See the proof for the dependence of the sample complexity on $\br, \bd$ and $\phiMax$ as well as for the choice of the regularization penalty $\lambda_n$. 

The sample complexity in \eqref{eq:sample_complexity} depends on the functions $\gamma(\dimOne, \dimTwo)$, $g(\dimOne, \dimTwo)$, and $\Psi(\dimOne, \dimTwo)$. Now, we provide bounds on these functions. First, it is easy to see that $\gamma(\dimOne, \dimTwo) \leq \sqrt{\dimOne \dimTwo}$ for any $\parameterSet$. Second, in Appendix \ref{appendix:dual_norm}, we show that 
\begin{enumerate}
	\item $g(\dimOne, \dimTwo) \leq \dimOne^{\frac{1}{p}}  \dimTwo^{\frac{1}{q}}$ whenever the dual norm $\cR^*$ is either the entry-wise $L_{p,q}$ norm $(p,q \geq 1)$,
	\item $g(\dimOne, \dimTwo) \leq \sqrt{\min\{\dimOne, \dimTwo\}\dimOne \dimTwo}$ whenever the dual norm $\cR^*$ is the Schatten $p$-norm $(p \geq 1)$, and 
	\item $g(\dimOne, \dimTwo) \leq \dimOne^{\frac{1}{p}}  \dimTwo^{1-\frac{1}{p}}$ whenever the dual norm $\cR^*$ is the operator $p-$norm $(p \geq 1)$. 
\end{enumerate}
Lastly, it is easy to see that $\Psi(\dimOne, \dimTwo) \leq 1$ whenever the norm $\cR$ is either the entry-wise $L_{p,p}$ norm $(p \geq 2)$ or the Schatten $p$-norm $(p \geq 2)$. In these scenarios, the sample complexity in \eqref{eq:sample_complexity} can be simplified to $n = \Omega\big(\frac{ \dimOne^2 \dimTwo^2 }{\alpha^2}\log\big(\frac{4\dimOne^2 \dimTwo^2}{\delta}\big) \big)$. For other norms, bounds (not necessarily tight) similar to $g(\dimOne, \dimTwo)$ can also be obtained on $\Psi(\dimOne, \dimTwo)$ by noting that $\|\bM\|_{\max} \leq \|\bM\|_{\mathrm{F}}$. In these scenarios, the sample complexity in \eqref{eq:sample_complexity} can be simplified to $n = \Omega\big( \frac{\mathrm{poly} (\dimOne \dimTwo)}{\alpha^2} \log \big(\frac{1}{\delta}\big)\big)$.

The following corollary (stated without proof) provides a formal version of our finite sample guarantees for the examples in Section \ref{sec:prob_formulation}, i.e., when the underlying norm $\cR$ is either the maximum norm, the Frobenius norm, or the nuclear norm. We note that Theorem \ref{thm:finite_sample} could be specialized for other norms as well.
\begin{corollary}\label{coro_main}
	Let Assumptions \ref{bounds_parameter}, \ref{bounds_statistics}, \ref{bounds_statistics_maximum}, and \ref{lambdamin} be satisfied. Suppose $\cR(\Theta) = \|\Theta\|_{\max}$, $\cR(\Theta) = \|\Theta\|_{\mathrm{F}}$, or $\cR(\Theta) = \|\Theta\|_{\star}$. Let $\hThetan$ be a minimizer of the optimization in \eqref{eq:GRISMe_unconstrained}. Then, for any $\alpha >0$ and $\delta \in (0,1)$, we have $\|\hThetan - \ThetaStar\|_{\mathrm{F}} \leq \alpha$ with probability at least $1-\delta$ as long as
	\begin{align}
	n = \Omega\bigg(\frac{ \dimOne^2 \dimTwo^2 }{\alpha^2}\log\Big(\frac{4\dimOne^2 \dimTwo^2}{\delta}\Big) \bigg).
	\end{align}
\end{corollary}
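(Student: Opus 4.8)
The plan is to obtain Corollary \ref{coro_main} as a direct specialization of Theorem \ref{thm:finite_sample}: for each of the three prescribed norms I would evaluate (or upper bound) the three geometry-dependent quantities $\gamma(\dimOne,\dimTwo)$, $g(\dimOne,\dimTwo)$, and $\Psi(\dimOne,\dimTwo)$, and then verify that the quantity $\max\{\gamma^4(\dimOne,\dimTwo),\, g^2(\dimOne,\dimTwo)\Psi^2(\dimOne,\dimTwo)\}$ that controls the sample complexity in \eqref{eq:sample_complexity} is $O(\dimOne^2\dimTwo^2)$ in every case. Since $\lambdaMin>0$ is a fixed constant under Assumption \ref{lambdamin}, the factor $\lambdaMin^{-2}$ is absorbed into the $\Omega(\cdot)$, and the stated bound follows.

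First I would record the norm-independent bound $\gamma(\dimOne,\dimTwo)\le\sqrt{\dimOne\dimTwo}$, which holds because $\|\bM\|_{1,1}\le\sqrt{\dimOne\dimTwo}\,\|\bM\|_{\mathrm{F}}$ for every $\bM\in\Reals^{\dimOne\times\dimTwo}$ (Cauchy--Schwarz over the $\dimOne\dimTwo$ entries). Hence $\gamma^4(\dimOne,\dimTwo)\le\dimOne^2\dimTwo^2$, which already matches the target order; the remaining work is to confirm that $g^2\Psi^2$ never exceeds this.

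I would then proceed norm by norm, first identifying the dual norm $\cR^*$ entering Assumption \ref{bounds_statistics}. For $\cR=\|\cdot\|_{\max}$ the dual is $\cR^*=\|\cdot\|_{1,1}$ (the entry-wise $L_{p,q}$ norm with $p=q=1$), so the Appendix \ref{appendix:dual_norm} bound gives $g\le\dimOne\dimTwo$, while $\|\bM\|_{\max}\le\|\bM\|_{\mathrm{F}}$ yields $\Psi\le1$; thus $g^2\Psi^2\le\dimOne^2\dimTwo^2$. For $\cR=\|\cdot\|_{\mathrm{F}}$ the Frobenius norm is self-dual, so $g=\max_{\bM}\|\bM\|_{\mathrm{F}}/\|\bM\|_{\max}=\sqrt{\dimOne\dimTwo}$ and $\Psi=1$, giving $g^2\Psi^2=\dimOne\dimTwo$. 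For $\cR=\|\cdot\|_{\star}$ the dual is the spectral norm $\cR^*=\|\cdot\|$, and $\|\bM\|\le\|\bM\|_{\mathrm{F}}\le\sqrt{\dimOne\dimTwo}\,\|\bM\|_{\max}$ gives $g\le\sqrt{\dimOne\dimTwo}$; for $\Psi$ I would relate the Schatten-$1$ and Schatten-$2$ norms by Cauchy--Schwarz over the at most $\min\{\dimOne,\dimTwo\}$ nonzero singular values, $\|\bM\|_{\star}=\sum_i\sigma_i(\bM)\le\sqrt{\min\{\dimOne,\dimTwo\}}\,\|\bM\|_{\mathrm{F}}$, so $\Psi\le\sqrt{\min\{\dimOne,\dimTwo\}}$ and therefore $g^2\Psi^2\le\dimOne\dimTwo\min\{\dimOne,\dimTwo\}$.

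Finally I would substitute $\max\{\gamma^4,g^2\Psi^2\}\le\dimOne^2\dimTwo^2$ into \eqref{eq:sample_complexity}. There is no genuine obstacle: the corollary is a bookkeeping consequence of Theorem \ref{thm:finite_sample}. The only step needing a moment's care is the nuclear-norm case, where $g^2\Psi^2$ is of order $\dimOne\dimTwo\min\{\dimOne,\dimTwo\}$ --- the term closest to the binding $\gamma^4$ contribution --- and one must confirm it still sits below $\dimOne^2\dimTwo^2$; this holds because $\min\{\dimOne,\dimTwo\}\le\dimOne\dimTwo$ for positive integers $\dimOne,\dimTwo$. I would also sanity-check that these dual-norm identifications are exactly the instantiations of Assumption \ref{bounds_statistics} used in the examples, so that the boundedness constant $\bd$ (and hence the suppressed dependence on $\br,\bd,\phiMax$ in \eqref{eq:sample_complexity}) is finite for all three norms.
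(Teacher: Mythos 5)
Your proposal is correct and follows exactly the route the paper intends: Corollary~\ref{coro_main} is stated without a separate proof precisely because it is the specialization of Theorem~\ref{thm:finite_sample} obtained by bounding $\gamma(\dimOne,\dimTwo)$, $g(\dimOne,\dimTwo)$, and $\Psi(\dimOne,\dimTwo)$ for the three norms and substituting into \eqref{eq:sample_complexity}, which is what you do. The one place where you go slightly beyond the paper's preceding discussion is the nuclear-norm case: the paper's generic remark (bound $\Psi$ like $g$ via $\|\bM\|_{\max}\leq\|\bM\|_{\mathrm{F}}$) would only yield $\Psi\leq\sqrt{\min\{\dimOne,\dimTwo\}\dimOne\dimTwo}$ and hence a worse-than-$\dimOne^2\dimTwo^2$ product $g^2\Psi^2$, whereas your Cauchy--Schwarz bound over the singular values, $\|\bM\|_{\star}\leq\sqrt{\min\{\dimOne,\dimTwo\}}\,\|\bM\|_{\mathrm{F}}$, gives $\Psi\leq\sqrt{\min\{\dimOne,\dimTwo\}}$ and is exactly the sharper estimate needed to land the stated $\dimOne^2\dimTwo^2$ rate for $\cR=\|\cdot\|_{\star}$.
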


\begin{remark}\label{remark_mrf}
The result in Theorem \ref{thm:finite_sample} could also be specialized for learning node-wise-sparse pairwise MRFs. Under this setting, as is typical, the machinery developed could be applied to the node-conditional distribution of $\rvx_i$, i.e., the conditional distribution in \eqref{eq:conditional}, for every $i \in [p]$, one at a time. Then, with $\dimOne = p$ and $\dimTwo = 1$, the parameter set $\parameterSet$ is defined as the set of all $r$-sparse $p$-dimensional vectors where $r$ is assumed to be a constant. To enforce the sparsity, $\cR$ is chosen to be the $\ell_1$ norm resulting in $\cR^*$ being equal to the maximum norm. Then, it is easy to see that $\gamma(p)$ and $\Psi(p)$ are $O(\sqrt{r})$, and $g(p) = 1$. As a result, the sample complexity in \eqref{eq:sample_complexity} can be simplified to $n = \Omega\big( \frac{1}{\alpha^2} \log \big(\frac{p}{\sqrt{\delta}}\big)\big)$. The logarithmic dependence on $p$ is consistent with the literature on binary, discrete, Gaussian as well as continuous MRFs \citep{VuffrayMLC2016,VuffrayML2019,DaskalakisGTZ2018,ShahSW2021}. The $1/\alpha^2$ dependence on the error tolerance is consistent with the literature on binary and Gaussian MRFs \citep{VuffrayMLC2016,DaskalakisGTZ2018} and is an improvement over the literature on discrete and continuous MRFs \citep{VuffrayML2019,ShahSW2021}.
\end{remark}

\section{Experiments}
\label{sec:experiments}
In this section, we demonstrate our experimental findings on the three examples from Section \ref{sec:prob_formulation} using synthetic data. We consider the Frobenius norm constraint in the first example, the maximum norm constraint in the second example, and the nuclear norm constraint (which is a relaxation of the low-rank constraint) in the third example.

\subsection{Frobenius norm constraint}
\label{subsec:frob_norm_exp}
We consider the random vector $\rvbx$ belonging to $\cX$ for two different choices of $\cX$: (a) $\cX = \cB_1(b)$ and (b) $\cX =  [-b,b]^p$ for some $b \in \Reals_+$. We let $\dimOne = \dimTwo = p$ and let the natural statistics be polynomials of degree two i.e., $\Phi_{ij} = x_ix_j$ for all $ i \in [p], j \in [p]$. Summarizing, the family of distributions considered is as follows:
\begin{align}
\DensityX \propto \exp\Big(  \sum_{i,j \in [p]} \Theta_{ij} x_i x_j\Big), \label{eq:densityTG}
\end{align}
where $\rvbx \in \cX$ and $\|\Theta\|_{\mathrm{F}} \leq r$ for some constant $r$. As in Section \ref{sec:prob_formulation}, let $\DensityXTrue$ denote the true distribution of $\rvbx$ and $\ThetaStar$ denote the true natural parameter of interest such that $\|\ThetaStar\|_{\mathrm{F}} \leq r$. Further, we have $\parameterSet = \{ \Theta \in \Reals^{p \times p} : \|\Theta\|_{\mathrm{F}} \leq r\}$.

For our first choice of $\cX$, i.e., $\cX = \cB_1(b)$, the  family of distributions in \eqref{eq:densityTG} satisfies Assumption \ref{bounds_statistics} with $d = (1+b)^2$ and Assumption \ref{bounds_statistics_maximum} with $\phiMax = \max\{1,b^2\}$. For our second choice of $\cX$, i.e., $\cX =  [-b,b]^p$, the  family of distributions in \eqref{eq:densityTG} satisfies Assumption  \ref{bounds_statistics_maximum} with $\phiMax = \max\{1,b^2\}$. In contrast, the constant $\bd$ in Assumption \ref{bounds_statistics} scales quadratically in $p$. As a result, the analytical bound on the  sample complexity from Corollary \ref{coro_main} suggests an exponential dependence on $p$ (see equation \eqref{eq:sample_comp_dependence} in the proof of Theorem \ref{thm:finite_sample} for the dependence on $d$). However, we see that the empirical bound on the sample complexity scales only polynomially in $p$, i.e., it is in agreement with Corollary \ref{coro_main}, suggesting that Assumption \ref{bounds_statistics} may not be a strict requirement in practice. For brevity, we only provide results with $\cX = \cB_1(b)$. The results with $\cX = [-b,b]^p$ are analogous.

We choose $b = 1$ and let the true natural parameter $\ThetaStar$ be as follows: 
\begin{align}
\ThetaStar_{ij}  = 
     \begin{cases}
        \frac{1}{\sqrt{p}} ~~ \text{if} ~~ i = 1, ~~ \text{or} ~~ j = 1, ~~ \text{or} ~~ i = j, \\
         0 ~~ \text{otherwise.}
     \end{cases}
 \label{eq:theta_frob}
\end{align}
This choice ensures that $\|\ThetaStar\|_{\mathrm{F}} \leq r$, i.e., $r = 1$. Further, the choice also ensures that the maximum node-degree in the underlying undirected graphical model is $p$ and the total number of edges scale linearly with $p$. This is easy to see as the undirected graph is a star graph with $\rvx_1$ as the center of the star. We note that this is in contrast with the literature on node-wise-sparse pairwise MRFs (see Section \ref{subsec_related_work}) where the total number of edges scale linearly with $p$ but the maximum node-degree does not depend on $p$. Therefore, the techniques developed to learn the parameters of such MRFs are not useful here. We also note that the $1/\sqrt{p}$ scaling in \eqref{eq:theta_frob} is consistent with the  Sherington-Kirkpatrick model \citep{sherrington1975solvable}. Finally, to draw high-quality samples from \eqref{eq:densityTG}, we employ brute-force sampling using fine discretization with 100 bins per dimension. \\

 \begin{figure}[ht!]
	\centering
	\begin{tabular}{cc}
		\includegraphics[width=0.4\linewidth]{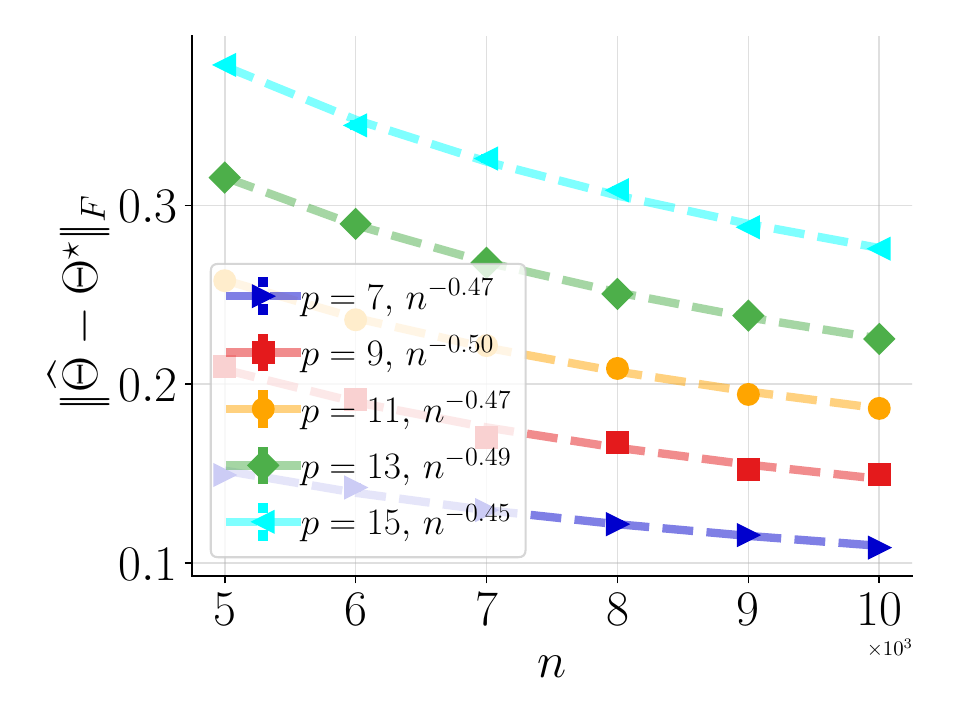} & \includegraphics[width=0.4\linewidth]{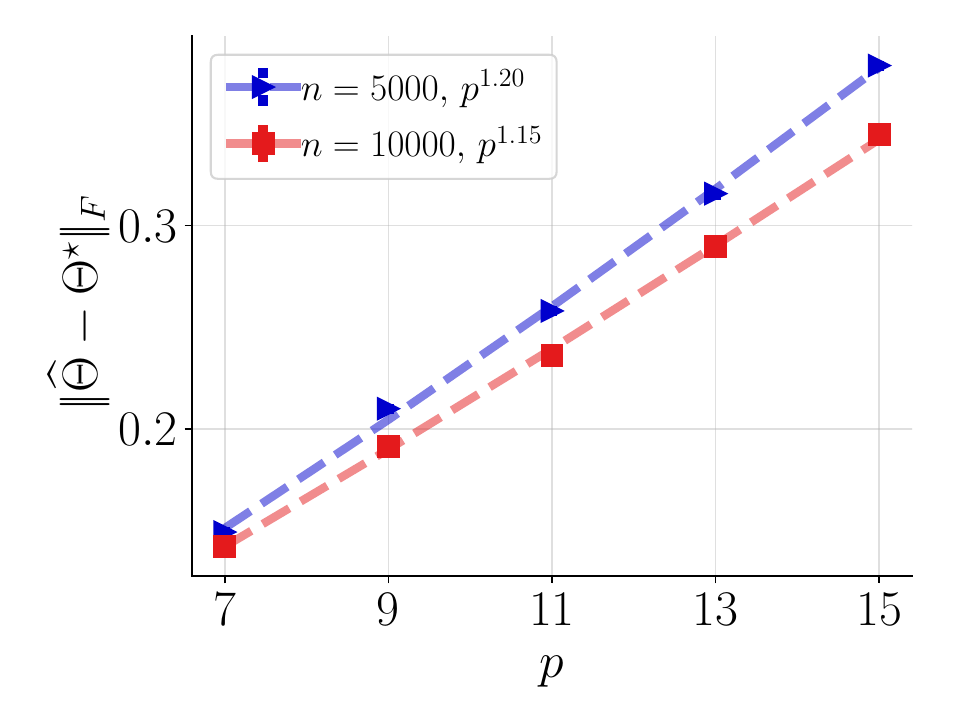} \\
		(a) & (b) \\ 
		\includegraphics[width=0.4\linewidth]{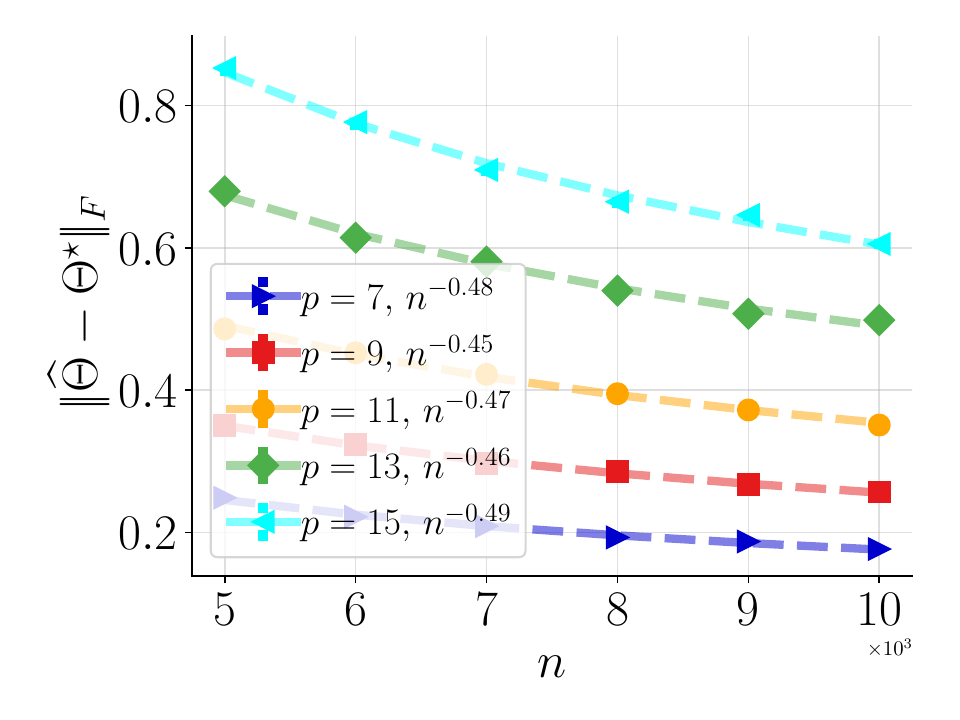} & \includegraphics[width=0.4\linewidth]{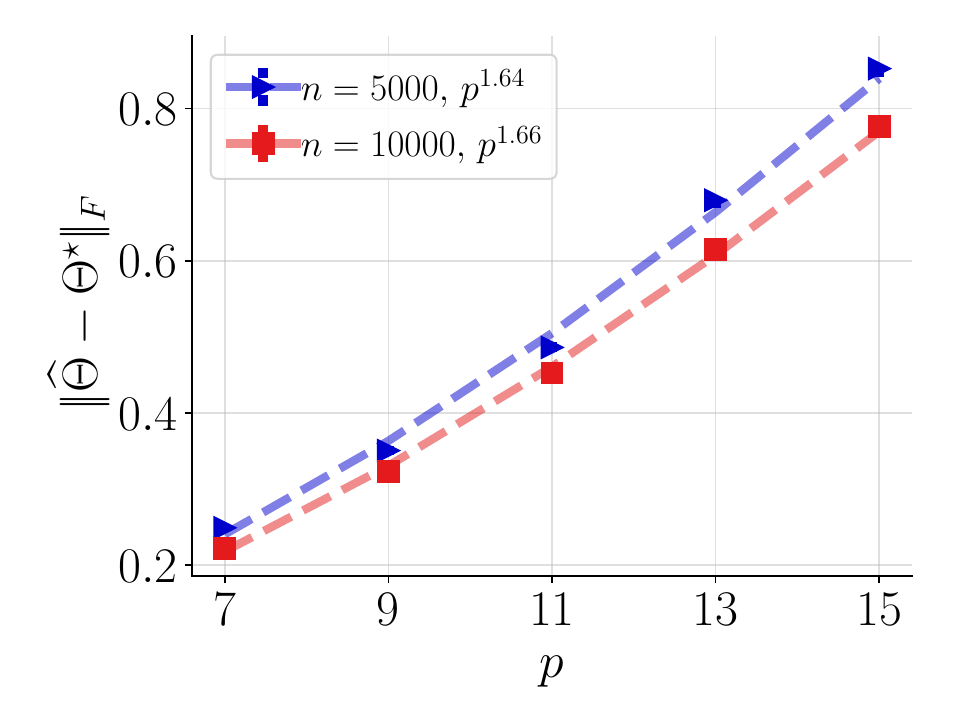} \\
		(c) & (d) \\ 
		\includegraphics[width=0.4\linewidth]{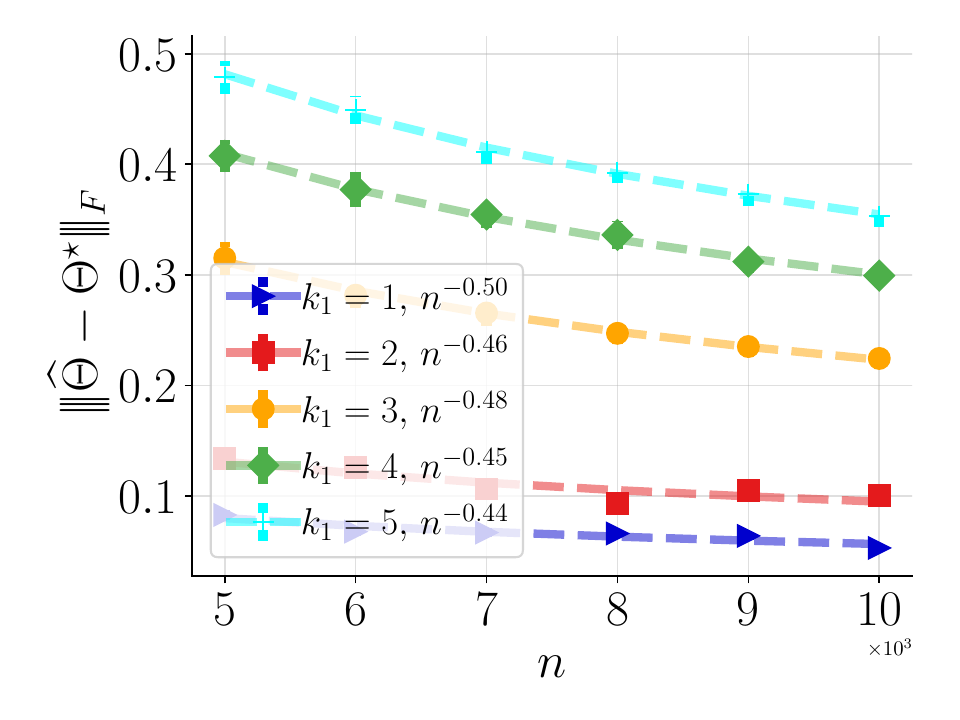} & \includegraphics[width=0.4\linewidth]{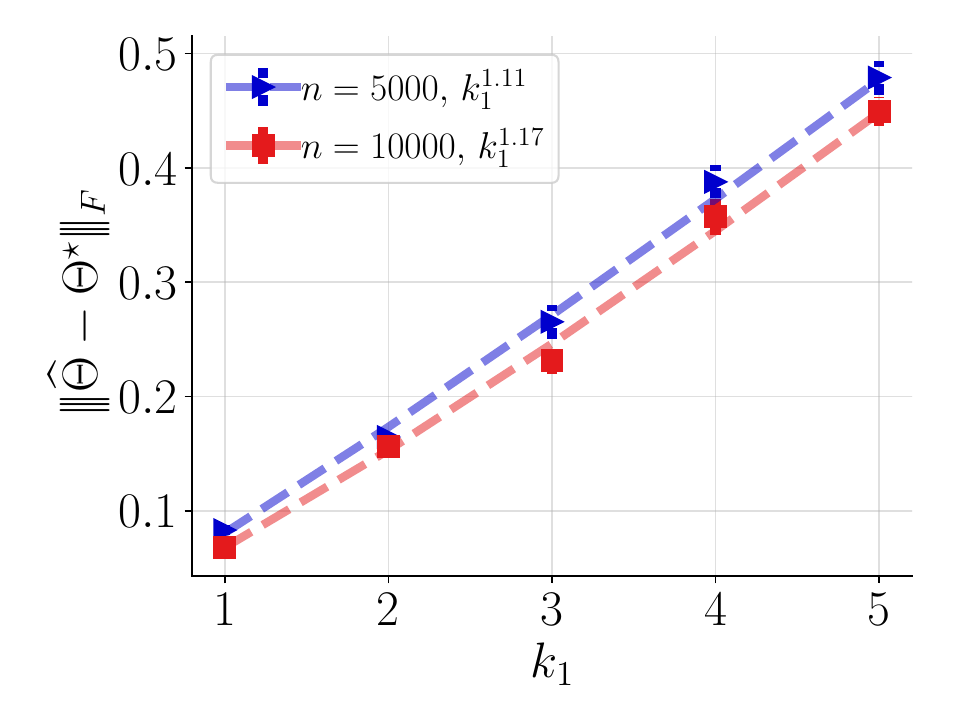} \\
		(e) & (f)
	\end{tabular}
	\caption{
		Error scaling for Frobenius norm constraint (in (a) and (b)), maximum norm constraint (in (c) and (d)), and the nuclear norm constraint (in (e) and (f)) with number of samples $n$ for various $p$ or $\dimOne$ (in (a), (c), (e)) and with number of parameters $p$ or $\dimOne$ for various $n$ (in (b), (d), (f)).}
	\label{fig:vs_p_n}
\end{figure}     

\noindent \textbf{Results.} In Figure \ref{fig:vs_p_n}(a), we plot the scaling of errors in our estimates for $\ThetaStar$, i.e., $\|\hThetan - \ThetaStar\|_{\mathrm{F}}$ as a function of number of sample $n$ for various $p$. Likewise, we present how the error scales as the dimension $p$ grows for various $n$ in Figure \ref{fig:vs_p_n}(b). We plot the averaged error across 100 independent trials along with $\pm 1$ standard error (the standard error is too small to be visible in our results). To help see the error scaling, we display the best linear fit (fitted on the log-log scale) and mention an empirical decay rate in the legend based on the slope of that fit, e.g., for a slope of $-0.47$ for estimating $\ThetaStar$ when $p = 7$, we report an empirical rate of $n^{-0.47}$ for the averaged error.

We observe that the error $\|\hThetan - \ThetaStar\|_{\mathrm{F}}$ admits a scaling of  between $n^{-0.50}$ and $n^{-0.45}$ for various $p$ and between $p^{1.15}$ and $p^{1.20}$. These empirical rates indicate a parametric error rate of $\sqrt{\frac{p^2 \log p}{n}}$ for $\|\hThetan - \ThetaStar\|_{\mathrm{F}}$. Our analysis from  Corollary \ref{coro_main} suggested the rate of $\sqrt{\frac{p^4 \log p}{n}}$. Thus, we see that the empirical dependence on $n$ is consistent with the theoretical whereas the dependence on $p = \sqrt{\dimOne \dimTwo}$ is an improvement.

\subsection{Maximum norm constraint}
\label{subsec:max_norm_exp}
We consider the same setup as in Section \ref{subsec:frob_norm_exp} except for the choices of $\ThetaStar$ and $\cX$. We 
let the true natural parameter $\ThetaStar$ be as follows:
\begin{align}
\ThetaStar_{ij} & = -0.1 - 0.4 \times \Indicator(i = j)  - 0.2 \times \Indicator(|i - j| = 1)
-0.1 \times \Indicator(|i - j| = 2). \label{eq:theta_truncated_gaussian}
\end{align}
This choice ensures that $\|\ThetaStar\|_{\max} \leq r$, i.e., $r = 0.5$. Further, the choice also ensures that the maximum node-degree in the underlying undirected graphical model is $p$ and the total number of edges scale quadractically with $p$. This is easy to see as the undirected graph is a complete graph as every entry of $\ThetaStar$ is non-zero. Further, we also note the choice of $\ThetaStar$ ensures that the inverse of $\ThetaStar$ is positive semi-definite. Therefore, the distribution of $\rvbx$ is equivalent to a Gaussian with mean equal to zero and inverse covariance equal to $\ThetaStar$ but the support truncated to $\cX$. Then, we use the \textit{tmvtnorm} package \citep{WilhelmM2010} to generate samples from \eqref{eq:densityTG} via rejection sampling, and choose $\cX = [-b,b]^p$ (with $b = 1$) for a higher acceptance probability.\\

\noindent \textbf{Results.} In Figure \ref{fig:vs_p_n}(c), we plot the scaling of errors in our estimates for $\ThetaStar$, i.e., $\|\hThetan - \ThetaStar\|_{\mathrm{F}}$ as a function of number of sample $n$ for various $p$. Likewise, we present how the error scales as the dimension $p$ grows for various $n$ in Figure \ref{fig:vs_p_n}(d). We observe that the error $\|\hThetan - \ThetaStar\|_{\mathrm{F}}$ admits a scaling of  between $n^{-0.49}$ and $n^{-0.45}$ for various $p$ and between $p^{1.64}$ and $p^{1.66}$. These empirical rates indicate a parametric error rate of $\sqrt{\frac{p^3 \log p}{n}}$ for $\|\hThetan - \ThetaStar\|_{\mathrm{F}}$. Our analysis from  Corollary \ref{coro_main} suggested the rate of $\sqrt{\frac{p^4 \log p}{n}}$. Thus, we see that the empirical dependence on $n$ is consistent with the theoretical wheares the dependence on $p = \sqrt{\dimOne \dimTwo}$ is an improvement.

\subsection{Nuclear norm constraint}
\label{subsec:low_rank_norm_exp}
As stated in Section \ref{subsec:examples}, we let $\cX = \cB_2(b)$. We consider the dimension $p = 2$ and vary the number of natural parameters $k = \dimOne \times \dimTwo$. We let the natural statistics be polynomials of varying degree i.e., $\Phi_{ij} = x_1^ix_2^j$ for all $i \in [\dimOne], j \in [\dimTwo]$. Summarizing, the family of distribution considered is as follows:
\begin{align}
\DensityX \propto \exp\Big(\sum_{i \in [\dimOne], j \in [\dimTwo]} \Theta_{i,j} x_1^i x_2^j \Big), \label{eq:densityLR1}
\end{align}
where $\rvbx \in \cX$ and $\| \Theta^{(1)} \|_{\star}= r$ for some constant $r$. As in Section \ref{sec:prob_formulation}, let $\DensityXTrue$ denote the true distribution of $\rvbx$ and $\ThetaStar$ denote the true natural parameter of interest such that $\|\ThetaStar\|_{\star} \leq r$. Further, we have $\parameterSet = \{ \Theta \in \Reals^{\dimOne \times \dimTwo} : \|\Theta\|_{\star} \leq r\}$.\\

In our experiments, we fix $\dimTwo = 2$ and vary $\dimOne$ from $1$ to $5$. For $\dimOne = 1$, we let the true natural parameter $\ThetaStar$ be as follows:
\begin{align}
\ThetaStar = \begin{bmatrix} 1 & 0.8 \end{bmatrix}.
\end{align}
This choice ensures that $\|\ThetaStar\|_{\star} \leq r$, i.e., $r = 1$. To ensure that $r = 1$ for $\dimOne > 1$, we let $\ThetaStar_{i,j} = \ThetaStar_{i-1,j} /2$ i.e., we let every row of $\ThetaStar$ to be a multiple of its first row. To draw high-quality samples from \eqref{eq:densityLR1}, we employ brute-force sampling using fine discretization with 100 bins per dimension.\\

\noindent \textbf{Results.} In Figure \ref{fig:vs_p_n}(e), we plot the scaling of errors in our estimates for $\ThetaStar$, i.e., $\|\hThetan - \ThetaStar\|_{\mathrm{F}}$ as a function of number of sample $n$ for various $\dimOne$. Likewise, we present how the error scales as the dimension $\dimOne$ grows for various $n$ in Figure \ref{fig:vs_p_n}(f). We observe that the error $\|\hThetan - \ThetaStar\|_{\mathrm{F}}$ admits a scaling of  between $n^{-0.50}$ and $n^{-0.44}$ for various $p$ and between $\dimOne^{1.11}$ and $\dimOne^{1.17}$. These empirical rates indicate a parametric error rate of $\sqrt{\frac{\dimOne^2 \log \dimOne}{n}}$ for $\|\hThetan - \ThetaStar\|_{\mathrm{F}}$ which matches the theoretical rate from  Corollary \ref{coro_main} (when $\dimTwo$ is treated as a constant).
\section{Conclusion and Remarks}
\label{sec:misc}

\noindent{\bf Conclusion.}
In this work, we provide a computationally efficient alternative to maximum likelihood estimation to learn distributions in a \textit{minimal truncated} $k$-parameter exponential family from i.i.d. samples. Our estimator is a minimizer of a novel convex loss function and can be viewed as an instance of   Bregman score as well as the {surrogate} likelihood. We provide rigorous finite sample analysis to achieve an $\alpha$-approximation to the true natural parameters with $O(\mathrm{poly}(k)/\alpha^2)$ samples. While our estimator is consistent and asymptotically normal, it is not asymptotically efficient. Investigating the possibility of a single estimator that achieves computational and asymptotic efficiency for this class of exponential family could be an interesting future direction.\\

\noindent{\bf Node-wise-sparse exponential family MRFs vs general exponential family.}  We highlight that the focus of our work is beyond the exponential families associated with node-wise-sparse MRFs, i.e., undirected graphical models, and towards general exponential families. The former focuses on  local assumptions on the parameters such as node-wise-sparsity, and the sample complexity depends logarithmically on the parameter dimension i.e., $O(\mathrm{log}(k))$.  In contrast, our work can handle local as well global structures on the parameters, e.g., a maximum norm constraint, a Frobenius norm constraint, or a nuclear norm constraint (see Section \ref{sec:experiments}), and our loss function in \eqref{eq:sampleGISMe} is a generalization of the interaction screening objective \citep{VuffrayMLC2016} and generalized interaction screening objective \citep{VuffrayML2019, ShahSW2021}. 
Similarly, for node-wise-sparse MRFs there has been a lot of work to relax the assumptions required for learning (see the discussion on Assumption \ref{lambdamin} below). Since our work focuses on global structures associated with the parameters, we leave the question of relaxing the assumptions required for learning as an open question.\\

	\noindent{\bf Assumption \ref{lambdamin}.} For node-wise-sparse pairwise exponential family MRFs (e.g., Ising models), which is a special case of the setting considered in our work, Assumption \ref{lambdamin} is proven (e.g., Appendix T.1 of \cite{ShahSW2021} provides one such analysis for a condition that is equivalent to Assumption \ref{lambdamin} for sparse continuous graphical model). However, such analysis typically requires a bound on the $\ell_1$ norm of the parameters associated with each node as in MRFs. Since the focus of our work is beyond the exponential families associated with node-wise-sparse MRFs, we view Assumption \ref{lambdamin} as an adequate condition to rule out certain singular distributions (as evident in the proof of Proposition \ref{prop:rsc_GISMe} where this condition is used to effectively lower bounds the variance of a non-constant random variable). Therefore, we expect this assumption to hold for most real-world applications. Further, we highlight that the MLE in \eqref{eq:mle} remains computationally intractable even under Assumption \ref{lambdamin}. To see this, one could again focus on node-wise-sparse pairwise exponential family MRFs where Assumption \ref{lambdamin} is proven and the MLE is still known to be computationally intractable.\\

\noindent{\bf  Beyond Truncated Exponential Family.}
While truncated exponential family is an important class of distributions, it requires boundedness of the support and does not capture a few widely used non-compact distributions i.e., distributions with infinite support (e.g., Gaussian distribution, Laplace distribution). While, conceptually, most non-compact distributions could be truncated by introducing a controlled amount of error, we believe this assumption could be lifted as for exponential families: $\mathbb{P}(|\rvx_i| \geq \delta  \log \gamma) \leq c\gamma^{-\delta}$ where $c>0$ is a constant and $\gamma > 0$. Alternatively, the notion of multiplicative regularizing distribution from \cite{RenMVL2021} could also be used. We believe extending our work to the non-compact setup could be an exciting direction for future work.

\section*{Acknowledgements}
	This work was supported, in part, by NSR under Grant No.\ CCF-1816209, ONR under Grant No. N00014-19-1-2665, the NSF TRIPODS Phase II grant towards Foundations of Data Science Institute, the MIT-IBM project on time series anomaly detection, and the KACST project on Towards Foundations of Reinforcement Learning.

\bibliographystyle{abbrvnat}
{\small
	\bibliography{Mybib_papers}
	
}
\clearpage
\appendix
\section*{Appendix}
\section{Related Works}
\label{appendix:related_works}
In this Appendix, we review additional works on exponential family Markov random fields, score-based methods and non-parametric density estimation, 
as well as review the related literature on Stein discrepancy.
\subsection{Exponential Family Markov Random Fields}
\label{subsubsec_exp_fam_mrf_rl_app}
Having reviewed some of the works on sparse exponential family MRFs in Section \ref{subsec_related_work}, we present here a brief overview of a few other works on the same.

Following the lines of \cite{YangRAL2015},  \cite{SuggalaKR2017} proposed an $\ell_1$ regularized node-conditional log-likelihood to learn the node-conditional density in \eqref{eq:conditional} for non-linear $\phi(\cdot)$. They used an alternating minimization technique and proximal gradient descent to solve the resulting optimization problem. However, their analysis required restricted strong convexity, bounded domain of the variables, non-negative node parameters, and hard-to-verify assumptions on gradient of the population loss. 

\cite{YangNL2018} introduced a non-parametric component to the node-conditional density in \eqref{eq:conditional}  while focusing on linear $\phi(\cdot)$. More specifically, they focused on the following joint density: 
\begin{align}
f_{\rvbx}(\svbx) \propto \exp \Big( \sum_{i \in [p]}  \eta_i(x_i) + \sum_{j \neq i} \theta_{ij}  x_i x_j \Big),  \label{eq:yang}
\end{align}
where $\eta_i(\cdot)$ is the non-parametric node-wise term. They proposed a node-conditional pseudo-likelihood (introduced in \cite{NingZL2017}) regularized by a nonconvex penalty and an adaptive multi-stage convex relaxation method to solve the resulting optimization problem. However, their finite-sample bounds require bounded moments of the variables, sparse eigenvalue condition on their loss function, and local smoothness of the log-partition function.	
\cite{SunKX2015} investigated infinite dimensional sparse pairwise exponential family MRFs where they assumed that the node and edge potentials lie in a Reproducing Kernel Hilbert space  (RKHS). They used a penalized version of the score matching objective of \cite{HyvarinenD2005}. However, their finite-sample analysis required incoherence and dependency conditions  \citep{WainwrightRL2006, JalaliRVS2011}. \cite{LinDS2016} considered the joint distribution in \eqref{eq:yuan} restricting the variables to be non-negative. They proposed a group lasso regularized generalized score matching objective \citep{Hyvarinen2007} which is a generalization of the score matching objective \citep{HyvarinenD2005} to non-negative data. However, their finite-sample analysis required the incoherence condition.

\subsection{Score-based and Stein discrepancy methods}
\label{subsubsec_score_based_methods_rl_app}
Having mentioned the principle behind and an example for the score-based method in Section \ref{subsec_related_work}, we briefly review a few other score-based methods in relation to the Stein discrepancy.

Stein discrepancy is a quantitative measure of how well a predictive density $q(\cdot)$ fits the density of interest $p(\cdot)$ based on the classical Stein's identity. Stein's identity defines an infinite number of identities indexed by a critic function $f$ and does not require evaluation of the partition function like the {score matching} method. By focusing on Stein discrepancy constructed from a RKHS, \cite{LiuLJ2016} and \cite{ChwialkowskiSG2016} independently proposed the kernel Stein discrepancy as a test statistic to access the goodness-of-fit for unnormalized densities.  \cite{LiuLJ2016}  and \cite{BarpBDGM2019} showed that the Fisher divergence, which was the minimization criterion used by the {score matching} method, can be viewed a special case of the kernel Stein discrepancy with a specific, fixed critic function $f$. \cite{BarpBDGM2019} showed that a few other methods (including the contrastive divergence by \cite{Hinton2002}) can also be viewed as a kernel Stein discrepancy with respect to a different class of critics. Despite the kernel Stein discrepancy being a natural criterion for fitting computationally hard models, there is no clear objective for choosing the right kernel and the kernels typically chosen (e.g. \cite{SunKX2015, StrathmannSLSG2015, SriperumbudurFGHK2017, SutherlandSAG2018}) are insufficient for complex datasets as pointed out by \cite{WenliangSSG2019}.

\cite{DaiLDHGSS2019} exploited the primal-dual view of the MLE to avoid estimating the normalizing constant at the price of introducing dual variables to be jointly estimated. They showed that many other methods including the contrastive divergence by \cite{Hinton2002}, pseudo-likelihood by \cite{Besag1975}, score matching by \cite{HyvarinenD2005} and minimum Stein discrepancy estimator by \cite{LiuLJ2016}, \cite{ChwialkowskiSG2016}, and \cite{BarpBDGM2019} are special cases of their estimator. However, this method results in expensive optimisation problems since they rely on adversarial optimisation (see \cite{RhodesXG2020} for details). {\cite{LiuKJC2019} proposed an inference method for unnormalized models known as discriminative likelihood estimator. This estimator follows the KL divergence minimization criterion and is implemented via density ratio estimation and a Stein operator. However, this method requires certain hard-to-verify conditions.}

\subsection{Non-parametric density estimation}
\label{subsubsec_non_para_rl_app}
Having reviewed some of the works on non-parametric density estimation in Section \ref{subsec_related_work}, we present here a brief overview of a few other works on the same.

\cite{Silverman1982} proposed to estimate the log density $\eta(\cdot) = \log f_{\rvbx}(\cdot)$ which is free of the positivity constraint and to augment the formulation of  \cite{Leonard1978} in \eqref{eq:leonard} by a functional $\int_{\svbx \in \cX} \exp(\eta(\svbx)) d\svbx$ to effectively enforce the unity constraint. They studied the theoretical properties of the penalized log likelihood estimator 
\begin{align}
\arg \min - \frac{1}{n} \sum_{i = 1}^{n} \eta(\svbx_i) +  \int_{\svbx \in \cX} \exp(\eta(\svbx)) d\svbx + \lambda J(\eta), \label{eq:silverman}
\end{align}
in the setting where $\eta(\cdot)$ lies in a Reproducing Kernel Hilbert space  (RKHS). The formulation in \eqref{eq:silverman} was further analyzed by \cite{O1988} who provided a pratical algorithm with cross-validated $\lambda$. However, similar to the formulation of  \cite{Leonard1978}, the formulation of  \cite{Silverman1982} scales poorly in high dimensions.

The formulation of \cite{Leonard1978} in \eqref{eq:leonard} also evolved through a series of works as described below.  \cite{GuQ1993} studied the theoretical properties of the estimator in \eqref{eq:leonard} over a Reproducing Kernel Hilbert Space (RKHS). However, they considered a finite-dimensional function space approximation (consisting of the linear span of kernel function) to the RKHS. \cite{Gu1993} provided a pratical algorithm with cross-validated $\lambda$ for the estimator analyzed by \cite{GuQ1993}. \cite{GuW2003} improved upon the algorithm in \cite{Gu1993} by providing a direct strategy for cross-validation. However, this function space approximation does not enjoy strong statistical guarantees.

\cite{GuJL2013} provided a practical way for choosing a cross-validated $\lambda$ for the formulation of  \cite{JeonL2006} in \eqref{eq:jeon} using the function space approximation of \cite{GuQ1993}. However, their algorithm suffers in high dimensions where performing accurate estimation is challenging.

\section{Smoothness of the loss function}
\label{appendix:algorithm_section_proofs}%
In this Appendix, we state and prove the smoothness property of the loss function $\cL_{n}(\cdot)$. 

\begin{proposition}\label{proposition:smoothness}
	Consider any $\Theta \in \Reals^{\dimOne \times  \dimTwo}$ such that $R(\Theta) \leq \bar{r}$. Under Assumptions \ref{bounds_parameter}, \ref{bounds_statistics} and \ref{bounds_statistics_maximum}, $\cL_{n}(\Theta)$ is a $4 \dimOne \dimTwo \phiMax^2\exp(2\bar{r}\bd )$ smooth function of $\Theta$.
\end{proposition}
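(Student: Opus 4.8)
The plan is to establish smoothness by controlling the spectral norm of the Hessian of $\cL_{n}$, viewed as a function of the vectorized parameter $\vect(\Theta) \in \Reals^{\dimOne \dimTwo}$. Since distances in $\Theta$ are measured in the Frobenius norm, which coincides with the Euclidean norm on $\vect(\Theta)$, an $L$-smoothness bound is equivalent to showing $\nabla^2 \cL_{n}(\Theta) \preceq L I$ uniformly over the relevant $\Theta$. So the whole proof reduces to a uniform upper bound on the largest eigenvalue of the Hessian.

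First I would differentiate a single summand $f_t(\Theta) = \exp\big(-\langle\langle \Theta, \varPhi(\svbx^{(t)}) \rangle\rangle\big)$. Because the exponent is linear in $\Theta$, the gradient is $-\varPhi(\svbx^{(t)}) f_t(\Theta)$ and the Hessian with respect to $\vect(\Theta)$ is the rank-one positive semidefinite matrix $\vect(\varPhi(\svbx^{(t)}))\,\vect(\varPhi(\svbx^{(t)}))^{T} f_t(\Theta)$. Averaging over $t$ gives
\[
\nabla^2 \cL_{n}(\Theta) = \frac{1}{n}\sum_{t=1}^{n} \vect(\varPhi(\svbx^{(t)}))\,\vect(\varPhi(\svbx^{(t)}))^{T} \exp\big(-\langle\langle \Theta, \varPhi(\svbx^{(t)}) \rangle\rangle\big),
\]
a nonnegative combination of rank-one PSD matrices. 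Hence its spectral norm is at most $\tfrac{1}{n}\sum_t \|\varPhi(\svbx^{(t)})\|_{\mathrm{F}}^{2}\exp\big(-\langle\langle \Theta, \varPhi(\svbx^{(t)}) \rangle\rangle\big)$, and it suffices to bound the two factors uniformly in $\svbx$ and in $\Theta$ with $\cR(\Theta)\leq\bar{r}$.

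It then remains to control each factor, and this is where the only genuine care is needed — namely, propagating the centering in Definition \ref{def:css} through both Assumptions \ref{bounds_statistics} and \ref{bounds_statistics_maximum}. For the Frobenius factor, since $\|\Phi(\svbx)\|_{\max}\leq\phiMax$ for all $\svbx$ and $\Expectation_{\Uniform}[\Phi(\rvbx)]$ is an average of such matrices, the triangle inequality gives $\|\varPhi(\svbx)\|_{\max}\leq 2\phiMax$, so $\|\varPhi(\svbx)\|_{\mathrm{F}}^{2}\leq \dimOne\dimTwo\|\varPhi(\svbx)\|_{\max}^{2}\leq 4\dimOne\dimTwo\phiMax^{2}$. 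For the exponential factor, I would invoke the trace-inner-product Hölder inequality $|\langle\langle \Theta, \varPhi(\svbx)\rangle\rangle|\leq \cR(\Theta)\,\cR^{*}(\varPhi(\svbx))$, which follows directly from the definition of the dual norm. The subtlety is that Assumption \ref{bounds_statistics} bounds $\cR^{*}(\Phi(\svbx))$, not $\cR^{*}(\varPhi(\svbx))$; applying Jensen's inequality to the convex function $\cR^{*}$ yields $\cR^{*}(\Expectation_{\Uniform}[\Phi(\rvbx)])\leq \Expectation_{\Uniform}[\cR^{*}(\Phi(\rvbx))]\leq \bd$, whence $\cR^{*}(\varPhi(\svbx))\leq 2\bd$ by the triangle inequality. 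Combining with $\cR(\Theta)\leq\bar{r}$ gives $\exp\big(-\langle\langle \Theta, \varPhi(\svbx)\rangle\rangle\big)\leq \exp(2\bar{r}\bd)$.

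Putting the two bounds together, every summand in the Hessian bound is at most $4\dimOne\dimTwo\phiMax^{2}\exp(2\bar{r}\bd)$, so $\|\nabla^2 \cL_{n}(\Theta)\|\leq 4\dimOne\dimTwo\phiMax^{2}\exp(2\bar{r}\bd)$ for all $\Theta$ with $\cR(\Theta)\leq\bar{r}$, which is exactly the claimed smoothness constant. I do not expect any hard obstacle here: the derivative computation is routine, and the real content is just tracking the factor of two introduced by centering in each of the two factors (which accounts for the constant $4=2^{2}$ in front and the $2$ inside the exponential), together with the reminder that Hölder's inequality for the matrix inner product pairs $\cR$ with its dual $\cR^{*}$.
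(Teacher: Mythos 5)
Your proof is correct and follows essentially the same route as the paper's: both arguments reduce smoothness to a uniform bound on the largest eigenvalue of the Hessian, and both use exactly the same two ingredient bounds, $\|\varPhi(\svbx)\|_{\max}\le 2\phiMax$ and $|\langle\langle \Theta,\varPhi(\svbx)\rangle\rangle|\le 2\bar{r}\bd$, obtained by propagating the centering through Assumptions \ref{bounds_statistics_maximum} and \ref{bounds_statistics} via the triangle inequality and Jensen. The only difference is the final linear-algebra step: the paper invokes the Gershgorin circle theorem (maximum absolute row sum of the Hessian), whereas you exploit the rank-one positive semidefinite structure of each summand and bound the spectral norm by $\|\vect(\varPhi(\svbx))\|_2^2=\|\varPhi(\svbx)\|_{\mathrm{F}}^2\le \dimOne\dimTwo\|\varPhi(\svbx)\|_{\max}^2$; both yield the identical constant $4\dimOne\dimTwo\phiMax^2\exp(2\bar{r}\bd)$.
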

\begin{proof}[Proof of Proposition \ref{proposition:smoothness}]
	To show $4 \dimOne \dimTwo \phiMax^2\exp(2\bar{r}\bd )$ smoothness of $\cL_{n}(\Theta)$,
	we  show that the largest eigenvalue of the Hessian\footnote[5]{Ideally, one would consider the Hessian of $\cL_{n}(\vect({\Theta}))$. However, we abuse the terminology for the ease of the exposition.} of $\cL_{n}(\Theta)$ is upper bounded by $4 \dimOne \dimTwo \phiMax^2\exp(2\bar{r}\bd )$. First, we simplify the Hessian of $\cL_{n}(\Theta)$, i.e., $\nabla^2 \cL_{n}(\Theta)$. 
	The component of the Hessian of $\cL_{n}(\Theta)$ corresponding to $\Theta_{u_1v_1}$
	and $\Theta_{u_2v_2}$ for some $u_1,u_2 \in [\dimOne]$ and $v_1,v_2 \in [ \dimTwo]$ is given by
	\begin{align}
	\frac{\partial^2 \cL_{n}(\Theta)}{\partial \Theta_{u_1v_1} \partial \Theta_{u_2v_2}}   = \frac{1}{n} \sum_{t = 1}^{n} \varPhi_{u_1v_1}(\svbx^{(t)}) \varPhi_{u_2v_2}(\svbx^{(t)}) \exp\Big( \!\!-\!\big\langle \big\langle  \Theta, \varPhi(\svbx^{(t)}) \big\rangle \big\rangle \Big).  \label{eq:hessian-GISMe}
	\end{align}
	From the Gershgorin circle theorem, we know that the largest eigenvalue of any matrix is upper bounded by the largest absolute row sum or column sum. Let $\lambda_{\max}(\nabla^2 \cL_{n}(\Theta))$ denote the largest eigenvalue of $\nabla^2 \cL_{n}(\Theta)$. We have the following
	\begin{align}
	\lambda_{\max}(\nabla^2 \cL_{n}(\Theta)) & \leq \max_{u_2,v_2} \sum_{u_1,v_1} \bigg| \frac{\partial^2 \cL_{n}(\Theta)}{\partial \Theta_{u_1v_1} \partial \Theta_{u_2v_2}}  \bigg| \\
	& = \max_{u_2,v_2} \sum_{u_1,v_1} \bigg|  \frac{1}{n} \sum_{t = 1}^{n} \varPhi_{u_1v_1}(\svbx^{(t)}) \varPhi_{u_2v_2}(\svbx^{(t)}) \exp\Big(\!\! -\!\big\langle \big\langle  \Theta, \varPhi(\svbx^{(t)}) \big\rangle \big\rangle \Big) \bigg|. \label{eq_hessian_bound}
	\end{align}
	To bound \eqref{eq_hessian_bound},  we first bound the absolute inner product between $\Theta$ and $\varPhi$, i.e., $\big|\big\langle \big\langle \Theta, \varPhi(\svbx) \big\rangle \big\rangle\big|$ for any $\svbx \in \cX$. We have
	\begin{align}
	\big|\big\langle \big\langle \Theta, \varPhi(\svbx) \big\rangle \big\rangle\big| \stackrel{(a)}{\leq}   \cR (\Theta) \!\times\! \cR^*(\varPhi(\svbx)) & \stackrel{(b)}{\leq} \bar{r} \Big( \cR^*(\Phi(\svbx)) \!+\! \cR^*(\Expectation_{\Uniform} [\Phi(\rvbx)])  \Big)  \\
	& \stackrel{(c)}{\leq} \bar{r} \Big( \cR^*(\Phi(\svbx)) \!+\! \Expectation_{\Uniform}[\cR^*(\Phi(\rvbx))] \Big) \stackrel{(d)}{\leq} 2\bar{r} \bd, \label{eq:bound_inner_product}
	\end{align}
	where $(a)$ follows from the definition of a dual norm, $(b)$ follows from \eqref{eq:CSS}, the triangle inequality and because $R(\Theta) \leq \bar{r}$, $(c)$ follows from convexity of norms, and $(d)$ follows from Assumption \ref{bounds_statistics}. Likewise, we can bound $\|\varPhi(\svbx)\|_{\max}$ by $2 \phiMax$ for any $\svbx \in \cX$ using Assumption \ref{bounds_statistics_maximum}. Using these bounds in \eqref{eq_hessian_bound}, we have
	\begin{align}
	\lambda_{\max}(\nabla^2 \cL_{n}(\Theta)) \leq  \max_{u_2,v_2} \sum_{u_1,v_1} 4 \phiMax^2  \exp(2\bar{r}\bd)  = 4 \dimOne \dimTwo \phiMax^2\exp(2\bar{r}\bd ).
	\end{align}
	Therefore, $\cL_{n}(\Theta)$ is a $4 \dimOne \dimTwo \phiMax^2\exp(2\bar{r}\bd )$ smooth function of $\Theta$.
\end{proof}
\section{Proof of Theorem 4.2}
\label{appendix:proof of thm:consistency_normality}%
In this Appendix, we prove Theorem \ref{thm:consistency_normality} by using the theory of $M$-estimation. In particular, observe that $\hThetan$ is an $M$-estimator i.e., $\hThetan$ is a sample average. Then, we invoke Theorem 4.1.1 and Theorem 4.1.3 of \cite{Amemiya1985} to prove the consistency and the asymptotic normality of $\hThetan$, respectively. We restate Theorem \ref{thm:consistency_normality} below and then provide the proof. Recall that  $A(\ThetaStar)$ denotes the covariance matrix of $\vect\big(\varPhi(\rvbx)\exp\big( -\big\langle \big\langle \ThetaStar, \varPhi(\rvbx) \big\rangle \big\rangle \big)\big)$ and $B(\ThetaStar)$ denotes the cross-covariance matrix of $\vect(\varPhi(\rvbx))$ and $\vect(\varPhi(\rvbx) \exp\big( -\big\langle \big\langle \ThetaStar, \varPhi(\rvbx) \big\rangle \big\rangle \big))$.

\theoremconsistencynormality*
\begin{proof}[Proof of Theorem \ref{thm:consistency_normality}] We divide the proof in two parts.\\
	
	{\bf Consistency. } We first show that $\hThetan$ is asymptotically consistent. To show this, recall Theorem 4.1.1 of \cite{Amemiya1985}.
	
	\citet[Theorem~4.1.1]{Amemiya1985}: Let $z_1, \cdots, z_n$ be i.i.d. samples of a random variable $\rvz$. Let $q(\rvz ; \theta)$ be some function of $\rvz$ parameterized by $\theta \in \Upsilon$. Let $\theta^*$ be the true underlying parameter. Define
	\begin{align}
	Q_n(\theta) = \frac{1}{n} \sum_{i = 1}^{n} q(z_i ; \theta) \qquad \text{and} \qquad
	\hthetan \in \argmin_{\theta \in \Upsilon} Q_n(\theta). \label{eq:m-est-con}
	\end{align} 
	Let the following be true.
	\begin{enumerate}[leftmargin=6mm, itemsep=-0.5mm]
		\item[(a)] $\Upsilon$ is compact,
		\item[(b)] $Q_n(\theta)$ converges uniformly in probability to a non-stochastic function $Q(\theta)$, 
		\item[(c)] $Q(\theta)$ is continuous, and
		\item[(d)] $Q(\theta)$ is uniquely minimzed at $\theta^*$.
	\end{enumerate}
Then, $\hthetan$ is consistent for $\theta^*$ i.e., $\hthetan \stackrel{p}{\to} \theta^*$ as $n\to \infty$.\\

Letting $z \coloneqq \rvbx$, $\theta \coloneqq \Theta$, $\hthetan \coloneqq \hThetan$, $\theta^* \coloneqq \ThetaStar$, $\Upsilon = \parameterSet$, $q(z; \theta) \coloneqq \exp\big( -\big\langle \big\langle \Theta, \varPhi(\svbx) \big\rangle \big\rangle \big)$, and $Q_n(\theta) \coloneqq \cL_{n}(\Theta)$, it is sufficient to show the following:
\begin{enumerate}[leftmargin=6mm, itemsep=-0.5mm]
	\item[(a)] $\parameterSet$ is compact,
	\item[(b)] $\cL_{n}(\Theta)$ converges uniformly in probability to a non-stochastic function $\cL(\Theta)$, 
	\item[(c)] $\cL(\Theta)$ is continuous, and
	\item[(d)] $\cL(\Theta)$ is uniquely minimzed at $\ThetaStar$.
\end{enumerate}

Let us show these one by one.
\begin{enumerate}[leftmargin=6mm, itemsep=-0.5mm]
	\item[(a)] We have $\parameterSet = \{\Theta : \cR(\Theta) \leq r\}$ which is bounded and closed. Therefore, $\parameterSet$ is compact.
	
	\item[(b)] Recall \citet[Theorem 2]{Jennrich1969}: Let $z_1, \cdots, z_n$ be i.i.d. samples of a random variable $\rvz$. Let $g(\rvz ; \theta)$ be a  function of $\theta$ parameterized by $\theta \in \Upsilon$.  Then, $n^{-1} \sum_t g(z_t , \theta)$ converges uniformly in probability to $\Expectation [ g(\rvz, \theta)]$ if 
	\begin{enumerate}[leftmargin=6mm]
		\item[(i)] $\Upsilon$ is compact, 
		\item[(ii)] $g(\rvz , \theta)$ is continuous at each $\theta \in \Upsilon$ with probability one, 
		\item[(iii)] $g(\rvz , \theta)$ is dominated by a function $G(\rvz)$ i.e., $| g(\rvz , \theta) | \leq G(\rvz)$, and
		\item[(iv)] $\Expectation[G(\rvz)] < \infty$.
	\end{enumerate}

	Using this theorem with $\rvz \coloneqq \rvbx$, $\theta \coloneqq \Theta$, $\Upsilon \coloneqq \parameterSet$, $g(\rvz, \theta) \coloneqq \exp\big( -\big\langle \big\langle \Theta, \varPhi(\svbx) \big\rangle \big\rangle \big)$, $G(\rvz) \coloneqq \exp(2\br \bd)$ and \eqref{eq:bound_inner_product}, we conclude that $\cL_{n}(\Theta)$ converges to $\cL(\Theta)$ uniformly in probability.
	
	\item[(c)] $\exp\big( -\big\langle \big\langle \Theta, \varPhi(\svbx) \big\rangle \big\rangle \big)$ is a continuous function of $\Theta \in \parameterSet$. Further, $\DensityXTrue$ does not functionally depend on $\Theta$. Therefore, we have continuity of $\cL(\Theta)$ for all $\Theta \in \parameterSet$.
	
	\item[(d)] From Theorem \ref{theorem:GRISMe-KLD}, $\cL(\Theta)$ is uniquely minimized at $\ThetaStar$.
\end{enumerate}
		
Therefore, we have asymptotic consistency of $\hThetan$.\\

	{\bf Normality. }
	We now show that $\hThetan$ is asymptotically normal. To show this, recall Theorem 4.1.3 of \cite{Amemiya1985}.
	
	\citet[Theorem~4.1.3]{Amemiya1985}: Let $z_1, \cdots, z_n$ be i.i.d. samples of a random variable $\rvz$. Let $q(\rvz ; \theta)$ be some function of $\rvz$ parameterized by $\theta \in \Upsilon$. Let $\theta^*$ be the true underlying parameter. Define
	\begin{align}
	Q_n(\theta) = \frac{1}{n} \sum_{i = 1}^{n} q(z_i ; \theta) \qquad \text{and} \qquad
	\hthetan \in \argmin_{\theta \in \Upsilon} Q_n(\theta). \label{eq:m-est-eff}
	\end{align} 
	Let the following be true.
	\begin{enumerate}[leftmargin=6mm, itemsep=-0.5mm]
		\item[(a)] $\hthetan$ is consistent for $\theta^*$, 
		\item[(b)] $\theta^*$ lies in the interior of the parameter space $\Upsilon$, 
		\item[(c)] $Q_n$ is twice continuously differentiable in an open and convex neighbourhood of $\theta^*$, 
		\item[(d)] $\sqrt{n}\nabla Q_n(\theta)|_{\theta = \theta^*} \stackrel{d}{\to} {\cal N}({\bf 0}, A(\theta^*))$, and 
		\item[(e)] $\nabla^2 Q_n(\theta)|_{\theta = \hthetan} \stackrel{p}{\to} B(\theta^*)$ with $B(\theta)$ finite, non-singular, and continuous at $\theta^*$, 
	\end{enumerate}
Then, $\hthetan$ is normal for $\theta^*$ i.e., $\sqrt{n}( \hthetan - \theta^*)\stackrel{d}{\to} {\cal N}({\bf 0}, B^{-1}(\theta^*)A(\theta^*)B^{-1}(\theta^*))$.\\

Letting $z \coloneqq \rvbx$, $\theta \coloneqq \Theta$, $\hthetan \coloneqq \hThetan$, $\theta^* \coloneqq \ThetaStar$, $\Upsilon = \parameterSet$, $q(z; \theta) \coloneqq \exp\big( -\big\langle \big\langle \Theta, \varPhi(\svbx) \big\rangle \big\rangle \big)$, and $Q_n(\theta) \coloneqq \cL_{n}(\Theta)$, it is sufficient to show the following:
\begin{enumerate}[leftmargin=6mm, itemsep=-0.5mm]
	\item[(a)] $\hThetan$ is consistent for $\ThetaStar$, 
	\item[(b)] $\ThetaStar$ lies in the interior of the parameter space $\parameterSet$, 
	\item[(c)] $\cL_{n}$ is twice continuously differentiable in an open and convex neighbourhood of $\ThetaStar$, 
	\item[(d)] $\sqrt{n}\nabla \cL_{n}(\vect(\Theta))|_{\Theta = \ThetaStar} \stackrel{d}{\to} {\cal N}({\bf 0}, A(\ThetaStar))$, and 
	\item[(e)] $\nabla^2 \cL_{n}(\vect(\Theta))|_{\Theta = \hThetan} \stackrel{p}{\to} B(\ThetaStar)$ with $B(\Theta)$ finite, non-singular, and continuous at $\ThetaStar$, 
\end{enumerate}

Let us show these one by one.

\begin{enumerate}[leftmargin=6mm, itemsep=-0.5mm]
			\item[(a)] We have established that $\hThetan$ is consistent for $\ThetaStar$ in the first half of the proof.
			\item[(b)] The assumption that $\ThetaStar \in \text{interior}(\parameterSet)$ is equivalent to $\ThetaStar$ belonging to the interior of $\parameterSet$.
			\item[(c)] Fix $u_1,u_2 \in [\dimOne]$ and $v_1,v_2 \in [ \dimTwo]$. We have
			\begin{align}
			\frac{\partial^2 \cL_{n}(\Theta)}{\partial \Theta_{u_1v_1} \partial\Theta_{u_2v_2}}  = \frac{1}{n} \sum_{t = 1}^{n} \varPhi_{u_1v_1}(\svbx^{(t)}) \varPhi_{u_2v_2}(\svbx^{(t)}) \exp\big( -\big\langle \big\langle \Theta, \varPhi(\svbx^{(t)}) \big\rangle \big\rangle \big). 
			\end{align}
		Thus, $\partial^2 \cL_{n}(\Theta)/\partial \Theta_{u_1v_1} \partial \Theta_{u_2v_2}$ exists. Using the continuity of $\varPhi(\cdot)$ and 
		$\exp\big( -\big\langle \big\langle \Theta, \varPhi(\cdot) \big\rangle \big\rangle \big)$, we see that $\partial^2 \cL_{n}(\Theta)/\partial \Theta_{u_1v_1} \partial \Theta_{u_2v_2}$ is continuous in an open and convex neighborhood of $\ThetaStar$.
		\item[(d)] For any $u \in [\dimOne]$ and $v \in [ \dimTwo]$, define the random variable
		\begin{align}
		\rvx_{uv}  = - \varPhi_{uv}(\svbx) \exp\big( -\big\langle \big\langle \ThetaStar, \varPhi(\svbx) \big\rangle \big\rangle \big).
		\end{align}
		The component of the gradient of $\cL_{n}(\vect(\Theta))$ corresponding to $\Theta_{uv}$
		evaluated at $\ThetaStar$ is given by
		\begin{align}
		\frac{\partial \cL_{n}(\ThetaStar)}{\partial \Theta_{uv}}  = - \frac{1}{n} \sum_{t = 1}^{n} \varPhi_{uv}(\svbx^{(t)}) \exp\big( -\big\langle \big\langle \ThetaStar, \varPhi(\svbx^{(t)}) \big\rangle \big\rangle \big). 
		\end{align}
				Each term in the above summation is distributed as per the random variable $\rvx_{uv} $.
				The random variable $\rvx_{uv}$ has zero mean (see Lemma \ref{lemma:zero_expectation}). Using this and the multivariate central limit theorem \citep{Vaart2000}, we have
				\begin{align}
				\sqrt{n} \nabla \cL_{n}(\vect(\Theta)) |_{\Theta = \ThetaStar} \xrightarrow{d} {\cal N}({\bf 0}, A(\ThetaStar)),
				\end{align} 
				where $A(\ThetaStar)$ is the covariance matrix of $\vect\big(\varPhi(\rvbx)\exp\big( -\big\langle \big\langle \ThetaStar, \varPhi(\rvbx) \big\rangle \big\rangle \big)\big).$
	\item [(e)] We start by showing that the following is true.
	\begin{align}
	\nabla^2 \cL_{n}(\vect(\Theta)) |_{\Theta = \hThetan} \xrightarrow{p} \nabla^2 \cL(\vect(\Theta)) |_{\Theta = \ThetaStar}.  \label{eq:ulln+cmt}
	\end{align}
	To begin with, using the uniform law of large numbers \cite[Theorem 2]{Jennrich1969} for any $\Theta \in \parameterSet$ results in
	\begin{align}
	\nabla^2 \cL_{n}(\vect(\Theta))  \xrightarrow{p} \nabla^2 \cL(\vect(\Theta)). \label{eq:ulln} 
	\end{align}
	Using the consistency of $\hThetan$ and the continuous mapping theorem, we have
	\begin{align}
	\nabla^2 \cL(\vect(\Theta)) |_{\Theta = \hThetan} \xrightarrow{p} \nabla^2 \cL(\vect(\Theta)) |_{\Theta = \ThetaStar}.  \label{eq:cmt}
	\end{align}
	Let $u_1,u_2 \in [\dimOne]$ and $v_1,v_2 \in [ \dimTwo]$. From \eqref{eq:ulln} and \eqref{eq:cmt}, for any $\epsilon > 0$, for any $\delta > 0$, there exists integers $n_1 , n_2$ such that for $n \geq \max\{n_1,n_2\}$ we have,
	\begin{align}
	\Prob( | \partial^2 \cL_{n}(\hThetan)/\partial \Theta_{u_1v_1} \partial \Theta_{u_2v_2}  - \partial^2 \cL(\hThetan)/\partial \Theta_{u_1v_1} \partial \Theta_{u_2v_2}  | > \epsilon / 2 ) \leq \delta / 2
	\end{align}
	and 
	\begin{align}
	\Prob( | \partial^2 \cL(\hThetan)/\partial \Theta_{u_1v_1} \partial \Theta_{u_2v_2}  - \partial^2 \cL(\ThetaStar)/\partial \Theta_{u_1v_1} \partial \Theta_{u_2v_2}  | > \epsilon / 2 ) \leq \delta / 2.
	\end{align} 
	Now for $n \geq \max\{n_1,n_2\}$, using the triangle inequality we have
	\begin{align}
	\Prob( | \partial^2 \cL_{n}(\hThetan)/\partial \Theta_{u_1v_1} \partial \Theta_{u_2v_2}  - \partial^2 \cL(\ThetaStar)/\partial \Theta_{u_1v_1} \partial \Theta_{u_2v_2}  | > \epsilon)  \leq \delta / 2 + \delta / 2 = \delta.
	\end{align}
	Thus, we have \eqref{eq:ulln+cmt}. Using the definition of $\cL(\Theta)$, we have
	\begin{align}
	\partial^2 \cL(\ThetaStar)/\partial \Theta_{u_1v_1} \partial \Theta_{u_2v_2} & = \Expectation \Big[ \varPhi_{u_1v_1}(\rvbx) \varPhi_{u_2v_2}(\rvbx)\exp\big( -\big\langle \big\langle \ThetaStar, \varPhi(\rvbx) \big\rangle \big\rangle \big)  \Big] \\
	& \stackrel{(a)}{=} \Expectation \Big[ \varPhi_{u_1v_1}(\rvbx) \varPhi_{u_2v_2}(\rvbx)\exp\big( -\big\langle \big\langle \ThetaStar, \varPhi(\rvbx) \big\rangle \big\rangle \big) \Big]  \\ 
	& \qquad - \Expectation \Big[ \varPhi_{u_1v_1}(\rvbx)\Big]  \Expectation \Big[ \varPhi_{u_2v_2}(\rvbx)\exp\big( -\big\langle \big\langle \ThetaStar, \varPhi(\rvbx) \big\rangle \big\rangle\big)\Big]\\
	& = \text{cov} \Big(  \varPhi_{u_1v_1}(\rvbx) , \varPhi_{u_2v_2}(\rvbx)\exp\big( -\big\langle \big\langle \ThetaStar, \varPhi(\rvbx) \big\rangle \big\rangle \big)\Big),
	\end{align}	
	where (a) follows because $\Expectation \big[ \varPhi_{u_2v_2}(\rvbx)\exp\big( -\big\langle \big\langle \ThetaStar, \varPhi(\rvbx) \big\rangle \big\rangle\big)\big] = 0$ for any $u_2 \in [\dimOne]$ and $v_2 \in [ \dimTwo]$ from Lemma \ref{lemma:zero_expectation}.	Therefore, we have
	\begin{align}
	\nabla^2 \cL_{n}(\vect(\Theta)) |_{\Theta = \hThetan} \xrightarrow{p} B(\ThetaStar),
	\end{align}
	where $B(\ThetaStar)$ is the cross-covariance matrix of $\vect(\varPhi(\rvbx))$ and $\vect\big(\varPhi(\rvbx) \exp\big( -\big\langle \big\langle \ThetaStar, \varPhi(\rvbx) \big\rangle \big\rangle \big)\big)$. Finiteness and continuity of $\varPhi(\rvbx)$ and $\varPhi(\rvbx)\exp\big( -\big\langle \big\langle \ThetaStar, \varPhi(\rvbx) \big\rangle \big\rangle \big)$ implies the finiteness and continuity of $B(\ThetaStar)$. By assumption,  the cross-covariance matrix of $\vect(\varPhi(\rvbx))$ and $\vect\big(\varPhi(\rvbx) \exp\big( -\big\langle \big\langle \ThetaStar, \varPhi(\rvbx) \big\rangle \big\rangle \big)\big)$ is invertible.
\end{enumerate}	
Therefore, we have the asymptotic normality of $\hThetan$.
\end{proof}

\section{Restricted strong convexity of the loss function} 
\label{appendix:rsc}
In this Appendix, we show that, with enough samples, the loss function obeys the restricted strong convexity property with high probability. This result in turn allows us to prove Theorem \ref{thm:finite_sample} in Appendix \ref{appendix_proof_finite_sample}. We first state the main result of this Appendix (Proposition \ref{prop:rsc_GISMe}). Next, we introduce the notion of correlation for the centered natural statistics and provide a supporting Lemma wherein we bound the deviation between the true correlation and the empirical correlation. Finally, we prove Proposition \ref{prop:rsc_GISMe}.

For any $\Theta \in \Reals^{\dimOne \times  \dimTwo}$ with $\Delta = \Theta - \ThetaStar$, define the residual of the first-order Taylor expansion as 
\begin{align}
\delta \cL_{n}(\Delta, \ThetaStar) = \cL_{n}(\ThetaStar + \Delta) - \cL_{n}(\ThetaStar)  - \langle \langle \nabla \cL_{n}(\ThetaStar),\Delta \rangle\rangle.  \label{eq:residual}
\end{align}
\begin{restatable}{proposition}{proprsc}\label{prop:rsc_GISMe}
	Let Assumptions \ref{bounds_parameter}, \ref{bounds_statistics}, \ref{bounds_statistics_maximum} and \ref{lambdamin} be satisfied. Consider any $\Theta \in \Reals^{\dimOne \times  \dimTwo}$ such that 
	$\Delta \in 4 \parameterSet$ and
	$\|\Delta\|_{1,1} \leq \gamma(\dimOne, \dimTwo) \|\Delta\|_\mathrm{{F}}$ where $\Delta = \Theta - \ThetaStar$ and $\gamma(\cdot, \cdot)$ is some function of $\dimOne$ and $\dimTwo$. For any $\deltaThree \in (0,1)$,  the residual defined in \eqref{eq:residual} satisfies
	\begin{align}
	\delta \cL_{n}(\Delta, \ThetaStar) \geq \frac{\lambdaMin \exp(-2\br\bd )}{4 + 16\br\bd} \|\Delta\|^2_{\mathrm{F}}, 
	\end{align}
	with probability at least $1-\deltaThree$ as long as 
	\begin{align}
	n > \frac{128 \phiMax^4 \gamma^4(\dimOne, \dimTwo) }{\lambdaMin^2}\log\Big(\frac{2 \dimOne^2  \dimTwo^2 }{\deltaThree}\Big). 
	\end{align}
\end{restatable}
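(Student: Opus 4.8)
The plan is to reduce the claim to a concentration bound on the empirical second-moment (``correlation'') matrix of the centered statistics, after first extracting a deterministic curvature factor.

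First I would rewrite the residual in \eqref{eq:residual} in a convenient exponential form. Writing $z_t \coloneqq \langle\langle \Delta, \varPhi(\svbx^{(t)}) \rangle\rangle$ and using \eqref{eq:sampleGISMe} together with the expression for the gradient $\nabla \cL_n(\ThetaStar)$, the constant and first-order terms cancel and one obtains
\[
\delta \cL_n(\Delta, \ThetaStar) = \frac{1}{n}\sum_{t=1}^n \exp\big(-\langle\langle \ThetaStar, \varPhi(\svbx^{(t)})\rangle\rangle\big)\big(e^{-z_t} - 1 + z_t\big).
\]
This isolates the nonnegative ``curvature'' contribution $e^{-z_t}-1+z_t$, weighted by the strictly positive factor $\exp(-\langle\langle \ThetaStar, \varPhi(\svbx^{(t)})\rangle\rangle)$.

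Next I would make the lower bound quantitative. Since $\cR(\ThetaStar) \leq \br$, the bound \eqref{eq:bound_inner_product} gives $\exp(-\langle\langle \ThetaStar, \varPhi(\svbx^{(t)})\rangle\rangle) \geq \exp(-2\br\bd)$; and since $\Delta \in 4\parameterSet$ the same dual-norm argument yields $|z_t| \leq 8\br\bd =: M$. The key elementary step is the scalar inequality $e^{-z}-1+z \geq z^2/(2+M)$, valid for all $|z|\leq M$ (which follows from the monotonicity of $z \mapsto (e^{-z}-1+z)/z^2$, whose minimum over $[-M,M]$ is attained at $z=M$). Applying it termwise gives
\[
\delta \cL_n(\Delta, \ThetaStar) \geq \frac{\exp(-2\br\bd)}{2 + 8\br\bd}\cdot \frac{1}{n}\sum_{t=1}^n z_t^2.
\]
It then remains to show the empirical quadratic form $\tfrac{1}{n}\sum_t z_t^2 = \vect(\Delta)^\top \hat{H}_n \vect(\Delta)$, with $\hat{H}_n \coloneqq \tfrac{1}{n}\sum_t \vect(\varPhi(\svbx^{(t)}))\vect(\varPhi(\svbx^{(t)}))^\top$, is at least $\tfrac{1}{2}\lambdaMin \|\Delta\|_{\mathrm{F}}^2$; the stated constant $4+16\br\bd = 2(2+8\br\bd)$ then appears by combining this factor $\tfrac{1}{2}$ with the prefactor above.

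The heart of the argument, and the \emph{main obstacle}, is this last concentration step. Let $H \coloneqq \Expectation[\vect(\varPhi(\rvbx))\vect(\varPhi(\rvbx))^\top]$, so that $\vect(\Delta)^\top H \vect(\Delta) = \Expectation[\langle\langle \Delta, \varPhi(\rvbx)\rangle\rangle^2] \geq \lambdaMin \|\Delta\|_{\mathrm{F}}^2$ by Assumption \ref{lambdamin}. I would then prove the supporting lemma bounding $\|\hat{H}_n - H\|_{\max}$: each entry of $\hat{H}_n - H$ is an average of i.i.d. terms $\varPhi_{u_1v_1}(\svbx^{(t)})\varPhi_{u_2v_2}(\svbx^{(t)})$ bounded in absolute value by $4\phiMax^2$ (Assumption \ref{bounds_statistics_maximum} gives $\|\varPhi\|_{\max}\leq 2\phiMax$), so Hoeffding's inequality plus a union bound over the $\dimOne^2\dimTwo^2$ entries force $\|\hat{H}_n - H\|_{\max} \leq \lambdaMin/\big(2\gamma^2(\dimOne,\dimTwo)\big)$ with probability at least $1-\deltaThree$ exactly when $n$ exceeds the stated threshold $\tfrac{128\,\phiMax^4 \gamma^4(\dimOne,\dimTwo)}{\lambdaMin^2}\log\big(2\dimOne^2\dimTwo^2/\deltaThree\big)$ (the constant $128 = 4\times 32$ coming from the Hoeffding range $8\phiMax^2$ and the target slack $\tfrac{1}{2}\lambdaMin$). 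Finally, the entrywise duality $|\vect(\Delta)^\top(\hat{H}_n - H)\vect(\Delta)| \leq \|\hat{H}_n - H\|_{\max}\|\Delta\|_{1,1}^2$, combined with the structural hypothesis $\|\Delta\|_{1,1}\leq \gamma(\dimOne,\dimTwo)\|\Delta\|_{\mathrm{F}}$, controls the deviation by $\tfrac{1}{2}\lambdaMin\|\Delta\|_{\mathrm{F}}^2$, yielding $\tfrac{1}{n}\sum_t z_t^2 \geq \tfrac{1}{2}\lambdaMin\|\Delta\|_{\mathrm{F}}^2$ and completing the proof. The delicate points are tracking the constants so they assemble into $4+16\br\bd$, and applying the $\max$/$\ell_{1,1}$ duality with the $\gamma$-hypothesis so that the sample complexity scales as $\gamma^4(\dimOne,\dimTwo)$.
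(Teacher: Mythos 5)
Your proposal is correct and follows essentially the same route as the paper: the same exponential rewriting of the Taylor residual, the same elementary bound $e^{-z}-1+z \geq z^2/(2+|z|)$ combined with $|\langle\langle \Delta, \varPhi\rangle\rangle| \leq 8\br\bd$ for $\Delta \in 4\parameterSet$, and the same Hoeffding-plus-union-bound concentration of the empirical correlation tensor at level $\lambdaMin/(2\gamma^2(\dimOne,\dimTwo))$, closed off via the $\max$/$L_{1,1}$ duality and Assumption \ref{lambdamin}. All constants, including $4+16\br\bd$ and the $128\phiMax^4\gamma^4/\lambdaMin^2$ sample threshold, assemble exactly as in the paper's argument.
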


\subsection{Correlation between centered natural statistics}
\label{subsec:correlation between centered natural statistics}
For any $u_1,u_2 \in [\dimOne]$ and $v_1,v_2 \in [ \dimTwo]$, let $H_{u_1v_1u_2v_2}$ denote the correlation between $\varPhi_{u_1v_1}(\rvbx)$ and $\varPhi_{u_2v_2}(\rvbx)$ defined as
\begin{align}
H_{u_1v_1u_2v_2} = \Expectation \big[\varPhi_{u_1v_1}(\rvbx)\varPhi_{u_2v_2}(\rvbx)\big],  \label{eq:population_correlation}
\end{align}
and let $\bH = [H_{u_1v_1u_2v_2}] \in \Reals^{[\dimOne] \times [ \dimTwo]  \times [\dimOne] \times [ \dimTwo] }$ be the corresponding correlation tensor. 
Similarly, we define $\hbH$  based on the empirical estimates of the correlation
\begin{align}
\hH_{u_1v_1u_2v_2} = \frac{1}{n} \sum_{t=1}^{n} \varPhi_{u_1v_1}(\svbx^{(t)})\varPhi_{u_2v_2}(\svbx^{(t)}). \label{eq:empirical_correlation}
\end{align}

The following lemma bounds the deviation between the true correlation and the empirical correlation.
\begin{lemma} \label{lemma:correlation_concentration}
	Consider any $u_1,u_2 \in [\dimOne]$ and $v_1,v_2 \in [ \dimTwo]$. Let Assumption \ref{bounds_statistics_maximum} be satisfied. Then, we have for any $\epsTwo > 0$,
	\begin{align}
	|\hH_{u_1v_1u_2v_2} - H_{u_1v_1u_2v_2}| < \epsTwo,
	\end{align}
	with probability at least $1 - \deltaTwo$ as long as
	\begin{align}
	n > \frac{32 \phiMax^4}{\epsTwo^2}\log\Big(\frac{2 \dimOne^2  \dimTwo^2}{\deltaTwo}\Big).
	\end{align}
\end{lemma}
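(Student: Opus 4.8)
The plan is to view $\hH_{u_1v_1u_2v_2}$ as an empirical average of $n$ i.i.d.\ bounded random variables and apply Hoeffding's inequality, followed by a union bound over the $\dimOne^2\dimTwo^2$ index tuples---the latter being signalled by the factor $\dimOne^2\dimTwo^2$ inside the logarithm of the claimed sample complexity. First I would fix a tuple $(u_1,v_1,u_2,v_2)$ and set $Y_t \coloneqq \varPhi_{u_1v_1}(\svbx^{(t)})\,\varPhi_{u_2v_2}(\svbx^{(t)})$ for $t \in [n]$, so that $\hH_{u_1v_1u_2v_2} = \tfrac{1}{n}\sum_{t=1}^n Y_t$ while $\Expectation[Y_t] = H_{u_1v_1u_2v_2}$ by \eqref{eq:population_correlation}. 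Since the samples $\svbx^{(t)}$ are i.i.d., so are the $Y_t$.

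Next I would establish the boundedness required by Hoeffding. Because $\varPhi(\cdot) = \Phi(\cdot) - \Expectation_{\Uniform}[\Phi(\rvbx)]$ and Assumption \ref{bounds_statistics_maximum} gives $\|\Phi(\svbx)\|_{\max} \leq \phiMax$, the triangle inequality and convexity of the maximum norm yield $\|\varPhi(\svbx)\|_{\max} \leq 2\phiMax$ for every $\svbx \in \cX$ (exactly the bound derived in the proof of Proposition \ref{proposition:smoothness}). Hence $|Y_t| \leq (2\phiMax)^2 = 4\phiMax^2$, so each $Y_t$ lies in an interval of width $8\phiMax^2$. Applying Hoeffding's inequality to these bounded i.i.d.\ summands gives, for any $\epsTwo > 0$,
\begin{align}
\Prob\big( |\hH_{u_1v_1u_2v_2} - H_{u_1v_1u_2v_2}| \geq \epsTwo \big) \leq 2\exp\Big( -\frac{2 n \epsTwo^2}{(8\phiMax^2)^2} \Big) = 2\exp\Big( -\frac{n \epsTwo^2}{32 \phiMax^4} \Big).
\end{align}

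Finally I would take a union bound over all $\dimOne^2\dimTwo^2$ tuples $(u_1,v_1,u_2,v_2)$, so that the deviation bound holds simultaneously for every tuple with probability at least $1 - 2\dimOne^2\dimTwo^2 \exp(-n\epsTwo^2/(32\phiMax^4))$. Forcing this failure probability to be at most $\deltaTwo$ and solving for $n$ yields precisely the stated threshold $n > \tfrac{32\phiMax^4}{\epsTwo^2}\log\big(\tfrac{2\dimOne^2\dimTwo^2}{\deltaTwo}\big)$. I do not expect a genuine obstacle: the argument is a textbook bounded-differences concentration combined with a union bound. The only points demanding care are the constant bookkeeping---the factor $2$ from centering (producing $4\phiMax^2$, not $\phiMax^2$), and the resulting width $8\phiMax^2$ whose square gives the constant $32$ in the denominator---together with reading the $\dimOne^2\dimTwo^2$ inside the logarithm as encoding the union bound, so that the conclusion is in fact uniform over all index tuples rather than merely pointwise (which is what the downstream Proposition \ref{prop:rsc_GISMe} requires).
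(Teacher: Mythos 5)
Your proposal is correct and follows essentially the same route as the paper's proof: the paper also bounds $|\varPhi_{u_1v_1}(\rvbx)\varPhi_{u_2v_2}(\rvbx)| \leq 4\phiMax^2$ via the centering and Assumption \ref{bounds_statistics_maximum}, applies Hoeffding's inequality to get the tail $2\exp(-n\epsTwo^2/(32\phiMax^4))$, and concludes with a union bound over all $\dimOne^2\dimTwo^2$ index tuples. Your constant bookkeeping matches the paper's exactly.
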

\begin{proof}[Proof of Lemma~\ref{lemma:correlation_concentration}]
	Fix $u_1,u_2 \in [\dimOne]$ and $v_1,v_2 \in [ \dimTwo]$. 
	The random variable defined as $Y_{u_1v_1u_2v_2} \coloneqq \varPhi_{u_1v_1}(\rvbx)\varPhi_{u_2v_2}(\rvbx)$ satisfies $|Y_{u_1v_1u_2v_2}| \leq 4\phiMax^2$ (from \eqref{eq:CSS}, triangle inequality, convexity of norms, and Assumption \ref{bounds_statistics_maximum}). 
	Using the Hoeffding inequality we get
	\begin{align}
	\Prob \left( |\hH_{u_1v_1u_2v_2} - H_{u_1v_1u_2v_2}| > \epsTwo \right) < 2\exp \left(-\frac{n \epsTwo^2}{32\phiMax^4}\right).
	\end{align}
	The proof follows by using the union bound over all $u_1,u_2 \in [\dimOne]$ and $v_1,v_2 \in [ \dimTwo]$.
\end{proof}

\subsection{Proof of Proposition~\ref{prop:rsc_GISMe}}

\begin{proof}[Proof of Proposition~\ref{prop:rsc_GISMe}]

	First, we simplify the gradient of $\cL_{n}(\Theta)$\footnote[6]{Ideally, one would consider the gradient of $\cL_{n}(\vect({\Theta}))$. However, for the ease of the exposition we abuse the terminology.} evaluated at $\ThetaStar$. 
	For any $u \in [\dimOne]$ and $v \in [ \dimTwo]$, the component of the gradient of $\cL_{n}(\Theta)$ corresponding to $\Theta_{uv}$
	evaluated at $\ThetaStar$ is given by
	\begin{align}
	\frac{\partial \cL_{n}(\ThetaStar)}{\partial \Theta_{uv}}  = -\frac{1}{n} \sum_{t = 1}^{n} \varPhi_{uv}(\svbx^{(t)}) \exp\big( -\big\langle\big\langle \ThetaStar, \varPhi(\svbx^{(t)}) \big\rangle \big\rangle \big).  \label{eq:gradient-GISMe}
	\end{align}
	
	We now provide the desired lower bound on the residual. Substituting \eqref{eq:sampleGISMe} and \eqref{eq:gradient-GISMe} in \eqref{eq:residual}, we have
	\begin{align}
	\delta \cL_{n}(\Delta, \ThetaStar) &= \frac{1}{n} \sum_{t = 1}^{n}  \exp\big( \hspace{-0.5mm}-\hspace{-0.5mm}\big\langle\big\langle \ThetaStar, \varPhi(\svbx^{(t)}) \big\rangle \big\rangle \big) \times \Big[ \exp\big( \hspace{-0.5mm}-\hspace{-0.5mm}\big\langle\big\langle \Delta, \varPhi(\svbx^{(t)}) \big\rangle \big\rangle \big) - \hspace{-0.5mm} 1 \hspace{-0.5mm} + \big\langle \big\langle \Delta, \varPhi(\svbx^{(t)}) \big\rangle \big\rangle \Big]  \\
	& \stackrel{(a)}{\geq} \exp(-2\br\bd) \times \frac{1}{n} \sum_{t = 1}^{n}  \Big[ \exp\big( -\big\langle\big\langle \Delta, \varPhi(\svbx^{(t)}) \big\rangle \big\rangle \big) - 1 +  \big\langle\big\langle \Delta, \varPhi(\svbx^{(t)}) \big\rangle \big\rangle \Big]  \\	
	& \stackrel{(b)}{\geq} \exp(-2\br\bd) \times \frac{1}{n} \sum_{t = 1}^{n}  \frac{\big|\big\langle\big\langle \Delta, \varPhi(\svbx^{(t)}) \big\rangle\big\rangle \big|^2}{2+\big|\big\langle\big\langle \Delta, \varPhi(\svbx^{(t)}) \big\rangle\big\rangle \big|}\\
	& \stackrel{(c)}{\geq} \frac{\exp(-2\br\bd)}{2+8\br\bd} \times \frac{1}{n} \sum_{t = 1}^{n}  \big|\big\langle\big\langle \Delta, \varPhi(\svbx^{(t)}) \big\rangle\big\rangle\big|^2\\
	& \stackrel{(d)}{=} \frac{\exp(-2\br\bd)}{2+8\br\bd} \times \sum_{u_1=1}^{\dimOne}\sum_{v_1=1}^{ \dimTwo}  \sum_{u_2=1}^{\dimOne}\sum_{v_2=1}^{ \dimTwo} 
	\Delta_{u_1v_1} \hH_{u_1v_1u_2v_2} \Delta_{u_2v_2}\\
	& = \frac{\exp(-2\br\bd)}{2+8\br\bd} \times \sum_{u_1=1}^{\dimOne}\sum_{v_1=1}^{ \dimTwo}  \sum_{u_2=1}^{\dimOne}\sum_{v_2=1}^{ \dimTwo} 
	 \Delta_{u_1v_1}  [H_{u_1v_1u_2v_2}  + \hH_{u_1v_1u_2v_2} - H_{u_1v_1u_2v_2}] \Delta_{u_2v_2},
	\end{align}
	where $(a)$ follows because $ - \big\langle\big\langle \ThetaStar, \varPhi(\svbx) \big\rangle \big\rangle \geq  -2\br\bd$ for every $\svbx \in \cX$ from \eqref{eq:bound_inner_product}, $(b)$ follows because $e^{-z} - 1 + z \geq \frac{z^2}{2 + |z|}$ for any $z \in \Reals$, $(c)$ follows because $ - \big\langle\big\langle \Delta, \varPhi(\svbx) \big\rangle \big\rangle \geq  -8\br\bd$ for every $\svbx \in \cX$ and $\Delta \in 4 \parameterSet$ from arguments similar to \eqref{eq:bound_inner_product}, and $(d)$ follows from \eqref{eq:empirical_correlation}.\\
	
	Let the number of samples satisfy
	\begin{align}
	n > \frac{128 \phiMax^4 \gamma^4(\dimOne, \dimTwo) }{\lambdaMin^2}\log\Big(\frac{2 \dimOne^2  \dimTwo^2 }{\deltaThree}\Big). 
	\end{align}
	Using Lemma \ref{lemma:correlation_concentration} with $\epsTwo \mapsfrom \frac{\lambdaMin}{2 \gamma^2(\dimOne, \dimTwo)}$ and $\deltaTwo \mapsfrom \deltaThree$, and the triangle inequality, we have the following with probability at least $1-\deltaThree$
	\begin{align}
	\delta \cL_{n}(\Delta, \ThetaStar) &\geq \frac{\exp(-2\br\bd)}{2+8\br\bd} \times \bigg[ \sum_{u_1=1}^{\dimOne}\sum_{v_1=1}^{ \dimTwo}  \sum_{u_2=1}^{\dimOne}\sum_{v_2=1}^{ \dimTwo} 
	\Delta_{u_1v_1} H_{u_1v_1u_2v_2} \Delta_{u_2v_2} 
	- \frac{\lambdaMin}{2 \gamma^2(\dimOne, \dimTwo)} \|\Delta\|^2_{1,1}  \bigg]\\
	&\stackrel{(a)}{\geq} \frac{\exp(-2\br\bd)}{2+8\br\bd} \times \bigg[ \sum_{u_1=1}^{\dimOne}\sum_{v_1=1}^{ \dimTwo}  \sum_{u_2=1}^{\dimOne}\sum_{v_2=1}^{ \dimTwo} 
	  \Delta_{u_1v_1} H_{u_1v_1u_2v_2} \Delta_{u_2v_2}  - \frac{\lambdaMin}{2} \|\Delta\|^2_{\mathrm{F}}  \bigg]\\
	& \stackrel{(b)}{=} \frac{\exp(-2\br\bd)}{2+8\br\bd} \times \Big[ \vect(\Delta) \Expectation[\vect(\varPhi(\rvbx))\vect(\varPhi(\rvbx))^T]\vect(\Delta)^T  - \frac{\lambdaMin}{2} \|\Delta\|^2_{\mathrm{F}}  \Big]\\
	& \stackrel{(c)}{\geq} \frac{\exp(-2\br\bd)}{2+8\br\bd} \times \Big[ \lambdaMin \|\vect(\Delta)\|^2_2   - \frac{\lambdaMin}{2} \|\Delta\|^2_{\mathrm{F}}  \Big]\\
	& \stackrel{(d)}{=} \frac{\exp(-2\br\bd)}{2+8\br\bd} \times \frac{\lambdaMin}{2} \|\Delta\|^2_{\mathrm{F}},
	\end{align}
	where $(a)$ follows because $\|\Delta\|_{1,1} \leq \gamma(\dimOne, \dimTwo) \|\Delta\|_\mathrm{{F}}$, $(b)$ follows from \eqref{eq:population_correlation}, $(c)$ follows from the Courant-Fischer theorem (because $\Expectation[\vect(\varPhi(\rvbx))\vect(\varPhi(\rvbx))^T]$ is a symmetric matrix) and Assumption \ref{lambdamin}, and $(d)$ follows because $\|\vect(\Delta)\|_2 = \|\Delta\|_{\mathrm{F}}$.
\end{proof}

\section{Bounds on the tensor maximum norm of the gradient of the loss function} 
\label{appendix:bounds on the gradient of the GISMe}
In this Appendix, we show that, with enough samples, the entry-wise maximum norm of the gradient of the loss function evaluated at the true natural parameter is bounded with high probability. This result allows us to prove Theorem \ref{thm:finite_sample} in Appendix \ref{appendix_proof_finite_sample}. We start by stating the main result of this Appendix (Proposition \ref{prop:gradient-concentration-GISMe}). Next, we provide a supporting Lemma wherein we show that the expected value of a random variable of interest is zero. Finally, we prove Proposition \ref{prop:gradient-concentration-GISMe}.
\begin{proposition} \label{prop:gradient-concentration-GISMe}
	Let Assumptions \ref{bounds_parameter}, \ref{bounds_statistics} and \ref{bounds_statistics_maximum} be satisfied. For any $\deltaFour \in (0,1)$, any $\epsFour > 0$, the components of the gradient of the loss function $\cL_{n}(\Theta)$\footnote[7]{Ideally, one would consider the gradient of $\cL_{n}(\vect({\Theta}))$. However, for the ease of the exposition we abuse the terminology.} evaluated at $\ThetaStar$ are bounded from above as 
	\begin{align}
	\| \nabla \cL_{n}(\ThetaStar) \|_{\max} \leq \epsFour,
	\end{align}
	with probability at least $1 - \deltaFour$ as long as
	\begin{align}
	n > \frac{8 \phiMax^2\exp(4\br\bd)}{\epsFour^2}\log\Big(\frac{2\dimOne \dimTwo}{\deltaFour}\Big).
	\end{align}
\end{proposition}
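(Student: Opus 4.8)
The plan is to recognize each entry of $\nabla \cL_{n}(\ThetaStar)$ as an empirical mean of i.i.d., bounded, mean-zero random variables, and then combine Hoeffding's inequality with a union bound over the $\dimOne \dimTwo$ entries. Concretely, for any $u \in [\dimOne]$ and $v \in [\dimTwo]$, the relevant gradient component is
$$\frac{\partial \cL_{n}(\ThetaStar)}{\partial \Theta_{uv}} = -\frac{1}{n}\sum_{t=1}^{n} \varPhi_{uv}(\svbx^{(t)}) \exp\big(-\big\langle\big\langle \ThetaStar, \varPhi(\svbx^{(t)})\big\rangle\big\rangle\big),$$
exactly as in \eqref{eq:gradient-GISMe}, so each summand is an i.i.d. copy of the random variable $\rvx_{uv} \coloneqq -\varPhi_{uv}(\rvbx)\exp(-\langle\langle\ThetaStar, \varPhi(\rvbx)\rangle\rangle)$. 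The first step is to invoke Lemma \ref{lemma:zero_expectation}, which gives $\Expectation[\rvx_{uv}] = 0$; this makes $\partial \cL_{n}(\ThetaStar)/\partial \Theta_{uv}$ a \emph{centered} empirical average, so a concentration inequality applies directly.

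The second step is to bound the range of $\rvx_{uv}$. By centering (Definition \ref{def:css}), the triangle inequality, convexity of norms, and Assumption \ref{bounds_statistics_maximum}, one gets $|\varPhi_{uv}(\svbx)| \leq 2\phiMax$ for every $\svbx \in \cX$; and from \eqref{eq:bound_inner_product} we have $|\langle\langle \ThetaStar, \varPhi(\svbx)\rangle\rangle| \leq 2\br\bd$, hence $\exp(-\langle\langle \ThetaStar, \varPhi(\svbx)\rangle\rangle) \leq \exp(2\br\bd)$. Multiplying the two bounds yields $|\rvx_{uv}| \leq 2\phiMax\exp(2\br\bd)$, so $\rvx_{uv}$ is supported on an interval of width at most $4\phiMax\exp(2\br\bd)$. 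The third step applies the two-sided Hoeffding inequality to a single component: since the range has width $4\phiMax\exp(2\br\bd)$, we obtain
$$\Prob\Big(\Big|\tfrac{1}{n}\sum_{t=1}^{n} \rvx_{uv}^{(t)}\Big| > \epsFour\Big) \leq 2\exp\Big(-\frac{n\epsFour^2}{8\phiMax^2\exp(4\br\bd)}\Big),$$
where the $\exp(4\br\bd)$ appears because Hoeffding squares the range. A union bound over all $\dimOne\dimTwo$ pairs $(u,v)$ then bounds the probability that $\|\nabla \cL_{n}(\ThetaStar)\|_{\max} > \epsFour$ by $2\dimOne\dimTwo\exp(-n\epsFour^2/(8\phiMax^2\exp(4\br\bd)))$; setting this to be at most $\deltaFour$ and solving for $n$ reproduces the stated threshold $n > 8\phiMax^2\exp(4\br\bd)\epsFour^{-2}\log(2\dimOne\dimTwo/\deltaFour)$.

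There is no serious analytic obstacle here: the argument is a textbook Hoeffding-plus-union-bound concentration estimate, structurally identical to Lemma \ref{lemma:correlation_concentration}. The only points requiring care are (i) the mean-zero property, which I would isolate as the supporting Lemma \ref{lemma:zero_expectation} rather than re-derive inline, and (ii) bookkeeping of constants, namely tracking that the factor-of-two loss from centering the statistics and the $\exp(2\br\bd)$ bound on the exponential weight combine, after squaring the range in Hoeffding, into precisely the $\phiMax^2\exp(4\br\bd)$ prefactor of the sample complexity.
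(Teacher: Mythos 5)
Your proposal is correct and follows essentially the same route as the paper: express each gradient component at $\ThetaStar$ as a centered empirical average of the bounded random variable $\rvx_{uv}$ (with mean zero by Lemma \ref{lemma:zero_expectation} and range bounded via Assumption \ref{bounds_statistics_maximum} and \eqref{eq:bound_inner_product}), apply Hoeffding's inequality to each component, and take a union bound over the $\dimOne \dimTwo$ entries. The constant bookkeeping matches the paper's exactly.
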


\subsection{Supporting Lemma for Proposition \ref{prop:gradient-concentration-GISMe}}
\begin{lemma}\label{lemma:zero_expectation}
	For any $u \in [\dimOne]$ and $v \in [ \dimTwo]$, define the random variable
	\begin{align}
	\rvx_{uv}  = - \varPhi_{uv}(\svbx) \exp\big( -\big\langle \big\langle \ThetaStar, \varPhi(\svbx) \big\rangle \big\rangle \big). \label{xij_definition}
	\end{align}
	We have 
	\begin{align}
	\Expectation[\rvx_{uv}] = 0,
	\end{align}
	where the expectation is with respect to $\DensityXTrue$.
\end{lemma}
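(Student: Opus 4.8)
The plan is to evaluate the expectation directly as an integral against the true density $\DensityXTrue$ and to exploit the fact that the exponential weight $\exp(-\langle\langle \ThetaStar, \varPhi(\svbx)\rangle\rangle)$ exactly cancels the exponential appearing in $\DensityXTrue$, leaving a constant integrand. First I would write the expectation as $\Expectation[\rvx_{uv}] = -\int_{\svbx\in\cX}\varPhi_{uv}(\svbx)\exp(-\langle\langle \ThetaStar, \varPhi(\svbx)\rangle\rangle)\,\DensityXTrue\,d\svbx$, where the integral is over the bounded support $\cX$.

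The key observation is that, because $\Expectation_{\Uniform}[\Phi(\rvbx)]$ is a constant, the centered and uncentered statistics differ in the exponent only by a constant, so $\DensityXTrue \propto \exp(\langle\langle \ThetaStar, \Phi(\svbx)\rangle\rangle) \propto \exp(\langle\langle \ThetaStar, \varPhi(\svbx)\rangle\rangle)$. This is precisely the reduction used in step $(a)$ of the proof of Theorem \ref{theorem:GRISMe-KLD}. Writing the normalizer $\tilde Z \coloneqq \int_{\svby\in\cX}\exp(\langle\langle \ThetaStar, \varPhi(\svby)\rangle\rangle)\,d\svby$, this gives $\DensityXTrue = \exp(\langle\langle \ThetaStar, \varPhi(\svbx)\rangle\rangle)/\tilde Z$, and hence the product $\DensityXTrue\,\exp(-\langle\langle \ThetaStar, \varPhi(\svbx)\rangle\rangle) = 1/\tilde Z$ is a constant independent of $\svbx$.

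Consequently the expectation collapses to $\Expectation[\rvx_{uv}] = -(1/\tilde Z)\int_{\svbx\in\cX}\varPhi_{uv}(\svbx)\,d\svbx$. The final step invokes the centering in Definition \ref{def:css}: since $\Uniform$ is the uniform distribution on $\cX$ and $\Expectation_{\Uniform}[\varPhi(\rvbx)]=0$, the integral $\int_{\svbx\in\cX}\varPhi_{uv}(\svbx)\,d\svbx$ equals $\mathrm{Vol}(\cX)\,\Expectation_{\Uniform}[\varPhi_{uv}(\rvbx)] = 0$, yielding $\Expectation[\rvx_{uv}]=0$ for every $u\in[\dimOne]$ and $v\in[\dimTwo]$.

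The computation is essentially routine; the only thing to get right is the cancellation, which hinges on recognizing that $\DensityXTrue$ can be expressed with the centered statistic $\varPhi$ in the exponent. The hard part, such as it is, is bookkeeping rather than analysis: all integrals are over the bounded support $\cX$ and the integrands are bounded by Assumptions \ref{bounds_statistics} and \ref{bounds_statistics_maximum}, so finiteness and the interchange of integration are never in question.
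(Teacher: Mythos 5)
Your proof is correct and follows essentially the same route as the paper's: rewrite $\DensityXTruefun$ with the centered statistic in the exponent so that the exponential weight cancels against the density, leaving $-(1/\tilde Z)\int_{\svbx\in\cX}\varPhi_{uv}(\svbx)\,d\svbx$, which vanishes by the centering in Definition~\ref{def:css}. No gaps; the paper's one-line derivation packages the same cancellation into its step $(a)$.
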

\begin{proof}[Proof of Lemma \ref{lemma:zero_expectation}]
	Fix any $u \in [\dimOne]$ and $v \in [ \dimTwo]$. Using \eqref{xij_definition}, we have
	\begin{align}
	\Expectation[\rvx_{uv}] & = -\int_{\svbx \in \cX} \hspace{-2mm}\DensityXTrue \varPhi_{uv}(\svbx) \exp\big( -\big\langle \big\langle \ThetaStar, \varPhi(\svbx) \big\rangle \big\rangle \big) d\svbx \stackrel{(a)}{=} \frac{-\int_{\svbx \in \cX}  \varPhi_{uv}(\svbx)  d\svbx}{\int_{\svby \in \cX} \exp\big( \big\langle \big\langle \ThetaStar, \varPhi(\svby) \big\rangle \big\rangle \big) d\svby} \\
	& \stackrel{(b)}{=} 0,
	\end{align}
	where $(a)$ follows from the definition of $\DensityXTrue$ and because $\Expectation_{\Uniform} [\Phi(\rvbx)]$ is a constant, and $(b)$ follows because $\int_{\svbx \in \cX} \varPhi(\svbx)  d\svbx = 0$ from Definition \ref{def:css}.
\end{proof}

\subsection{Proof of Proposition \ref{prop:gradient-concentration-GISMe}}
\begin{proof}[Proof of Proposition \ref{prop:gradient-concentration-GISMe}]
	Fix $u \in [\dimOne]$ and $v \in [ \dimTwo]$. We start by simplifying the gradient of the $\cL_n(\Theta)$ evaluated at $\ThetaStar$. The component of the gradient of $\cL_{n}(\Theta)$ corresponding to $\Theta_{uv}$
	evaluated at $\ThetaStar$ is given by
	\begin{align}
	\frac{\partial \cL_{n}(\ThetaStar)}{\partial \Theta_{uv}}  = - \frac{1}{n} \sum_{t = 1}^{n} \varPhi_{uv}(\svbx^{(t)}) \exp\Big( -\Big\langle \Big\langle \ThetaStar, \varPhi(\svbx^{(t)}) \Big\rangle \Big\rangle \Big).  \label{eq:gradient_GISMe_ThetaStar}
	\end{align}
	Each term in the above summation is distributed as per the random variable $\rvx_{uv}$ (see \eqref{xij_definition}). The random variable $\rvx_{uv}$ has zero
	mean (see Lemma \ref{lemma:zero_expectation}) and satisfies $
	|\rvx_{uv}| \leq 2\phiMax  \exp(2\br\bd)$ (from Assumption \ref{bounds_statistics_maximum} and arguments similar to \eqref{eq:bound_inner_product}). Using the Hoeffding's inequality, we have
	\begin{align}
	\Prob \left( \bigg|\frac{\partial \cL_{n}(\ThetaStar)}{\partial \Theta_{uv}}\bigg| > \epsFour \right) < 2\exp \left(-\frac{n \epsFour^2}{8\phiMax^2 \exp(4\br\bd)}\right). \label{eq:Hoeffding}
	\end{align}
	The proof follows by using \eqref{eq:Hoeffding} and the union bound over all $u \in [\dimOne]$ and $v \in [ \dimTwo]$.
\end{proof}

\section{Proof of Theorem 4.3}
\label{appendix_proof_finite_sample}
	In this Appendix, we prove Theorem \ref{thm:finite_sample} by invoking  \citet[Corollary 1]{negahban2012unified}. In particular, we show that strong regularization and restricted strong convexity of the loss function enables us to show that the minimizer $\hThetan$ is close to the true parameter $\ThetaStar$ in Frobenius norm. We restate the Theorem below and then provide the proof.
	\theoremfinite*
	\begin{proof}[Proof of Theorem \ref{thm:finite_sample}]
		We start by recalling Corollary 1 of \cite{negahban2012unified}.
		
		\citet[Corollary 1]{negahban2012unified}: Let $z_1, \cdots, z_n$ be i.i.d. samples of a random variable $\rvz$. Let $q(\rvz ; \theta)$ be some convex and differentiable function of $\rvz$ parameterized by $\theta \in \Upsilon$. Define
		\begin{align}
		\hthetan \in \argmin_{\theta} \frac{1}{n} \sum_{i = 1}^{n} q(z_i ; \theta) + \lambda_n \cR(\theta), \label{eq:m-est-negah}
		\end{align} 
		where $\lambda_n$ is a regularization penalty and $\cR$ is a norm. Let $\theta^*$ be the true underlying parameter, i.e., $\theta^* \in \argmin_{\theta} \Expectation[q(z; \theta)]$. Let the following be true.
		\begin{enumerate}[leftmargin=6mm, itemsep=-0.5mm]
			\item[(a)] The regularization penalty is such that $\lambda_n \geq 2\cR^*(\nabla \cL_{n}(\ThetaStar))$ where $\cR^*$ is the dual norm of $\cR$ and
			\item[(b)] The loss function satisfies a restricted strong convexity condition with curvature $\kappa > 0$, i.e., $\delta \cL_{n}(\Delta, \theta^*) \geq \kappa \|\Delta\|^2_2$ where $\delta \cL_{n}(\Delta, \theta^*)$ is the residual of the first-order Taylor expansion.
		\end{enumerate}
		Then, $\hthetan$ is such that $\| \hthetan - \theta^*\|_2 \leq 3\frac{\lambda_n}{\kappa} \Psi(\Upsilon)$ where $\Psi(\Upsilon) = \max_{\svbv \in \Upsilon \setminus \{\boldsymbol{0}\}} \frac{\cR(\svbv)}{\|\svbv\|_2}$.\\
		
		We let $z \coloneqq \rvbx$, $\theta \coloneqq \Theta$, $\hthetan \coloneqq \hThetan$, $\theta^* \coloneqq \ThetaStar$, $\Upsilon = \parameterSet$, and $q(z; \theta) \coloneqq \exp\big( -\big\langle \big\langle \Theta, \varPhi(\svbx) \big\rangle \big\rangle \big)$. While Corollary 1 of \cite{negahban2012unified} is a deterministic result, we use probabilistic analysis to show that the neccessary conditions hold resulting in a high probability bound. Now, let the number of samples satisfy
		\begin{align}
		n & \geq \max  \bigg\{\frac{128 \phiMax^4 \gamma^4(\dimOne, \dimTwo) }{\lambdaMin^2}\log\Big(\frac{4 \dimOne^2  \dimTwo^2 }{\delta}\Big), \\
		& \qquad \qquad \frac{8 \phiMax^2  (24 + 96\br\bd)^2  \exp(8\br\bd ) g^2(\dimOne, \dimTwo) \Psi^2(\dimOne, \dimTwo)}{\alpha^2\lambdaMin^2}\log\Big(\frac{4\dimOne \dimTwo}{\delta}\Big)\bigg\}  \label{eq:sample_comp_dependence}\\
		& = \Omega\bigg(\frac{ \max\big\{  \gamma^4(\dimOne, \dimTwo), g^2(\dimOne, \dimTwo) \Psi^2(\dimOne, \dimTwo) \big\} }{\alpha^2 \lambdaMin^2}\log\Big(\frac{4\dimOne^2 \dimTwo^2}{\delta}\Big) \bigg).
		\end{align}

		To choose the regularization penalty, we bound $\cR^*(\nabla \cL_{n}(\ThetaStar))$. From \eqref{eq_defn_thm} in Section \ref{sec:main results}, we have $\cR^*(\nabla \cL_{n}(\ThetaStar)) \leq  g(\dimOne, \dimTwo) \| \nabla \cL_{n}(\ThetaStar) \|_{\max}$. Then, using Proposition \ref{prop:gradient-concentration-GISMe} with $\deltaFour \mapsfrom \frac{\delta}{2}$, we have the following with probability at least $1-\frac{\delta}{2}$,
		\begin{align}
		\cR^*(\nabla \cL_{n}(\ThetaStar)) \leq g(\dimOne, \dimTwo) \epsilon \qquad \text{whenever} \qquad n > \frac{8 \phiMax^2\exp(4\br\bd)}{\epsFour^2}\log\Big(\frac{4\dimOne \dimTwo}{\delta}\Big).
		\end{align}
		As a result, we choose $\lambda_n = 2g(\dimOne, \dimTwo) \epsilon$.
		
		Now, let $\Delta = \hThetan - \ThetaStar$. Then, \citet[Lemma 1]{negahban2012unified} implies that $\Delta$ is such that $\cR(\Delta) \leq 4 \cR(\ThetaStar)$, i.e., $\Delta \in 4 \parameterSet$. Then, using Proposition \ref{prop:rsc_GISMe} with $\deltaThree \mapsfrom \frac{\delta}{2}$, we have the following with probability at least $1-\frac{\delta}{2}$,
		\begin{align}
		\delta \cL_{n}(\Delta, \ThetaStar) \geq \frac{\lambdaMin \exp(-2\br\bd )}{4 + 16\br\bd} \|\Delta\|^2_{\mathrm{F}} \qquad \text{whenever} \qquad n > \frac{128 \phiMax^4 \gamma^4(\dimOne, \dimTwo) }{\lambdaMin^2}\log\Big(\frac{4 \dimOne^2  \dimTwo^2 }{\deltaThree}\Big).
		\end{align}

		Putting everything together, we have the following with proability at least $1-\delta$,
		\begin{align}
		\|\hThetan - \ThetaStar\|_{\mathrm{F}}  & \leq \frac{(24 + 96rd) \exp(2\br\bd ) g(\dimOne, \dimTwo)\epsilon}{\lambdaMin} \Psi(\dimOne, \dimTwo).
		\end{align}
		Choosing $ \epsFour = \frac{\alpha \lambdaMin}{(24 + 96rd) \exp(2\br\bd ) g(\dimOne, \dimTwo) \Psi(\dimOne, \dimTwo)}$ completes the proof.
	\end{proof}

\section{Examples}
\label{appendix:examples}
In this Appendix, we provide more discussion on the examples of natural parameters and statistics from Section \ref{subsec:examples}.

\subsection{Polynomial natural statistic}
\label{appendix:poly}
Suppose the natural statistics are polynomials of $\rvbx$ with maximum degree $l$, i.e., $\prod_{i \in [p]} x_i^{l_i}$ such that $l_i \in [l] \cup \{0\}$ for all $ i \in [p]$ and $\sum_{i \in [p]} l_i \leq l$ for some $l < p$. 
\begin{itemize}[leftmargin=6mm, itemsep=-0.5mm]
	 \item Let $\cX = [-b,b]^p$ for $b \in \Reals$. First, we show that $\phiMax = \max\{1,b^l\}$. We have
\begin{align}
\| \Phi(\svbx) \|_{\max}  = \max_{u \in [\dimOne], v \in [ \dimTwo]} |\Phi_{uv}(\svbx)|
\leq \max\{1,b^l\}.
\end{align}
	\item Suppose $\ThetaStar$ has a bounded maximum norm $\|\Theta\|_{\max} \leq r$. The dual norm of the matrix maximum norm is the matrix $L_{1,1}$ norm. Suppose $\cX = \cB_1(b) \subset [-b,b]^p$ for $b \in \Reals$.
	Then,	
	\begin{align}
	\cR^*(\Phi(\svbx)) = \| \Phi(\svbx) \|_{1,1} \leq \sum_{\substack{l_i \in [l] \cup {0}:\\ \sum_{i} l_i \leq l}} \prod_{i \in [p]} |x_i|^{l_i} \leq (1 + \|\svbx\|_{1})^l \stackrel{(a)}{\leq} (1 + b)^l, \label{eq_bounded_l11_ss}
	\end{align}
	where $(a)$ follows because $\svbx \in \cB_1(b)$.
	\item Suppose $\ThetaStar$ has bounded Frobenius norm $\|\Theta\|_{\mathrm{F}} \leq r$. The dual norm of the Frobenius norm is the Frobenius norm itself. Suppose $\cX = \cB_1(b) \subset [-b,b]^p$ for $b \in \Reals$. Then,
	\begin{align}
	\cR^*(\Phi(\svbx)) = \| \Phi(\svbx) \|_{\mathrm{F}} \stackrel{(a)}{\leq} \| \Phi(\svbx) \|_{1,1} \stackrel{(b)}{\leq} (1 + b)^l,
	\end{align}
	where $(a)$ follows because Frobenius norm is bounded by matrix $L_{1,1}$ norm and $(b)$ follows from \eqref{eq_bounded_l11_ss}.
	\item Suppose $\ThetaStar$ has a bounded nuclear norm $\|\Theta\|_{\star} \leq r$. The dual norm of the matrix nuclear norm is the matrix spectral norm. Suppose $l = 2$ and $\cX = \cB_1(b) \subset [-b,b]^p$ for $b \in \Reals$. 
	Then, we can write $\Phi(\svbx)  = \tilde{\rvx}\tilde{\rvx}^T$ where $\tilde{\rvx} = (1,\rvx_1,\cdots, \rvx_p)$. As a result, $\Phi(\svbx)$ is a rank-1 matrix and the spectral norm is equal to the sum of the diagonal entries. Then, we have
	\begin{align}
	\cR^*(\Phi(\svbx)) = \| \Phi(\svbx) \| \leq (1 + \|\svbx\|_2^2) \stackrel{(a)}{\leq} (1+b^2),  
	\end{align}
	where $(a)$ follows because $\svbx \in \cB_2(b)$.
\end{itemize}
\subsection{Trigonometric natural statistic}
\label{appendix:sinu}
Suppose the natural statistics are sines and cosines of $\rvbx$ with $l$ different frequencies, i.e., $\sin(\sum_{i \in [p]}l_ix_i)$ $\cup$ $\cos(\sum_{i \in [p]}l_ix_i)$ such that $l_i \in [l] \cup \{0\}$ for all $i \in [p]$.
\begin{itemize}[leftmargin=6mm, itemsep=-0.5mm]
	\item Let $\cX \subset \Reals^{p}$. First, we show that $\phiMax = 1$. For any $\svbx \in \cX$, we have
\begin{align}
\| \Phi(\svbx) \|_{\max}  = \max_{u \in [\dimOne], v \in [ \dimTwo]} |\Phi_{uv}(\svbx)|
 \leq 1.
\end{align}
\item Suppose $\ThetaStar$ has bounded $L_{1,1}$ norm, i.e., $\|\Theta\|_{1,1} \leq r_1$. The dual norm of the matrix $L_{1,1}$ norm is the matrix maximum norm. Then, for any $\svbx \in \cX \subset \Reals^{p}$,
\begin{align}
\cR^*(\Phi(\svbx)) = \| \Phi(\svbx) \|_{\max} \leq \phiMax = 1.
\end{align}
\end{itemize}
\subsection{Combinations of polynomial and trigonometric statistics}
Suppose the natural statistics are combinations of polynomials of $\rvbx$ with maximum degree $l$, i.e., $\prod_{i \in [p]} x_i^{l_i}$ such that $l_i \in [l] \cup \{0\}$ for all $ i \in [p]$ and $\sum_{i \in [p]} l_i \leq l$ for some $l < p$
 as well as  sines and cosines of $\rvbx$ with $\tilde{l}$ different frequencies, i.e., $\sin(\sum_{i \in [p]}\tilde{l}_ix_i)$ $\cup$ $\cos(\sum_{i \in [p]}\tilde{l}_ix_i)$ such that $\tilde{l}_i \in [\tilde{l}] \cup \{0\}$ for all $i \in [p]$.
\begin{itemize}[leftmargin=6mm, itemsep=-0.5mm]
	\item Let $\cX = [-b,b]^p$ for $b \in \Reals$. From Appendix \ref{appendix:poly} and Appendix \ref{appendix:sinu}, it is easy to verify that $\phiMax = \max\{1,b^l\}$. 
	\item Now, suppose $\ThetaStar$ has bounded $L_{1,1}$ norm, i.e., $\|\Theta\|_{1,1} \leq r_1$. The dual norm of the matrix $L_{1,1}$ norm is the matrix maximum norm. Then, for any $\svbx \in \cX = [-b,b]^p$,
	\begin{align}
	\cR^*(\Phi(\svbx)) = \| \Phi(\svbx) \|_{\max} \leq \phiMax = \max\{1,b^l\}.
	\end{align}
\end{itemize}

\section{Bound on $g(\dimOne, \dimTwo)$}
\label{appendix:dual_norm}
In this Appendix, we provide a bound on $g(\dimOne, \dimTwo)$ defined in \eqref{eq_defn_thm} in Section \ref{sec:main results} for the entry-wise $L_{p,q}$ norms $(p,q \geq 1)$, the Schatten $p$-norms $(p \geq 1)$, and the operator $p$-norms $(p \geq 1)$.
\subsection{The entry-wise $L_{p,q}$ norms}
Let $\widetilde{\cR}(\cdot)$ denote the entry-wise $L_{p,q}$ norm for some $p,q \geq 1$. We show that for any matrix $\bM \in \Reals^{\dimOne\times \dimTwo}$, $g(\dimOne, \dimTwo) = \dimOne^{\frac{1}{p}}  \dimTwo^{\frac{1}{q}}$, i.e., 
\begin{align}
\widetilde{\cR}(\bM) \leq \|\bM\|_{\max} \times \dimOne^{\frac{1}{p}}  \dimTwo^{\frac{1}{q}}.
\end{align}
 By the definition of the entry-wise $L_{p,q}$ norm, we have
\begin{align}
\widetilde{\cR}(\bM) = \bigg(\sum_{j \in  [\dimTwo]} \bigg(\sum_{i \in [\dimOne]} | M_{ij} |^p \bigg)^{\frac{q}{p}} \bigg)^{\frac{1}{q}} & \leq \bigg(\sum_{j \in  [\dimTwo]} \bigg(\sum_{i \in [\dimOne]} \|\bM\|_{\max}^p \bigg)^{\frac{q}{p}} \bigg)^{\frac{1}{q}}  = \dimOne^{\frac{1}{p}}  \dimTwo^{\frac{1}{q}} \|\bM\|_{\max}.
\end{align}

\subsection{The Schatten $p$-norms}
Let $\widetilde{\cR}(\cdot)$ denote the Schatten $p$-norm for some $p \geq 1$. We show that for any matrix $\bM \in \Reals^{\dimOne\times \dimTwo}$, $g(\dimOne, \dimTwo) =\sqrt{\min\{\dimOne, \dimTwo\}\dimOne \dimTwo}$, i.e., 
\begin{align}
\widetilde{\cR}(\bM) \leq \|\bM\|_{\max} \times\sqrt{\min\{\dimOne, \dimTwo\}\dimOne \dimTwo}.
\end{align}
Let the rank of $\bM$ be denoted by $r$ and the singular values of $\bM$ be denoted by $\sigma_i(\bM)$ for $i \in [r]$.
By the definition of the Schatten $p$-norm, we have
\begin{align}
\widetilde{\cR}(\bM) = \bigg(\sum_{i \in [r]} \sigma_i^p(\bM) \bigg)^{\frac{1}{p}} \stackrel{(a)}{\leq} \sum_{i \in [r]} \sigma_i(\bM) & \stackrel{(b)}{\leq} \sqrt{r\dimOne \dimTwo} \|\bM\|_{\max} \stackrel{(c)}{\leq} \sqrt{\min\{\dimOne, \dimTwo\}\dimOne \dimTwo} \|\bM\|_{\max}, 
\end{align}
where $(a)$ follows because of the monotonicity of the Schatten $p$-norms, $(b)$ follows because $\|\bM\|_{\star} \leq \sqrt{r\dimOne \dimTwo} \|\bM\|_{\max}$, and $(c)$ follows because $r \leq \min\{\dimOne, \dimTwo\}$.

\subsection{The operator $p$-norms}
Let $\widetilde{\cR}(\cdot)$ denote the operator $p$-norm for some $p \geq 1$. We show that for any matrix $\bM \in \Reals^{\dimOne\times \dimTwo}$, $g(\dimOne, \dimTwo) =\dimOne^{\frac{1}{p}}  \dimTwo^{1-\frac{1}{p}}$, i.e., 
\begin{align}
\widetilde{\cR}(\bM) \leq \|\bM\|_{\max} \times \dimOne^{\frac{1}{p}}  \dimTwo^{1-\frac{1}{p}}.
\end{align}
Let $q = \frac{p}{p-1}$. For $i \in \dimOne$, let $[\bM]_{i}$ denote the $i^{th}$ row of $\bM$. By the definition of the operator $p$-norm, we have
\begin{align}
\widetilde{\cR}(\bM) = \max_{\svby : \| \svby\|_p = 1} \|\bM \svby \|_{p} \stackrel{(a)}{\leq} \dimOne^{\frac{1}{p}} \max_{\svby : \| \svby\|_p = 1}  \|\bM \svby \|_{\infty} & \stackrel{(b)}{\leq} \dimOne^{\frac{1}{p}} \max_{\svby : \| \svby\|_p = 1}  \max_{i \in [\dimOne]} \|[\bM]_{i}\|_{q} \|\svby\|_{p} \\
& \leq \dimOne^{\frac{1}{p}}   \max_{i \in [\dimOne]} \|[\bM]_{i}\|_{q} \\
& \stackrel{(c)}{\leq} \dimOne^{\frac{1}{p}}  \dimTwo^{\frac{1}{q}}  \max_{i \in [\dimOne]} \|[\bM]_{i}\|_{\infty} = \dimOne^{\frac{1}{p}}  \dimTwo^{1-\frac{1}{p}} \|\bM\|_{\max},
\end{align}
where $(a)$ follows because $\|\svbv\|_p \leq m^{\frac{1}{p}} \|\svbv\|_{\infty}$ for any vector $\svbv \in \Reals^{m}$ and $p \geq  1$, $(b)$ follows from the definition of the infinity norm of a vector and using the H\"{o}lder's inequality, and $(c)$ follows because $\|\svbv\|_q \leq m^{\frac{1}{q}} \|\svbv\|_{\infty}$ for any vector $\svbv \in \Reals^{m}$ and $q \geq  1$.

\end{document}